\documentclass{article}

\usepackage{hyperref}

\usepackage[accepted]{icml2026}

\usepackage{amsmath}
\usepackage{amssymb}
\usepackage{mathtools}
\usepackage{amsthm}

\usepackage{bm}

\usepackage{url}%
\usepackage{booktabs}%
\usepackage{amsfonts}%
\usepackage{nicefrac}%
\usepackage{microtype}%
\usepackage[table]{xcolor}%
\usepackage{array}
\usepackage{graphicx}
\usepackage{makecell}
\usepackage{multirow}

\usepackage[capitalize,noabbrev]{cleveref}

\theoremstyle{plain}
\newtheorem{theorem}{Theorem}[section]
\newtheorem{proposition}[theorem]{Proposition}
\newtheorem{lemma}[theorem]{Lemma}

\theoremstyle{definition}

\theoremstyle{remark}

\newcommand{\Eqref}[1]{Equation~\textup{\ref{#1}}}
\renewcommand{\eqref}[1]{equation~\textup{\ref{#1}}}

\icmltitlerunning{Gradient Preconditioning for Efficient and Reliable Reward-Guided Generation}

\begin{document}

\twocolumn[
    \icmltitle{Gradient Preconditioning for Efficient and Reliable Reward-Guided Generation}

  \icmlsetsymbol{equal}{*}

  \begin{icmlauthorlist}
    \icmlauthor{Jisung Hwang}{yyy}
    \icmlauthor{Minhyuk Sung}{yyy}
  \end{icmlauthorlist}

  \icmlaffiliation{yyy}{KAIST, Daejeon, Republic of Korea}

  \icmlcorrespondingauthor{Minhyuk Sung}{mhsung@kaist.ac.kr}

  \icmlkeywords{Generative model, Test-time optimization, Reward-guided generation, White Gaussian noise}

  \vskip 0.3in
]

\printAffiliationsAndNotice{}%

\begin{abstract}
We propose a gradient preconditioning method that makes reward-guided generation with one-step generative models both efficient and reliable. Test-time noise optimization can unlock substantially better reward-guided generations from pretrained generative models, but it is prone to reward hacking that degrades quality and is often too slow for practical use. We precondition reward gradients by projecting them onto a carefully designed white Gaussian noise feasible set, a compact spectral set with blockwise norm constraints that tightly captures the statistics and spatial uncorrelatedness of white Gaussian noise. This preconditioning reshapes each gradient update into a noise-aligned direction, driving faster and more effective reward ascent while preventing reward hacking. The projection is closed-form and matches the $\mathcal{O}(N \log N)$ complexity of FFT, adding negligible overhead in practice. In experiments on FLUX with four reward models, our approach reaches a comparable Aesthetic Score using only 30\% of the wall-clock time required by the state-of-the-art regularization-based method.
\end{abstract}

\section{Introduction}
\label{sec:introduction}
\vspace{-0.5\baselineskip}

Deep generative models for high-dimensional continuous data typically rely on a simple but powerful design choice: sampling a latent vector from the \emph{standard Gaussian distribution}. In practice, this ``white Gaussian noise'' prior is not only a mathematical convenience but also a key ingredient that supports stable and realistic generation.
Recently, many applications have moved beyond pure generation to reward-guided sampling with a task-specific reward~\cite{min2025origen, ben2024d, prno2024tuning}. In particular, distillation techniques for diffusion and flow models, such as shortcut and consistency models~\cite{flux2024, sauer2024sdxl-turbo, fransone}, enable \emph{one-step} generation, which makes it easy to directly optimize the latent vector at test time~\cite{eyring2024reno, hwang2025moment, kim2025inference} to maximize the reward.
This \emph{test-time latent optimization} is appealing because it can effectively exploit the capacity of a pretrained model without any retraining, while leveraging the fast generation speed of a one-step model for test-time search.

Despite its promise, test-time optimization with one-step generative models has two major practical limitations. First, it is \emph{unreliable}: as optimization progresses, the latent vector can drift away from white Gaussian noise, and the model may produce unrealistic artifacts, a phenomenon commonly referred to as \emph{reward hacking}. Second, it is \emph{slow}: existing methods typically require multiple gradient steps per sample. Thus, even with one-step models, producing a single output can take minutes. Making test-time optimization both efficient and reliable is therefore crucial for practical reward-guided generation.

Existing approaches attempt to address these issues by adding \emph{regularization} terms that encourage Gaussian-related properties during optimization~\citep{eyring2024reno, prno2024tuning, hwang2025moment}. While effective in many cases, regularization is inherently a \emph{soft} mechanism. It provides no guarantee that the latent vector remains noise-like, and it introduces additional weights that must be tuned. Moreover, when the optimizer discovers shortcuts, soft penalties can be insufficient to prevent reward hacking.

We propose a simple and practical alternative: \textbf{gradient preconditioning} via a carefully designed white Gaussian noise feasible set. We construct a compact spectral feasible set that tightly characterizes white Gaussian noise, enforcing blockwise norm constraints on DFT coefficients to capture both the marginal Gaussian statistics and the spatial uncorrelatedness of white noise. At each optimization step, the reward gradient is projected onto this feasible set, reshaping each update into a noise-aligned direction. This preconditioning drives faster and more effective reward ascent while preventing reward hacking, without introducing any additional hyperparameters. Crucially, the projection is closed-form with $\mathcal{O}(N \log N)$ complexity, comparable to widely used algorithms such as sorting or FFT, and in practice accounts for only 0.04\% of the runtime in our experiments with the FLUX model.

Empirically, we validate our method on test-time latent optimization for one-step text-to-image models, following the standard setup used in prior work such as ReNO~\citep{eyring2024reno} and MPGR~\citep{hwang2025moment}. Across multiple reward models, our gradient preconditioning achieves higher target rewards while preserving held-out human-preference metrics (Aesthetic Score, PickScore, HPSv2, ImageReward), yielding stable and realistic samples without reward hacking. Notably, on FLUX~\citep{flux2024} for Aesthetic Score optimization, our approach reaches a comparable reward level using only \textbf{30\% of the wall-clock time} required by the state-of-the-art regularization-based method MPGR. By resolving both the efficiency bottleneck and reward hacking, we make test-time latent optimization substantially more practical.

\section{Related Work}

Reward-guided generation is commonly approached in two complementary ways. One direction \emph{fine-tunes} or \emph{trains} a model~\citep{black2023ddpo,clark2024draft,wallace2024diffusiondpo} for a specific reward, but this typically requires substantial compute, from tens~\citep{eyring2025noise} to hundreds of GPU hours~\citep{liu2025flow}, and must be repeated per reward. The other direction performs \emph{test-time optimization}~\citep{karunratanakul2024dno_motion,zhao2025dartcontrol,min2025origen,kim2025inference} without retraining, which is lighter-weight and can be applied to different rewards at no additional training cost. In practice, these two directions are orthogonal; a fine-tuned model can still benefit from test-time optimization for further improvement~\citep{liu2025improving}.

The practicality of test-time optimization depends on the generative model. For \emph{multi-step} diffusion or flow models, reward evaluation at intermediate states is either unreliable when relying on Tweedie's estimate or very expensive when performing all the denoising steps to obtain a clean sample~\citep{ben2024d,oshima2025inferencetime}. In contrast, \emph{one-step} generative models enable efficient latent optimization since each reward evaluation can be done with a single function evaluation and uses a cleaner output~\citep{eyring2024reno,hwang2025moment}. We therefore focus on one-step models in our experiments.

In one-step models, latent optimization can be performed directly in the noise prior, which offers the widest search space. Preserving white Gaussian noise characteristics is therefore crucial for stable and realistic generation. In the following, we review prior regularization methods that motivate our white Gaussian noise feasible set design.

\vspace{-0.5\baselineskip}
\paragraph{Regularizing Marginal Distribution.}
Regularization strategies in machine learning are often designed to enforce statistical properties on model weights during optimization, such as through KL divergence~\citep{kingma2014} or kurtosis~\citep{shkolnik2020robust}, but these regularization terms can also be incorporated into test-time latent optimization.
For reward-guided generation, prior works~\citep{norm1-2023,ben2024d,eyring2024reno} regularize the $\ell_2$ norm of the latent vector to encourage alignment with the statistics of the Gaussian distribution. However, these marginal-based approaches treat the latent as an unordered set and therefore fail to regularize \emph{spatial correlation}, which is essential for preserving the property of white Gaussian noise.

\vspace{-0.5\baselineskip}
\paragraph{Regularizing Both Marginal and Spatial Structure.}
Structure-aware methods regularize both the marginal distribution and spatial correlation by treating the latent vector as an ordered sequence. For example, PRNO~\citep{prno2024tuning} penalizes the mean and covariance of latent subvectors, promoting spatial decorrelation and Gaussian statistics. Building on the spectral characterization of white noise via flat power spectra~\citep{khintchine1934,oppenheim1999,stoica2005}, MPGR~\citep{hwang2025moment} proposes a spectral regularizer that penalizes deviations in blockwise $\ell_1$ norms of Fourier coefficients from Gaussian statistics. This spectral domain regularization more effectively reduces spatial correlations and preserves white Gaussian noise characteristics. We therefore characterize white Gaussian noise in the spectral domain. In contrast to these regularization-based approaches, we design a white Gaussian noise feasible set that enables efficient gradient preconditioning.

\section{Gradient Preconditioning for Reward-Guided Generation}
\label{sec:reward_guided_generation}
\vspace{-0.5\baselineskip}
\noindent
Given a generative model $\mathcal{M} : \mathbb{R}^N \to \mathbb{F}$ that maps a latent vector to the data space $\mathbb{F}$, and a reward function $r : \mathbb{F} \to \mathbb{R}$ that scores generated outputs, the goal is to find $\bm{x} \in \mathbb{R}^{N}$ that maximizes $r(\mathcal{M}(\bm{x}))$. Starting from white Gaussian noise, unconstrained gradient ascent causes the latent to drift away from noise-like characteristics, leading to reward hacking and unrealistic outputs. To prevent this, a common approach adds a regularization term $\mathcal{L}_{\text{reg}}$ that encourages $\bm{x}$ to retain certain properties of white Gaussian noise:
\begin{equation}
\max_{\bm{x} \in \mathbb{R}^{N}} \ r(\mathcal{M}(\bm{x})) - \lambda \mathcal{L}_{\text{reg}}(\bm{x}),
\end{equation}
where $\lambda$ is a hyperparameter. For example, $\mathcal{L}_{\text{reg}}$ may penalize $\ell_2$ norm deviations~\citep{norm1-2023}, or blockwise $\ell_1$ norm deviations in the spectral domain~\citep{hwang2025moment}, relative to Gaussian statistics.

In practice, the optimization is performed via gradient ascent due to the complexity of the generative models. However, it is unclear how effectively $\mathcal{L}_{\text{reg}}$ is minimized during optimization. As a result, regularization may fail to enforce the characteristics of white Gaussian noise, or conversely, may hinder reward maximization.

We propose a different approach: rather than penalizing deviations from white Gaussian noise, we \textbf{precondition the reward gradient} by projecting it onto a white Gaussian noise feasible set $\mathcal{G} \subset \mathbb{R}^N$ before applying the update (Algorithm~\ref{alg:proj_guided}).

\begin{algorithm}[t]
\caption{Gradient-Preconditioned Optimization}
\label{alg:proj_guided}
\begin{algorithmic}
\STATE \textbf{Iterate:}
\STATE $\qquad J \gets r(\mathcal{M}(\bm{x}))$
\STATE $\qquad \bm{x} \gets \mathrm{Adam}\bigl(\bm{x},\, \mathrm{Proj}_\mathcal{G}(\nabla_{\bm{x}} J)\bigr)$
\end{algorithmic}
\end{algorithm}

At each step, the reward gradient $\nabla_{\bm{x}} J$ is projected onto $\mathcal{G}$, and the result is used as the update direction. This differs from both regularization and constrained optimization: no additional hyperparameter $\lambda$ is needed, and the latent $\bm{x}$ itself is not required to lie in $\mathcal{G}$. By steering each update into a noise-aligned direction, gradient preconditioning prevents the optimization from drifting toward reward-hacked solutions, while focusing updates on directions that increase the reward within the noise-compatible subspace to enable faster and more effective reward ascent.

The effectiveness of this approach depends critically on the design of $\mathcal{G}$. Two properties are essential: \textbf{(i) efficient closed-form projection}, since the projection is applied at every iteration and must add negligible overhead, and \textbf{(ii) tight characterization of white Gaussian noise}, so that projected gradients are genuinely noise-aligned. Simply defining $\mathcal{G}$ as the minimizer set of an existing regularization term $\mathcal{L}_{\text{reg}}$ is generally insufficient, as such sets rarely admit closed-form projections. In the following section, we design a white Gaussian noise feasible set that satisfies both criteria.

\section{White Gaussian Noise Constraints}
\label{sec:white_gaussian_noise_constraints}
\vspace{-0.5\baselineskip}
In this section, we define and analyze constraints that tightly characterize white Gaussian noise while still admitting an efficient closed-form projection. We build on a prior work that regularizes in the spectral domain and extend it into a gradient preconditioning framework with a closed-form projection operator.

\paragraph{Notation.}
Let $N = 2PB$ be the dimension of a latent vector $\bm{x} \in \mathbb{R}^{N}$, where $B$ denotes the block size. The $p$-th block subvector is defined as $\bm{x}^{(p)} = [x_{pB},\, \dots,\, x_{pB + B - 1}]^\top$.
Let $\bm{F} \in \mathbb{C}^{N \times N}$ be the unitary Discrete Fourier Transform (DFT) matrix, and define the DFT coefficients by
$\hat{\bm{x}} = \bm{F} \bm{x}$, which can be efficiently computed using the Fast Fourier Transform (FFT) in $\mathcal{O}(N \log N)$ time.

Our starting point is the regularization term proposed by MPGR~\citep{hwang2025moment}, which operates in the spectral domain and encourages each block of DFT coefficients to match the statistics of a complex Gaussian distribution. Concretely, they introduce a blockwise $\ell_1$ penalty
\begin{equation}
\label{eq:hwang_et_al}
    \mathcal{L}_{\text{power}}(\bm{x}) = \frac{1}{N} \sum_{p=0}^{2P - 1} \left| \big\| \hat{\bm{x}}^{(p)} \big\|_1 - \mu B \right|,
\end{equation}
where $\mu = 0.875$ (close to $\tfrac{\sqrt{\pi}}{2} \approx 0.886$). This design is based on two observations: (i) for Gaussian latent vectors, not only the marginal distribution but also the spatial correlation is important, and (ii) these correlations are handled in the spectral domain, where they appear in the DFT magnitudes. By regularizing blockwise statistics of $\hat{\bm{x}}$, $\mathcal{L}_{\text{power}}$ captures part of this structure.

However, directly turning $\mathcal{L}_{\text{power}}$ into a hard constraint, i.e., enforcing $\mathcal{L}_{\text{power}}(\bm{x}) = 0$, leads to a feasible set for which no efficient closed-form projection is known. The main difficulty stems from the Hermitian symmetry of the DFT coefficients: different blocks of $\hat{\bm{x}}$ are correlated, and some coefficients are real-valued while others are complex-valued. This symmetry makes it challenging to derive an efficient closed-form projection algorithm.

To overcome this obstacle, we first define a \emph{compact spectral domain} that removes the Hermitian redundancy while preserving all independent degrees of freedom (Section~\ref{subsec:bijective_mapping}). This is achieved via a bijective mapping that aggregates the independent DFT coefficients into a complex-valued vector.

On this compact spectral domain, we then define blockwise constraints that follow the spirit of $\mathcal{L}_{\text{power}}$ but are formulated directly in terms of Gaussian statistics (Section~\ref{subsec:modeling_constraints}). Using the relation between the spatial and compact spectral domains, we derive an efficient projection algorithm onto the feasible set (Section~\ref{subsec:projection_onto_feasible_set}). We then relate our constraints to existing regularization-based approaches and show that they provide a tighter characterization of white Gaussian noise (Section~\ref{subsec:connection_with_prior_works}). Finally, we offer complementary perspectives that connect our constraints to the classical notion of white noise and to the reduction of autocorrelation (Section~\ref{subsec:interpretations}).

We refer to our constraints as the \textbf{white Gaussian noise constraints}, as they enforce the properties of the Gaussian distribution while ensuring that no frequency component dominates, consistent with the notion of white noise.

\subsection{Bijective Mapping to Compact Spectral Domain}
\label{subsec:bijective_mapping}
\vspace{-0.5\baselineskip}
Our first step is to remove the Hermitian redundancy that arises in the DFT of real-valued latent vectors. This redundancy couples different frequency components and complicates the derivation of a closed-form projection. To address this, we construct a compact spectral domain that contains only the independent DFT coefficients while preserving all information in the spatial domain.

Recall that $\hat{\bm{x}} = \bm{F} \bm{x}$ denotes the DFT of $\bm{x} \in \mathbb{R}^N$ with even $N$. Then, the DFT coefficients $\hat{\bm{x}}$ satisfy Hermitian symmetry (Lemma~\ref{lem:dft_properties}):
\begin{equation}
    \hat{x}_0,\, \hat{x}_{N/2} \in \mathbb{R}
    \quad\text{and}\quad
    \hat{x}_k = \overline{\hat{x}_{N-k}}
\end{equation}
for $k = 1, \dots, N/2 - 1$. Thus, only $N/2$ complex degrees of freedom are independent: $\hat{x}_0$ and $\hat{x}_{N/2}$, and the first half of the nontrivial frequency coefficients $\hat{x}_1, \dots, \hat{x}_{N/2 - 1}$. The remaining entries are fully determined by conjugate symmetry.
This redundancy complicates the direct formulation of constraints on $\hat{\bm{x}}$, particularly when deriving closed-form projections, since one must handle these dependencies.

To resolve this, we define the mapping $\mathcal{F} : \mathbb{R}^N \to \mathbb{C}^{N/2}$ as:
\begin{equation}
\label{eq:CN_map_def}
    \bm{y} = \mathcal{F}(\bm{x}) \quad\Longleftrightarrow\quad
    y_0 = \tfrac{\hat{x}_0}{\sqrt{2}} + \tfrac{\hat{x}_{N/2}}{\sqrt{2}} \,i, \quad
    y_k = \hat{x}_k
\end{equation}
for $k = 1, \dots, N/2 - 1$. This mapping eliminates the redundancy in $\hat{\bm{x}}$ while preserving all independent components. Specifically, it selects the first half of the complex-valued DFT coefficients and combines the two real-valued terms, $\hat{x}_0$ and $\hat{x}_{N/2}$, into a single complex number. Intuitively, $\mathcal{F}$ yields a compact spectral domain representation with no internal redundancy.

Importantly, this mapping satisfies the following properties:

\begin{proposition}
\label{thm:F_ind}
The mapping $\mathcal{F}$ is a bijection from $\mathbb{R}^N$ to $\mathbb{C}^{N/2}$. Moreover, if $\bm{z} \sim \mathcal{CN}(\bm{0}, \bm{I}_{N/2})$, then $\mathcal{F}^{-1}(\bm{z}) \sim \mathcal{N}(\bm{0}, \bm{I}_N)$.
\end{proposition}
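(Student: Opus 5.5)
The plan is to identify $\mathbb{C}^{N/2}$ with $\mathbb{R}^N$ via real and imaginary parts, and to show that under this identification $\mathcal{F}$ is a real-linear orthogonal (norm-preserving) map $\mathbb{R}^N \to \mathbb{R}^N$. Both claims then follow at once. An orthogonal linear map is bijective. And if $\bm{x} \sim \mathcal{N}(\bm{0},\bm{I}_N)$, then $\bm{Q}\bm{x} \sim \mathcal{N}(\bm{0},\bm{Q}\bm{Q}^\top)=\mathcal{N}(\bm{0},\bm{I}_N)$ for any orthogonal $\bm{Q}$. We therefore need to check that the real vector $(\mathrm{Re}\,\bm{y},\mathrm{Im}\,\bm{y})$ has covariance $\tfrac12 \bm{I}_N$ when $\bm{y}\sim\mathcal{CN}(\bm{0},\bm{I}_{N/2})$, and match normalizations.

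\textbf{Step 1 (linearity).} $\mathcal{F}$ is real-linear, since $\bm{x}\mapsto\hat{\bm{x}}$ is linear. Selecting coordinates and forming $\hat{x}_0/\sqrt2 + i\,\hat{x}_{N/2}/\sqrt2$ are real-linear operations, because $\hat{x}_0$ and $\hat{x}_{N/2}$ are real by Lemma~\ref{lem:dft_properties}.

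\textbf{Step 2 (norm identity).} Parseval for the unitary $\bm{F}$ gives $\|\bm{x}\|_2^2=\|\hat{\bm{x}}\|_2^2=\hat{x}_0^2+\hat{x}_{N/2}^2+2\sum_{k=1}^{N/2-1}|\hat{x}_k|^2$, where we used the Hermitian symmetry $|\hat{x}_{N-k}|=|\hat{x}_k|$. On the other side, $\|\bm{y}\|_2^2=\tfrac12(\hat{x}_0^2+\hat{x}_{N/2}^2)+\sum_{k=1}^{N/2-1}|\hat{x}_k|^2$. Hence $\|\mathcal{F}(\bm{x})\|_2^2=\tfrac12\|\bm{x}\|_2^2$, so $\sqrt2\,\mathcal{F}$ is a real-linear isometry $\mathbb{R}^N\to\mathbb{R}^N$, i.e. an orthogonal matrix $\bm{Q}$.

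\textbf{Step 3 (bijectivity).} An isometry is injective, and an injective linear map between spaces of equal real dimension $N$ is bijective. One can also write the inverse explicitly: recover $\hat{\bm{x}}$ from $\bm{y}$ via $\hat{x}_0=\sqrt2\,\mathrm{Re}\,y_0$, $\hat{x}_{N/2}=\sqrt2\,\mathrm{Im}\,y_0$ and conjugate symmetry, and then apply $\bm{F}^{-1}$. The result is real because the reconstructed $\hat{\bm{x}}$ is Hermitian-symmetric.

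\textbf{Step 4 (distribution).} If $\bm{z}\sim\mathcal{CN}(\bm{0},\bm{I}_{N/2})$, its real representation is $\mathcal{N}(\bm{0},\tfrac12\bm{I}_N)$. Then $\mathcal{F}^{-1}(\bm{z})=\bm{Q}^{-1}(\sqrt2\,\bm{z})=\bm{Q}^\top(\sqrt2\,\bm{z})$, and $\sqrt2\,\bm{z}\sim\mathcal{N}(\bm{0},\bm{I}_N)$. Applying the orthogonal $\bm{Q}^\top$ preserves this law, so $\mathcal{F}^{-1}(\bm{z})\sim\mathcal{N}(\bm{0},\bm{I}_N)$.

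The main obstacle is the bookkeeping in Step 2. We must be careful that the real-valued coefficients $\hat{x}_0$ and $\hat{x}_{N/2}$ each appear only once in Parseval, while the paired frequencies appear twice. The $1/\sqrt2$ factors in the definition of $y_0$ are exactly what makes the scaling uniform across coordinates.
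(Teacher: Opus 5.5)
Your proof is correct, but it is organized differently from the paper's. The paper proves bijectivity by writing the inverse explicitly: a Hermitian-symmetric extension of $\bm{y}$ followed by $\bm{F}^{\dagger}$. It proves the distributional claim by a direct second-moment computation in the spectral domain. It builds $\hat{\bm{x}}$ from $\bm{z}$ and checks entry by entry that $\mathbb{E}[\hat{\bm{x}}\hat{\bm{x}}^{\dagger}]=\bm{I}_N$. The conjugate-pair cross terms vanish because $\mathbb{E}[z_k^2]=0$, and the DC/Nyquist terms work out because $\Re z_0,\Im z_0$ are independent with variance $\tfrac12$. It then uses unitarity of $\bm{F}$ to get $\mathrm{Cov}(\bm{x})=\bm{I}_N$, and joint Gaussianity to conclude independence. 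In the paper, the norm identity $\|\mathcal{F}(\bm{x})\|_2^2=\tfrac12\|\bm{x}\|_2^2$ only appears afterward, as Proposition~\ref{thm:F_ind2}.

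You prove that identity first and let it do all the work. Since $\sqrt2\,\mathcal{F}$ is an orthogonal map of $\mathbb{R}^N$, bijectivity reduces to a dimension count, and Gaussian preservation reduces to rotation invariance of $\mathcal{N}(\bm{0},\bm{I}_N)$. Your route is shorter and avoids the conjugate-pair bookkeeping. It yields Proposition~\ref{thm:F_ind2} along the way and shows directly why the $1/\sqrt2$ in $y_0$ is needed. The circularity of $\bm{z}$ is absorbed into the single fact that its real representation is $\mathcal{N}(\bm{0},\tfrac12\bm{I}_N)$. The paper's route makes explicit which properties of $\bm{z}$ are used (zero pseudo-covariance, equal real and imaginary variances), at the cost of more hand computation.

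One small correction concerns Step 1. The reality of $\hat{x}_0,\hat{x}_{N/2}$ is not what makes $y_0$ real-linear in $\bm{x}$, since any complex combination of DFT coefficients is real-linear. Reality is actually needed in Step 2, so that $|y_0|^2=\tfrac12(\hat{x}_0^2+\hat{x}_{N/2}^2)$ holds with no cross term.
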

\begin{proof}
See Appendix~\ref{app:proof_thm1}.
\end{proof}

By Proposition~\ref{thm:F_ind}, enforcing that $\bm{y} = \mathcal{F}(\bm{x})$ follows the distribution $\mathcal{CN}(\bm{0}, \bm{I}_{N/2})$ is equivalent to enforcing that $\bm{x}$ follows $\mathcal{N}(\bm{0}, \bm{I}_N)$. Leveraging this property, we define the spectral domain constraints in the next subsection.

\subsection{Modeling White Gaussian Noise Constraints}
\label{subsec:modeling_constraints}
\vspace{-0.5\baselineskip}
On the compact spectral domain defined by $\mathcal{F}$, we now specify the constraints that define our feasible set. The design follows the \emph{blockwise} spectral regularization $\mathcal{L}_{\text{power}}$ (\Eqref{eq:hwang_et_al}): instead of acting on the entire spectrum at once, we partition the DFT coefficients into local blocks of size $B$ and constrain their aggregate statistics.

Compared to imposing only \emph{global} constraints (e.g., matching the total $\ell_1$ and $\ell_2$ norms of the entire vector), the blockwise constraints define a strictly smaller feasible set. Global constraints fix only two scalar quantities for all coefficients combined, so many different configurations remain admissible. In contrast, our formulation fixes the $\ell_1$ and $\ell_2$ norms \emph{in every block}, introducing $2P$ equality constraints instead of just two. As a result, the feasible set under blockwise constraints is a proper subset of the set defined by the corresponding global constraints, and therefore provides a tighter characterization.

Assume $\bm{y} \sim \mathcal{CN}(\bm{0}, \bm{I}_{N/2})$, which is equivalent to each element $y_j$ being an i.i.d.\ sample from $\mathcal{CN}(0, 1)$. Then any block $\bm{y}^{(p)}$ of size $B$ has the expected blockwise norms
\begin{equation}
\label{eq:CN_block_exp}
\mathbb{E}\left[\|\bm{y}^{(p)}\|_1\right] = \tfrac{\sqrt{\pi}}{2}B, \qquad
\mathbb{E}\left[\|\bm{y}^{(p)}\|_2^2\right] = B.
\end{equation}

Using these theoretical statistics of $\mathcal{CN}(0, 1)$, we impose \textbf{blockwise $\ell_1$ and $\ell_2$ norm constraints in the compact spectral domain}, requiring that within every block the norms exactly match their theoretical expectations under $\mathcal{CN}(0,1)$. We use the same block size $B = 16$ as~\cite{hwang2025moment}. These constraints define the compact spectral domain feasible set
\begin{equation}
\label{eq:spectral_feasible_set}
\begin{aligned}
\mathcal{G}_\mathbb{C}
= \Bigl\{ \bm{y} \in \mathbb{C}^{N/2} :\,
\|\bm{y}^{(p)}\|_1 = \tfrac{\sqrt{\pi}}{2}B,\quad \|\bm{y}^{(p)}\|_2^2 = B,& \\
\quad p = 0, \dots, P-1 \Bigr\}&.
\end{aligned}
\end{equation}

Finally, we lift the definition back to the spatial domain.
\begin{equation}
\label{eq:spatial_feasible_set}
\boxed{\;\mathcal{G}_\mathbb{R} = \mathcal{F}^{-1}(\mathcal{G}_\mathbb{C}) \;}
\end{equation}

We refer to the blockwise $\ell_1$ and $\ell_2$ norm constraints in the compact spectral domain as the \textbf{white Gaussian noise constraints}. They not only follow the Gaussian statistics but also prevent any single frequency component from dominating in magnitude, thereby aligning with the notion of white noise. Correspondingly, we call the spatial feasible set $\mathcal{G}_\mathbb{R}$ the \textbf{white Gaussian noise feasible set}. A more detailed interpretation from the perspective of white noise is provided in Section~\ref{subsec:interpretations}.

\subsection{Projection onto the White Gaussian Noise Feasible Set}
\label{subsec:projection_onto_feasible_set}
\vspace{-0.5\baselineskip}
In this section, we introduce the closed-form projection onto the feasible set $\mathcal{G}_\mathbb{R}$ (\Eqref{eq:spatial_feasible_set}). Specifically, given an input $\bm{x} \in \mathbb{R}^N$, we aim to find its projection $\bm{\dot{x}} \in \mathcal{G}_\mathbb{R}$ such that:
\begin{equation}
\bm{\dot{x}} = \underset{\tilde{\bm{x}} \in \mathbb{R}^N}{\text{argmin}}\ \|\bm{x} - \tilde{\bm{x}}\|_2^2 \quad \text{subject to} \quad \tilde{\bm{x}} \in \mathcal{G}_\mathbb{R}.
\end{equation}
Since $\mathcal{G}_\mathbb{R}$ is defined via spectral constraints, the projection is naturally formulated in the compact spectral domain. Letting $\bm{y} = \mathcal{F}(\bm{x})$ and $\bm{\dot{y}} = \mathcal{F}(\bm{\dot{x}})$, we obtain:
\begin{equation}
\bm{\dot{y}} = \underset{\tilde{\bm{y}} \in \mathbb{C}^{N/2}}{\text{argmin}}\ \|\mathcal{F}^{-1}(\bm{y}) - \mathcal{F}^{-1}(\tilde{\bm{y}})\|_2^2 \quad \text{subject to} \quad \tilde{\bm{y}} \in \mathcal{G}_\mathbb{C}.
\end{equation}
To simplify this problem, we can utilize the following property of the mapping $\mathcal{F}^{-1}$:

\begin{proposition}
\label{thm:F_ind2}
The mapping $\mathcal{F}^{-1}$ is $\mathbb{R}$-linear, and for any $\bm{z} \in \mathbb{C}^{N/2}$, $\|\mathcal{F}^{-1}(\bm{z})\|_2^2 = 2\|\bm{z}\|_2^2$.
\end{proposition}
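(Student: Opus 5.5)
The plan is to write $\mathcal{F}^{-1}$ explicitly and then use the unitarity of $\bm{F}$ (Parseval) together with Hermitian symmetry. Given $\bm{z} \in \mathbb{C}^{N/2}$, I will build the full Hermitian-symmetric spectrum $\hat{\bm{x}} \in \mathbb{C}^N$:
\begin{equation*}
\hat{x}_0 = \sqrt{2}\,\mathrm{Re}(z_0), \quad \hat{x}_{N/2} = \sqrt{2}\,\mathrm{Im}(z_0), \quad \hat{x}_k = z_k, \quad \hat{x}_{N-k} = \overline{z_k}
\end{equation*}
for $k = 1, \dots, N/2-1$. Then I set $\mathcal{F}^{-1}(\bm{z}) = \bm{F}^{*}\hat{\bm{x}}$. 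By Proposition~\ref{thm:F_ind} (and Lemma~\ref{lem:dft_properties}), this vector is real and is indeed the inverse of $\mathcal{F}$.

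For $\mathbb{R}$-linearity, I will check that each entry of $\hat{\bm{x}}$ is an $\mathbb{R}$-linear function of $\bm{z}$. This holds because $\mathrm{Re}$, $\mathrm{Im}$, the identity, and complex conjugation are all $\mathbb{R}$-linear, even though conjugation is not $\mathbb{C}$-linear. The map $\bm{F}^*$ is $\mathbb{C}$-linear and hence also $\mathbb{R}$-linear. So for real scalars $a, b$ the composition satisfies $\mathcal{F}^{-1}(a\bm{z}+b\bm{w}) = a\mathcal{F}^{-1}(\bm{z}) + b\mathcal{F}^{-1}(\bm{w})$.

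For the norm identity, unitarity of $\bm{F}$ gives $\|\mathcal{F}^{-1}(\bm{z})\|_2^2 = \|\hat{\bm{x}}\|_2^2$. I will expand this as
\begin{equation*}
\|\hat{\bm{x}}\|_2^2 = |\hat{x}_0|^2 + |\hat{x}_{N/2}|^2 + \sum_{k=1}^{N/2-1}\bigl(|z_k|^2 + |\overline{z_k}|^2\bigr) = 2\mathrm{Re}(z_0)^2 + 2\mathrm{Im}(z_0)^2 + 2\sum_{k=1}^{N/2-1}|z_k|^2,
\end{equation*}
which equals $2\|\bm{z}\|_2^2$.

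The only delicate point is the DC and Nyquist bookkeeping. The $1/\sqrt{2}$ scaling in the definition of $y_0$ is exactly what makes those two real coefficients contribute with the same factor $2$ as the paired conjugate coefficients. I also need to be sure the index sets $\{0\}$, $\{N/2\}$, $\{1,\dots,N/2-1\}$, and $\{N/2+1,\dots,N-1\}$ partition $\{0,\dots,N-1\}$, which holds because $N$ is even. No other obstacle is expected.
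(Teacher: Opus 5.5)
Your proof is correct and takes essentially the same route as the paper: unitarity of $\bm{F}$ (Parseval), then pairing the conjugate bins $k$ and $N-k$, with the $1/\sqrt{2}$ scaling of the DC and Nyquist terms producing the uniform factor $2$. The only difference is cosmetic: you work directly with the explicit inverse, which makes $\mathbb{R}$-linearity immediate, whereas the paper proves linearity and the norm relation for $\mathcal{F}$ and then transfers both to $\mathcal{F}^{-1}$ via bijectivity.
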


\begin{proof}
See Appendix~\ref{app:proof_thm2}.
\end{proof}

By Proposition~\ref{thm:F_ind2}, we obtain $\|\mathcal{F}^{-1}(\bm{y}) - \mathcal{F}^{-1}(\tilde{\bm{y}})\|_2^2 = \|\mathcal{F}^{-1}(\bm{y} - \tilde{\bm{y}})\|_2^2 = 2 \|\bm{y} - \tilde{\bm{y}}\|_2^2$.

Thus, the original projection problem reduces to:
\begin{equation}
\bm{\dot{y}} = \underset{\tilde{\bm{y}} \in \mathbb{C}^{N/2}}{\text{argmin}}\ \|\bm{y} - \tilde{\bm{y}}\|_2^2 \quad \text{subject to} \quad \tilde{\bm{y}} \in \mathcal{G}_\mathbb{C}.
\end{equation}

This corresponds to projecting onto the set $\mathcal{G}_\mathbb{C}$, and the projection can be performed independently on each block. In other words, the projection onto $\mathcal{G}_\mathbb{R}$ in the spatial domain reduces to a projection onto $\mathcal{G}_\mathbb{C}$ in the compact spectral domain, which further decomposes into blockwise projections onto the intersection of $\ell_1$ and $\ell_2$ spheres. The closed-form projection onto this intersection is known to exist~\citep{liu2019unified}. We derive the projection algorithm for $\mathcal{G}_\mathbb{C}$ in Appendix~\ref{app:algorithm_closest_Cb}. Here, we summarize the resulting algorithm.

For the $p$-th block, $\bm{y}^{(p)}$, consisting of the values from $y_{pB}$ to $y_{pB + B - 1}$, we define $\bm{w}$ as the descending-sorted array
\begin{equation}
\bm{w} = \texttt{sort\_descending}\left(\left\{\left|y_{pB}\right|, \dots, \left|y_{pB + B - 1}\right| \right\} \right)
\end{equation}
and the cumulative sums $S_{d,k} = \sum_{l = 0}^{k} w_l^d$ for $d=1,2$.

Then, for $k + 1 \geq \tfrac{\pi}{4}B$, there exists a unique $k^{\ast}$ satisfying the following condition (define $w_B = -\infty$):
\begin{align}
&w_{k^\ast+1} \le \lambda^{(k^\ast)} < w_{k^\ast}, \\
& \lambda^{(k^\ast)}= \frac{S_{1,k^\ast}}{k^\ast + 1}
- \frac{\sqrt{\pi}}{2}\frac{\sqrt{B}}{k^\ast + 1}
\sqrt{\frac{(k^\ast + 1)S_{2,k^\ast} - S_{1,k^\ast}^2}
{k^\ast + 1 - \frac{\pi}{4}B}}.
\end{align}
The final solution is then given by:
\begin{equation}
    \dot{y}_j = \frac{\sqrt{\pi}}{2}B \frac{\text{ReLU} \left(|y_j| - \lambda^{(k^\ast)}\right)}{S_{1,k^\ast} - (k^\ast + 1)\lambda^{(k^\ast)}} \frac{y_j}{|y_j|}
\end{equation}
for $j = pB, \dots, pB + B - 1$. The projection onto $\mathcal{G}_\mathbb{C}$ has computational complexity  $\mathcal{O}(N \log B)$. See Appendix~\ref{app:algorithm_closest_Cb_time} for a detailed explanation.

To summarize, the projection onto $\mathcal{G}_\mathbb{R}$ involves computing $\bm{y} = \mathcal{F}(\bm{x})$, projecting onto $\mathcal{G}_\mathbb{C}$ to obtain $\bm{\dot{y}}$, and recovering $\bm{\dot{x}} = \mathcal{F}^{-1}(\bm{\dot{y}})$. Since $\mathcal{F}$ and its inverse are implemented via the Fast Fourier Transform (FFT), the total procedure takes $\mathcal{O}(N \log N)$ time, which accounts for only $\textbf{0.04\%}$ \textbf{of the runtime} in our experiments with the FLUX model.

Using this projection, we generated $1$M samples of $\bm{x} \sim \mathcal{N}(\bm{0}, \bm{I}_{N})$ with $N = 65{,}536$, and computed the cosine similarity between $\bm{x}$ and its projection onto $\mathcal{G}_{\mathbb{R}}$. The results showed a \textbf{minimum cosine similarity of} $\textbf{0.988}$, indicating that white Gaussian noise is close to the feasible set $\mathcal{G}_\mathbb{R}$.

\begin{figure*}[t]
\centering
\setlength{\fboxsep}{0pt}
\setlength{\tabcolsep}{1.1pt}
\scriptsize
\begin{tabular}{c c c || c c c c | c c c c | c c c c}
\toprule
\multicolumn{2}{c}{\small \textsc{Initial Latent}} & & & \multicolumn{2}{c}{\small $\mathcal{L}_{\text{norm}}$ (\Eqref{eq:2-norm_loss})} & & & \multicolumn{2}{c}{\small MPGR~\citep{hwang2025moment}} & & & \multicolumn{2}{c}{\small Ours}  \\ [0.25em]
\fbox{\includegraphics[width=0.113\textwidth]{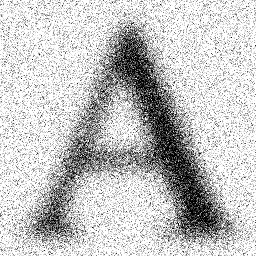}} &
\fbox{\includegraphics[width=0.113\textwidth]{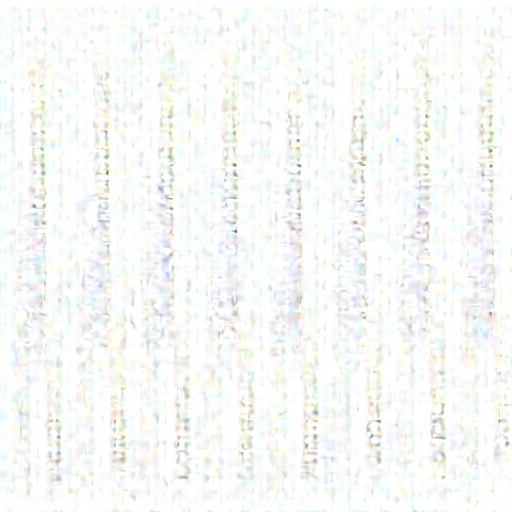}} & & &
\fbox{\includegraphics[width=0.113\textwidth]{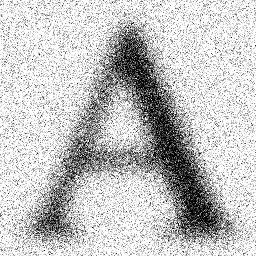}} &
\fbox{\includegraphics[width=0.113\textwidth]{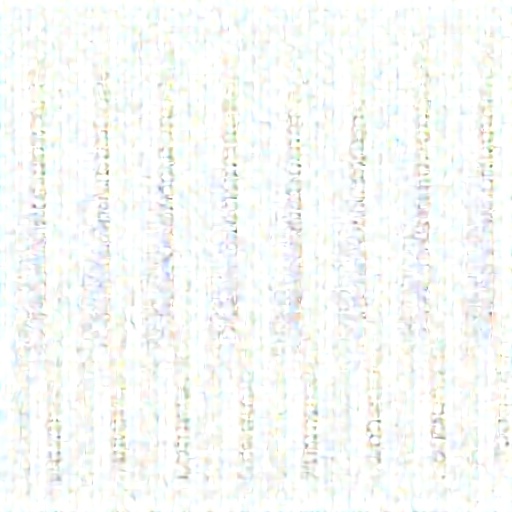}} & & &
\fbox{\includegraphics[width=0.113\textwidth]{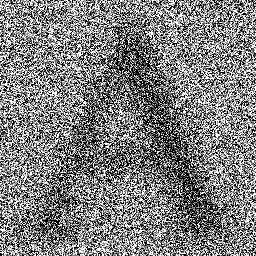}} &
\fbox{\includegraphics[width=0.113\textwidth]{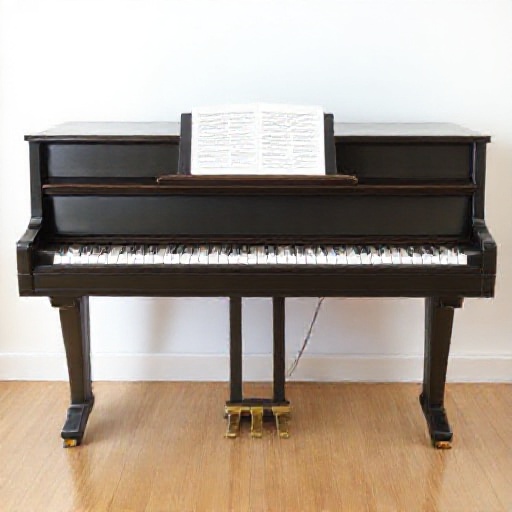}} & & &
\fbox{\includegraphics[width=0.113\textwidth]{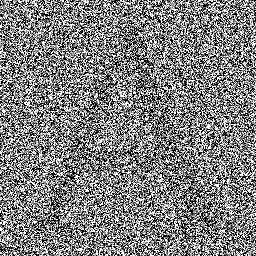}} &
\fbox{\includegraphics[width=0.113\textwidth]{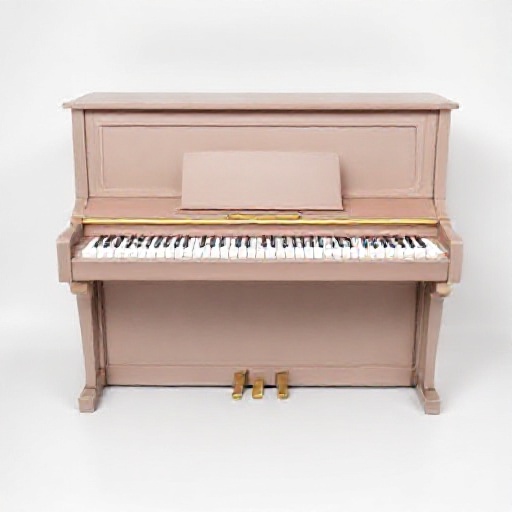}} \\
Latent & Sampled Image & & & Latent & Sampled Image & & & Latent & Sampled Image & & & Latent & Sampled Image \\
\midrule
\multicolumn{2}{c}{\small \makecell{\textbf{Cosine sim.} \scriptsize{to initial latent}}} & & & \multicolumn{2}{c}{\small 1.000} & & & \multicolumn{2}{c}{\small 0.222} & & & \multicolumn{2}{c}{\small 0.548} \\
\bottomrule
\end{tabular}
\setlength{\abovecaptionskip}{1.5pt}
\caption{\textbf{Effectiveness of Projection onto $\mathcal{G}_{\mathbb{R}}$.}
Starting from an initial latent encoding the letter ‘A’, we compare two regularization methods and our projection. Our method preserves high cosine similarity to the initial latent while reducing the spatial correlations. Unlike~\cite{hwang2025moment}, which requires slow gradient-based iterative projection, our method guarantees optimality with a single operation. The images are sampled from FLUX with the prompt ``Piano".}
\vspace{-\baselineskip}
\label{fig:A_main}
\end{figure*}
\vspace{-0.4cm}

\subsection{Connections to Prior Regularization Methods}
\label{subsec:connection_with_prior_works}
\vspace{-0.5\baselineskip}
We explain the connection between our constraints and prior regularization terms, including $\ell_2$ norm regularization and spectral-domain regularization. We then show that our blockwise $\ell_2$ and $\ell_1$ norm constraints are analogous to minimizing these objectives, respectively. This reveals that our constraints provide a tighter characterization of white Gaussian noise and enable efficient closed-form projection.

\vspace{-0.5\baselineskip}
\paragraph{$\ell_2$ Norm Regularization~\citep{norm1-2023}.}

$\ell_2$ norm regularization aims to maximize the likelihood of the Euclidean norm $\|\bm{x}\|_2$ under the assumption that $\bm{x} \sim \mathcal{N}(\bm{0}, \bm{I}_N)$, in which case $\|\bm{x}\|_2 \sim \chi_N$, by using the loss:
\begin{align}
\label{eq:2-norm_loss}
    &\mathcal{L}_{\text{norm}}(\bm{x}) = - \log p_{\chi_{N}}(\|\bm{x}\|_2),\\
    &p_{\chi_N}(r) = \frac{1}{2^{N/2-1} \Gamma(N/2)}\, r^{N - 1} e^{-r^2 / 2}, \quad r \geq 0.
\end{align}
This loss is minimized when $\|\bm{x}\|_2^2 = N - 1$, which corresponds to vectors lying on a hypersphere. For high-dimensional generative models such as FLUX ($N = 65{,}536$) and SDXL-Turbo ($N = 16{,}384$), this is nearly equal to the expectation $\mathbb{E}[\|\bm{x}\|_2^2] = N$.

Our $\ell_2$ norm constraints, $\|\bm{y}^{(p)}\|_2 = \sqrt{B}$, closely approximate the minimum of $\mathcal{L}_{\text{norm}}$, given that
\begin{equation}
\|\bm{x}\|_2^2 = \|\mathcal{F}^{-1}(\bm{y})\|_2^2 = 2 \|\bm{y}\|_2^2 = 2 \sum_{p=0}^{P-1} \|\bm{y}^{(p)}\|_2^2
\end{equation}
by Proposition~\ref{thm:F_ind2}. Thus, $\bm{x} \in \mathcal{G}_\mathbb{R}$ ensures $\|\bm{x}\|_2^2 = 2PB = N$ which differs only slightly from the optimal value $N - 1$. Although our constraints are imposed in the spectral domain, they inherently influence the spatial domain as well.

The key distinction is that $\ell_2$ norm regularization, while easily minimized via scaling, does not constrain spatial correlations (Figure~\ref{fig:A_main}). In contrast, our formulation enforces local spectral structure, resulting in a tighter characterization of white Gaussian noise.

\vspace{-0.5\baselineskip}
\paragraph{Spectral Domain Regularization~\citep{hwang2025moment}.}

The spectral domain regularization term, $\mathcal{L}_{\text{power}}$ (\Eqref{eq:hwang_et_al}), penalizes deviations in the blockwise $\ell_1$ norms of the DFT coefficients.

This regularization exploits the statistical properties of the complex Gaussian distribution $\mathcal{CN}(\bm{0}, \bm{I}_B)$ and is similar to our approach in its use of blockwise $\ell_1$ norms. However, our formulation further incorporates blockwise $\ell_2$ constraints, providing a tighter characterization of white Gaussian noise.

The key distinction between~\cite{hwang2025moment} and our approach lies in the spectral representation. While $\mathcal{L}_{\text{power}}$ directly regularizes the DFT coefficients $\hat{\bm{x}} = \bm{F} \bm{x}$, we instead use the compact representation $\bm{y} = \mathcal{F}(\bm{x})$. This choice allows us to directly define a tractable feasible set $\mathcal{G}_\mathbb{C}$ and derive an efficient projection even with a tighter characterization of white Gaussian noise. As a result, projection onto the minimizer set of $\mathcal{L}_{\text{power}}$ requires running gradient descent for hundreds of iterations and therefore is significantly slower. In contrast, our formulation achieves projection with a single lightweight operation, avoiding heavy and imprecise additional computations, as shown in Figure~\ref{fig:A_main}.

\subsection{Interpretation from Multiple Perspectives}
\label{subsec:interpretations}
\vspace{-0.5\baselineskip}
Although we have so far explained our constraints using properties of the Gaussian distribution, they can also be interpreted from several other perspectives. We summarize the interpretations below.

\vspace{-0.5\baselineskip}
\paragraph{White Noise Perspective.}
The constraints prevent any single DFT coefficient from becoming dominant at a specific frequency by limiting the total budget within each block. For example, when $B = 16$, the theoretical maximum of $|y_j|^2$ is approximately $7.18$ (see Appendix~\ref{app:magnitude_range}), whereas the total budget is $N/2$ for typical $N \gg 10^4$. This ensures that no individual frequency can disproportionately dominate the spectrum. As a result, the constraints prevent the signal from exhibiting strong localized patterns. This mimics the characteristics of white noise, where all samples are independently drawn and no strong structured signal is present.

\vspace{-0.5\baselineskip}
\paragraph{Autocorrelation Reduction Perspective.}
Our constraints assign equal magnitude budgets across blocks that are ordered by frequency. Although $\mathcal{F}$ only retains the first half of the DFT coefficients, the Hermitian symmetry (Lemma~\ref{lem:dft_properties}) ensures that the remaining half is constrained as well. This leads to a spread of DFT magnitudes across frequencies, while still permitting some variance.

From the perspective of circular autocorrelation, defined as:
\begin{equation}
\label{eq:circ-autocorr_main}
r_{\bm{x}}[\ell] \;=\; \frac{1}{N}\sum_{n=0}^{N-1} x_n\,x_{\,n-\ell\!\!\!\!\pmod{N}},\qquad \ell=0,\dots,N-1,
\end{equation}
the squared magnitudes of the DFT coefficients form a DFT pair with the autocorrelation:
\begin{equation}
r_{\bm{x}}[\ell] = \frac{1}{N}\sum_{k=0}^{N-1} |\hat{x}_k|^2\,e^{2\pi i k\ell/N}, \quad \ell = 0, \dots, N-1.
\end{equation}
This implies that spreading the DFT magnitudes reduces spatial autocorrelation. A more detailed discussion is provided in Appendix~\ref{app:flat_psd_autocorr}.

\section{Latent Optimization with Text-to-Image Generative Models}
\vspace{-0.5\baselineskip}
Building on prior work such as ReNO~\citep{eyring2024reno} and MPGR~\citep{hwang2025moment}, we present experimental results on latent optimization for one-step text-to-image generation. The effect of block size $B$ is discussed in Appendix~\ref{app:effect_block_size}, and the analysis on the empirical distributions of spectral magnitudes is in Appendix~\ref{app:visualization_magnitude}.

\begin{figure*}[t]
\centering
\setlength{\tabcolsep}{3.0pt}
\begin{tabular}{c | c | c | c}
\multicolumn{4}{l}{\includegraphics[width=0.65\textwidth]{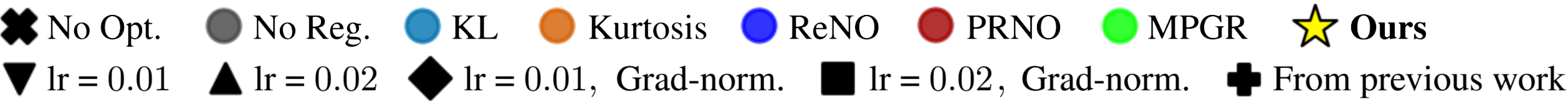}} \\
\toprule
\small Aesthetic Score & \small PickScore & \small HPSv2 & \small ImageReward \\
\midrule
\includegraphics[height=0.63\textwidth]{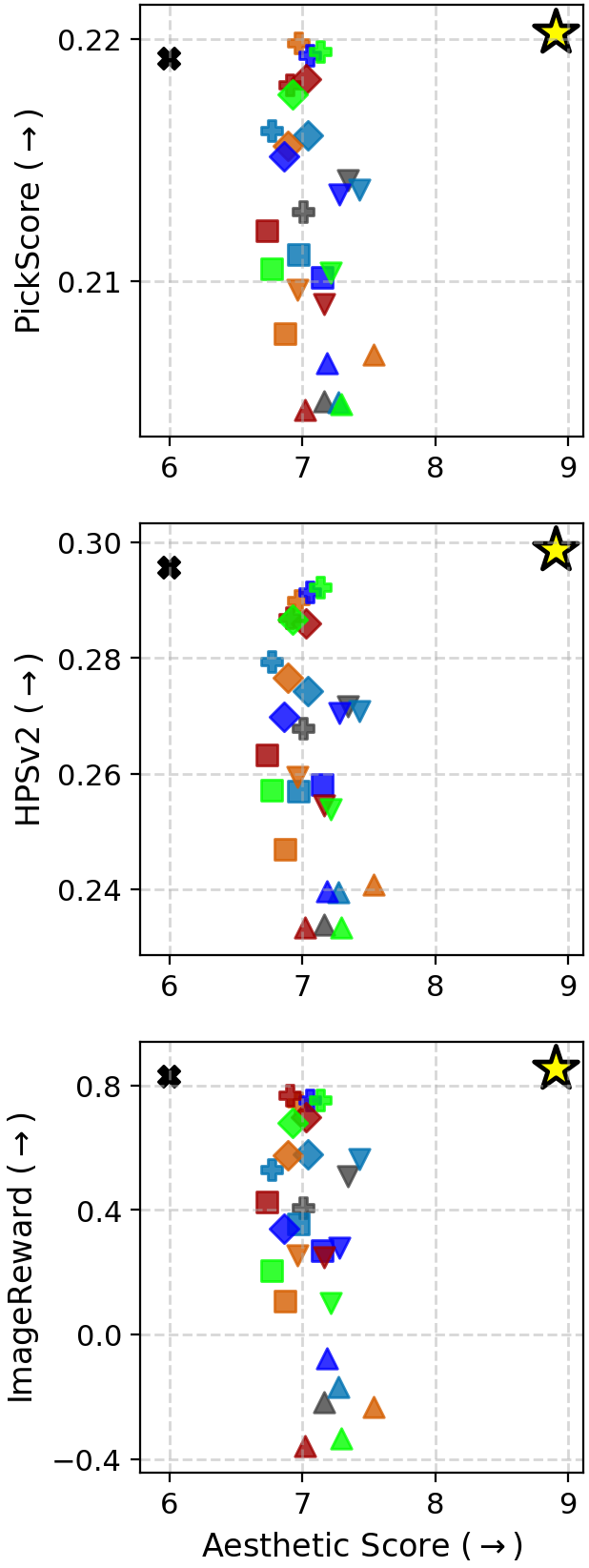} &
\includegraphics[height=0.63\textwidth]{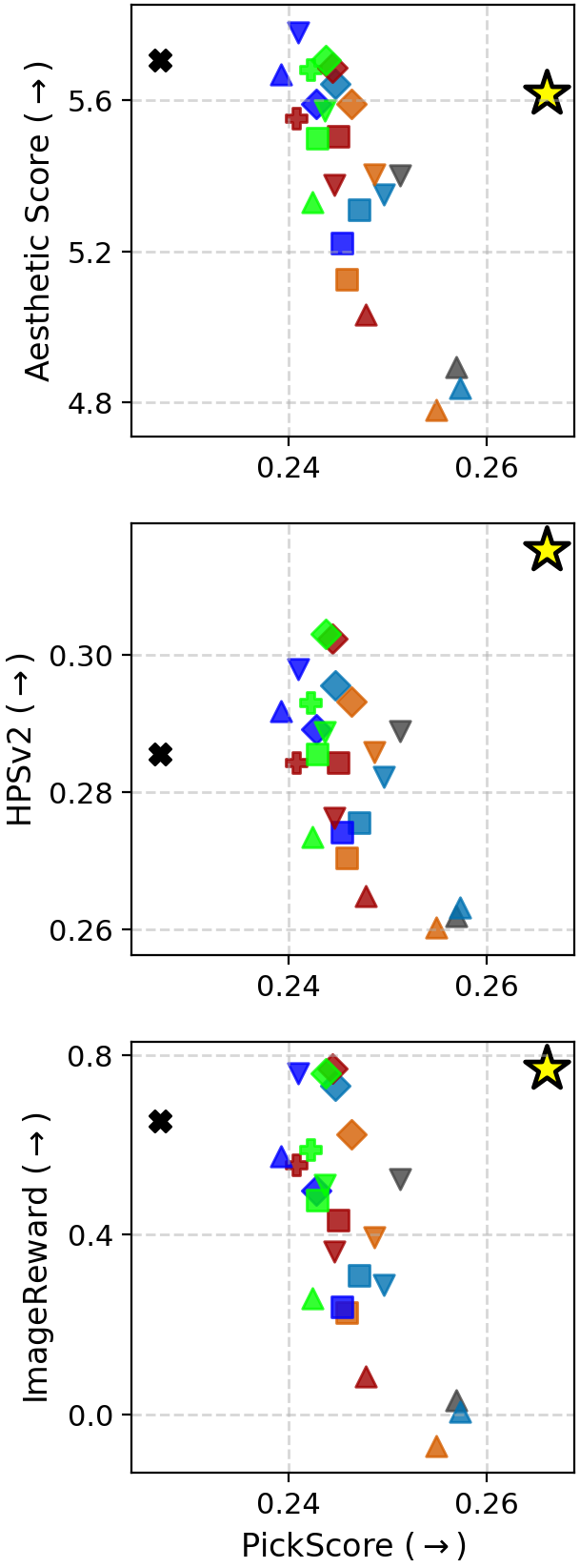} &
\includegraphics[height=0.63\textwidth]{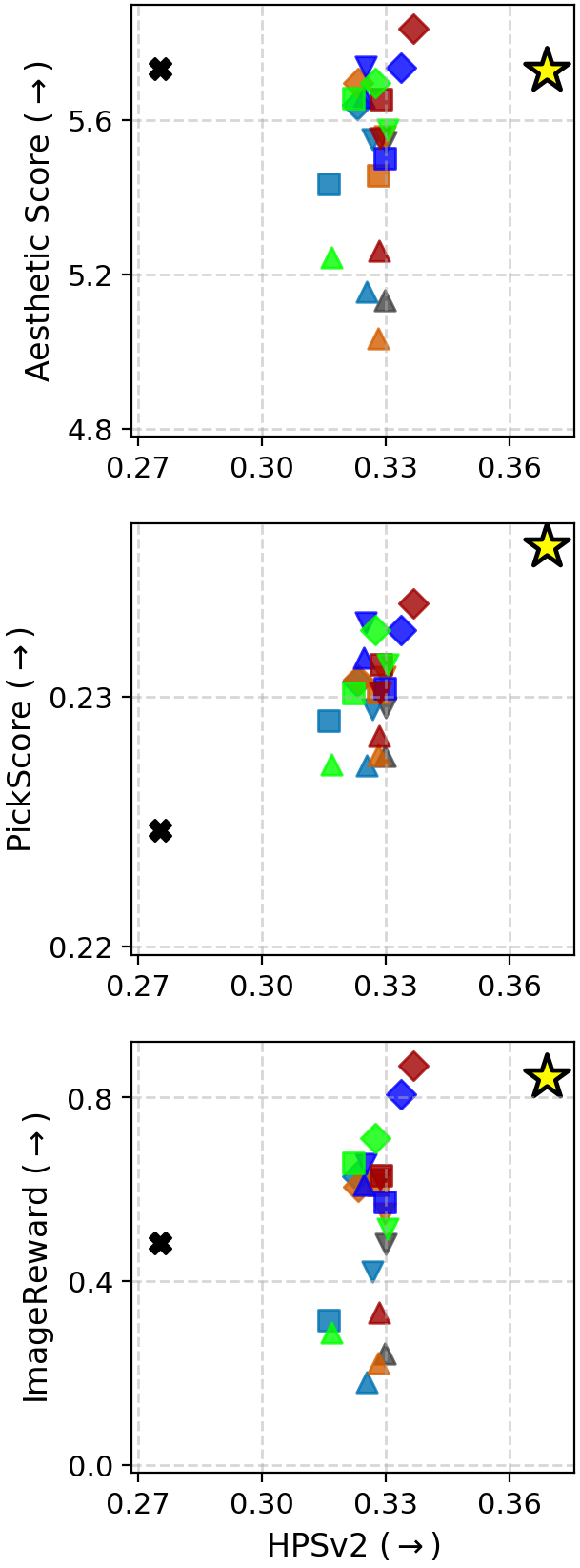} &
\includegraphics[height=0.63\textwidth]{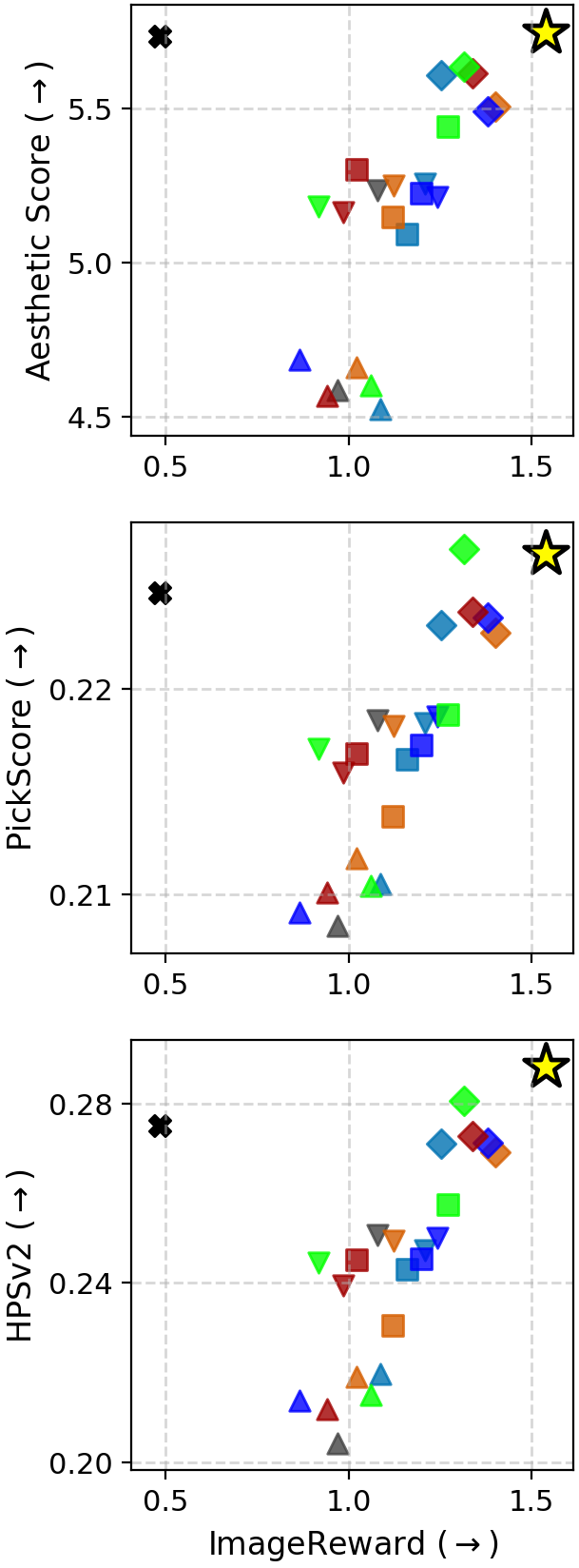} \\ [-0.2em]
\bottomrule
\end{tabular}
\setlength{\abovecaptionskip}{1.5pt}
\caption{\textbf{Quantitative results with FLUX model.} Each column corresponds to the same given reward (x-axis), and different held-out rewards (y-axis). Each point denotes the score after 200 iterations, with higher positions and more rightward placement indicating better trade-offs. For baselines, multiple points are plotted across learning rates and regularization schemes. Our method consistently achieves the best trade-off across all reward–held-out reward pairs.}
\label{fig:quantitative_flux}
\vspace{-\baselineskip}
\end{figure*}

\vspace{-0.5\baselineskip}
\paragraph{Reward Models.} We evaluate using four human-preference-based reward models: \emph{Aesthetic Score}~\citep{aesthetic_score}, a predictor of visual appeal trained on human-rated images; \emph{PickScore}~\citep{pickscore}, trained using paired human judgments of text-image outputs; \emph{HPSv2}~\citep{hps_v2}, a human preference predictor correlating strongly with subjective evaluations; and \emph{ImageReward}~\citep{xu2023imagereward}, trained on expert annotations for text-to-image evaluation. When one of these models is used as the given reward, the others serve as held-out rewards to assess generalization and determine whether the effects transfer across different human-aligned criteria.

\vspace{-0.5\baselineskip}
\paragraph{Baselines.} We use prompts from the animal dataset~\citep{black2023ddpo} for Aesthetic Score and T2I-CompBench++~\citep{Huang:2025T2ICompBench++} for the other reward models following the setup of MPGR~\citep{hwang2025moment}. We report results using FLUX-schnell~\citep{flux2024} as the primary generative model, with additional experiments on SDXL-Turbo, SANA-Sprint, and SD-Turbo in Appendices~\ref{app:additional_models} and~\ref{app:experimental_results_sdxl}. We compare our gradient preconditioning method against several regularization-based approaches: KL~\citep{kingma2014}, Kurtosis~\citep{shkolnik2020robust}, ReNO~\citep{eyring2024reno}, PRNO~\citep{prno2024tuning}, and MPGR~\citep{hwang2025moment}. Two reference baselines are also included: one without optimization (No Opt.) and one without regularization (No Reg.). In addition, we evaluate two weighting schemes: (i) fixed-weight regularization, where the regularization gradient is scaled only by the coefficient $\lambda$, and (ii) gradient-normalized regularization, where the regularization gradient is rescaled to match the magnitude of the reward gradient to ensure balanced contributions. For Aesthetic Score and PickScore, we also report the results from MPGR~\citep{hwang2025moment}. Baseline results across various $\lambda$ values are in Appendix~\ref{app:lambda_ablation}, confirming no value closes the gap.

\vspace{-0.5\baselineskip}
\paragraph{Implementation Details.} We initialize each run from white Gaussian noise and optimize for 200 iterations with FLUX and 50 iterations with SDXL-Turbo. The optimization is performed using Adam~\citep{kingma2014adam} optimizer, with gradient clipping at 0.03 and learning rates of 0.02 (FLUX) and 0.1 (SDXL-Turbo). For all regularization-based methods, we also report results with learning rates 0.01 and 0.02, respectively. We set the regularization coefficient to $2.0$ for the baselines. Our method projects both the latent vector and its gradient onto the white Gaussian noise feasible set at every iteration; an ablation isolating each component is provided in Appendix~\ref{app:ablation_projection}. All experiments were conducted on a single NVIDIA A6000 GPU, taking approximately 1 minute (FLUX) and 20 seconds (SDXL-Turbo) per 50 iterations, respectively.

\begin{figure*}[t]
\centering
\setlength{\tabcolsep}{0.5pt}
\small
\begin{tabular}{c @{\hskip 4.0pt}| c c c c c c c c}
\toprule
& {\small No Opt.} & {\small No Reg.} & {\small KL} & {\small Kurtosis} & {\small ReNO} & {\small PRNO} & {\small MPGR} & \small{\textbf{Ours}} \\
\midrule
\multirow{2}{*}{\raisebox{-0.95\height}{\rotatebox{90}{\footnotesize Aesthetic Score}}} & \multicolumn{8}{c}{\footnotesize \textit{``mouse"}} \\
&
\includegraphics[width=0.119\textwidth]{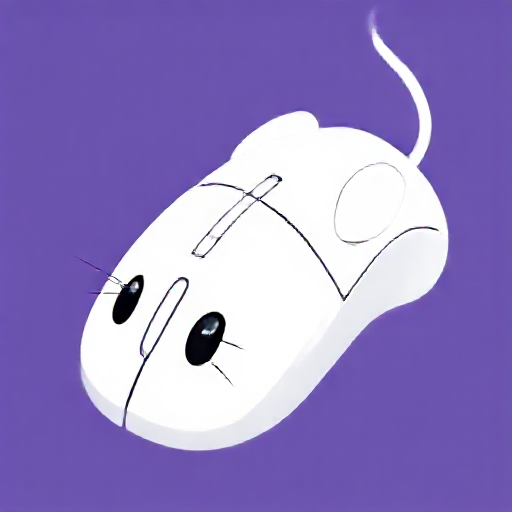} &
\includegraphics[width=0.119\textwidth]{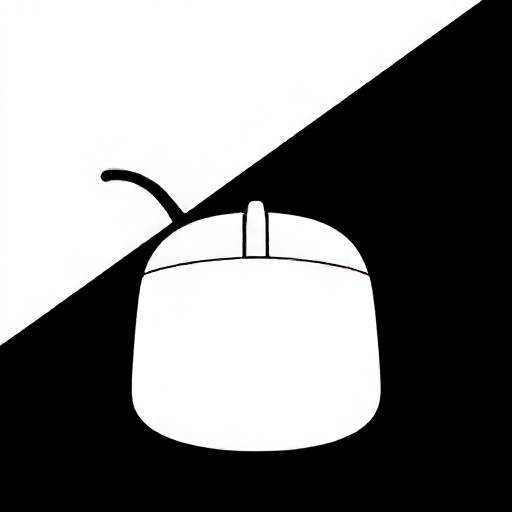} &
\includegraphics[width=0.119\textwidth]{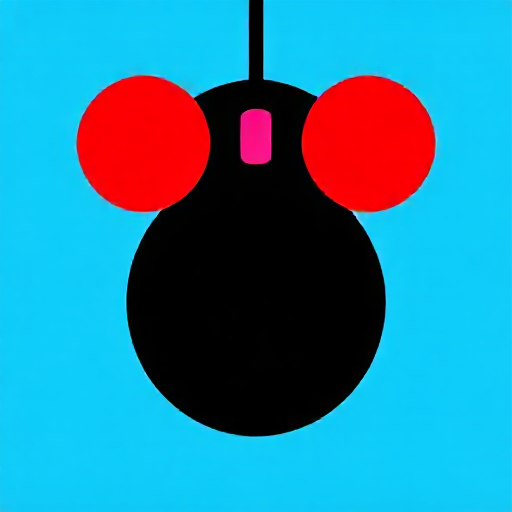} &
\includegraphics[width=0.119\textwidth]{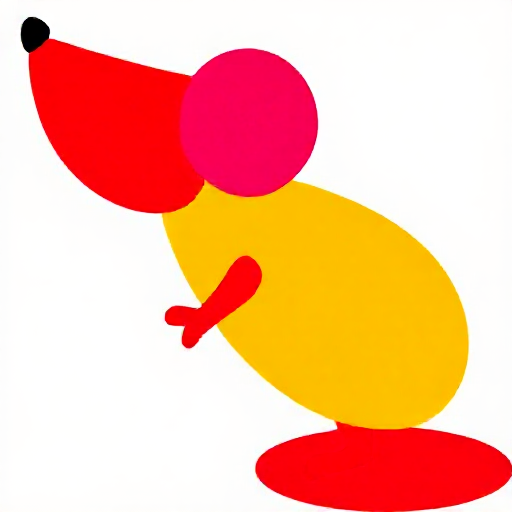} &
\includegraphics[width=0.119\textwidth]{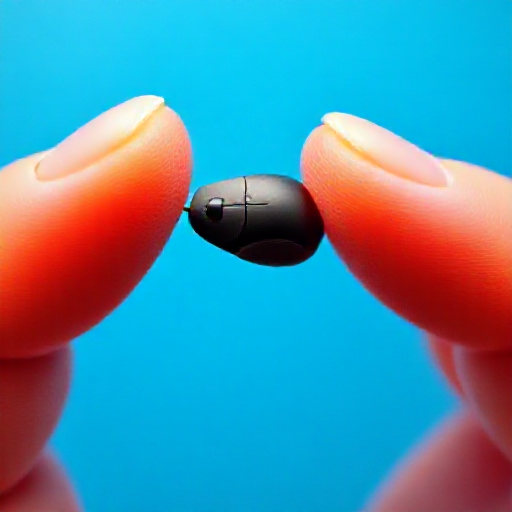} &
\includegraphics[width=0.119\textwidth]{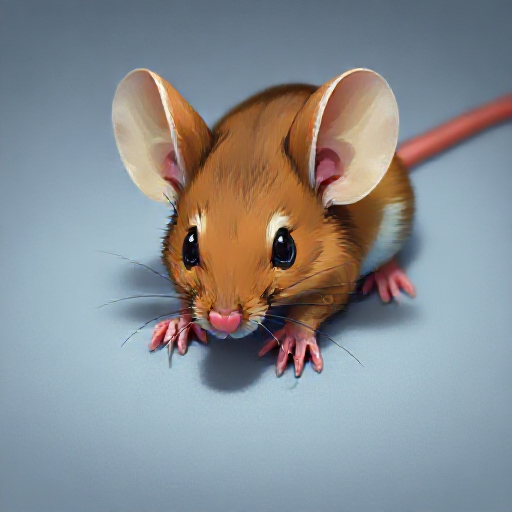} &
\includegraphics[width=0.119\textwidth]{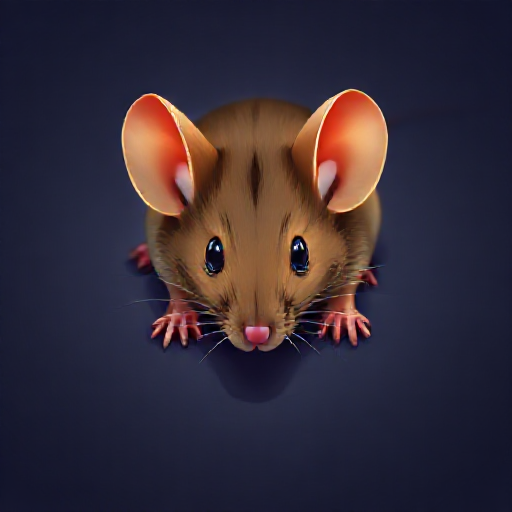} &
\includegraphics[width=0.119\textwidth]{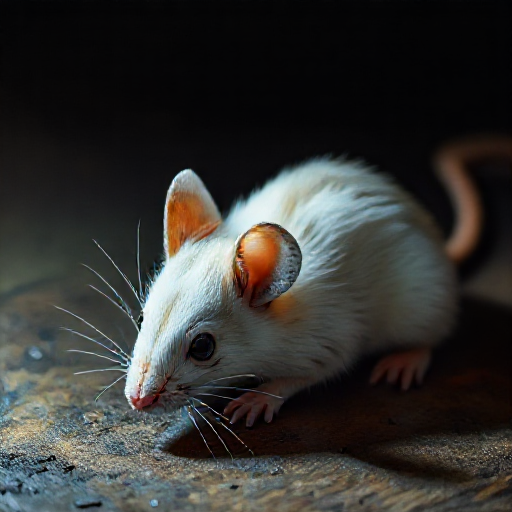} \\[-3pt]
\midrule
\multirow{2}{*}{\raisebox{-1.15\height}{\rotatebox{90}{\footnotesize PickScore}}} & \multicolumn{8}{c}{\footnotesize \textit{``Two friends are playing a game of catch in the park."}} \\
&
\includegraphics[width=0.119\textwidth]{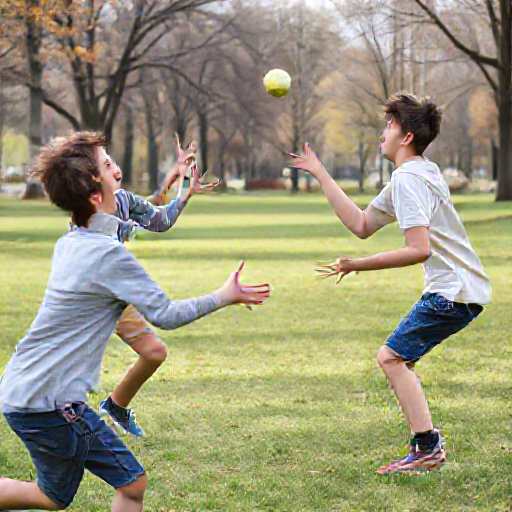} &
\includegraphics[width=0.119\textwidth]{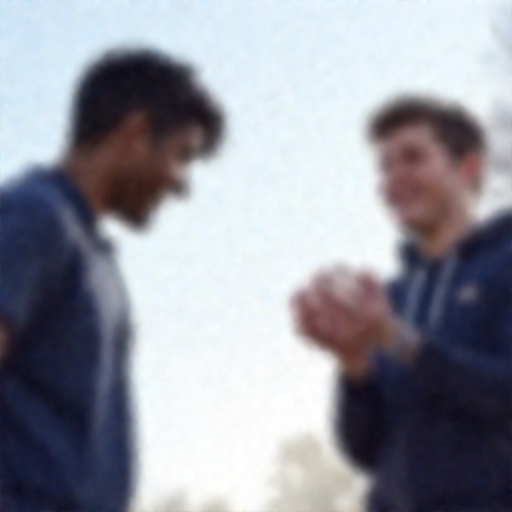} &
\includegraphics[width=0.119\textwidth]{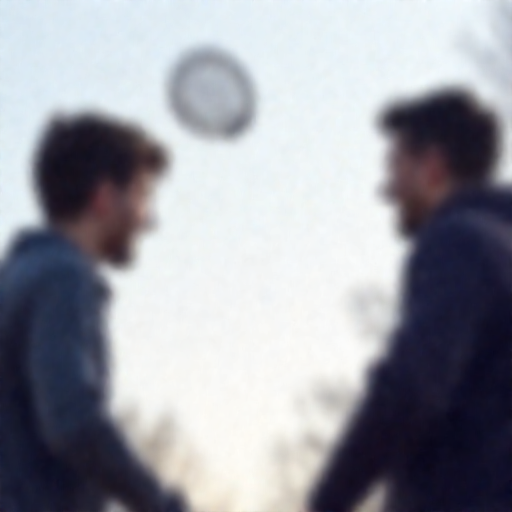} &
\includegraphics[width=0.119\textwidth]{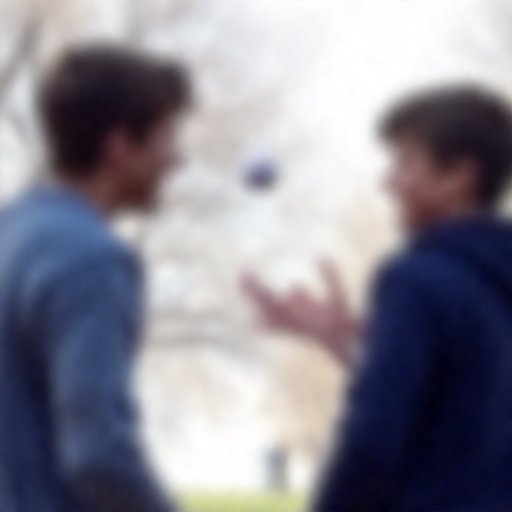} &
\includegraphics[width=0.119\textwidth]{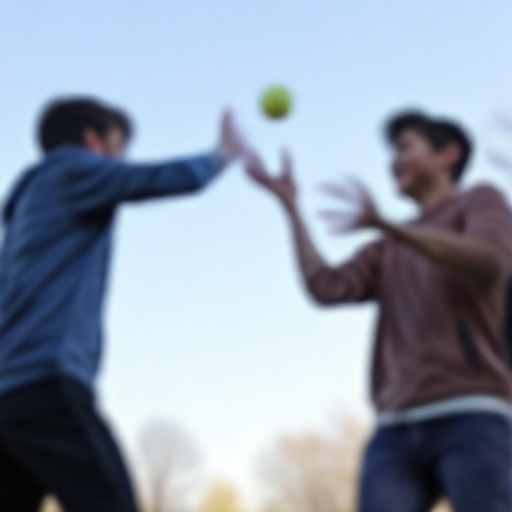} &
\includegraphics[width=0.119\textwidth]{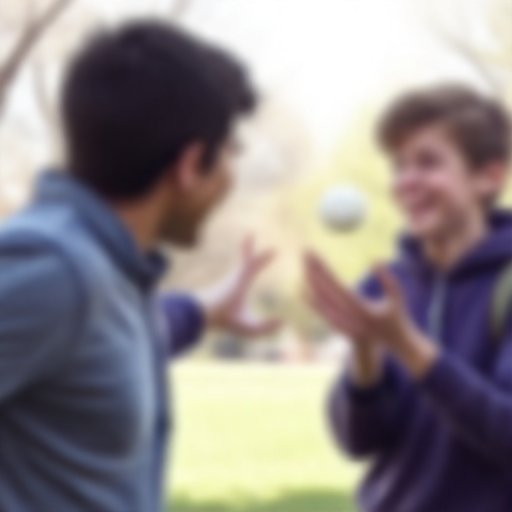} &
\includegraphics[width=0.119\textwidth]{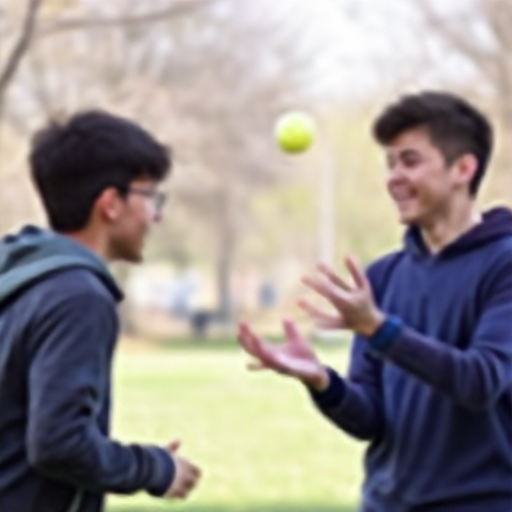} &
\includegraphics[width=0.119\textwidth]{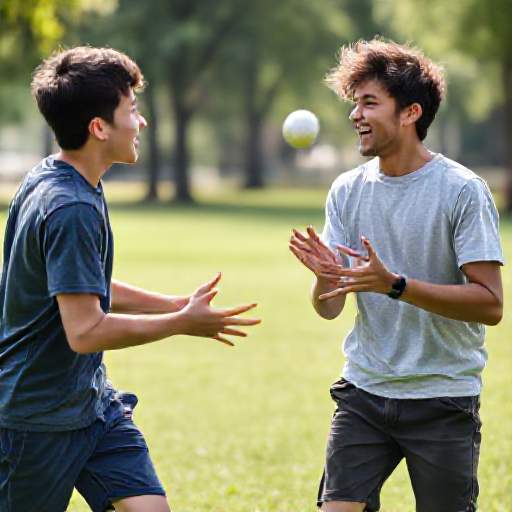} \\[-3pt]
\midrule
\multirow{2}{*}{\raisebox{-1.5\height}{\rotatebox{90}{\footnotesize HPSv2}}} & \multicolumn{8}{c}{\footnotesize \textit{``The prickly, spiky cactus plants thrived in the harsh desert environment, adapting to the arid conditions."}} \\
&
\includegraphics[width=0.119\textwidth]{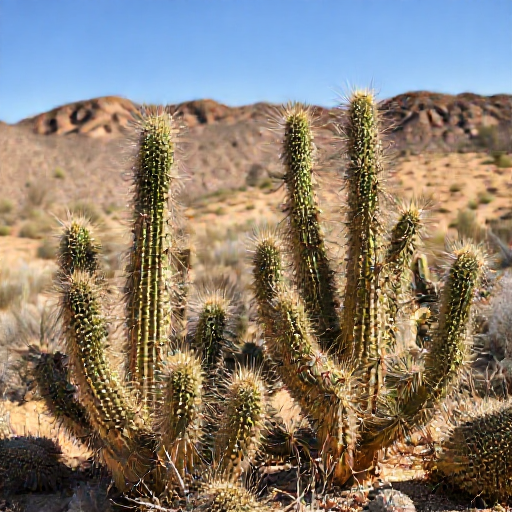} &
\includegraphics[width=0.119\textwidth]{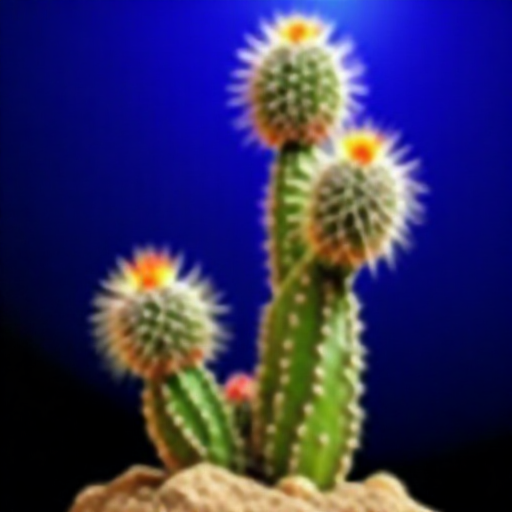} &
\includegraphics[width=0.119\textwidth]{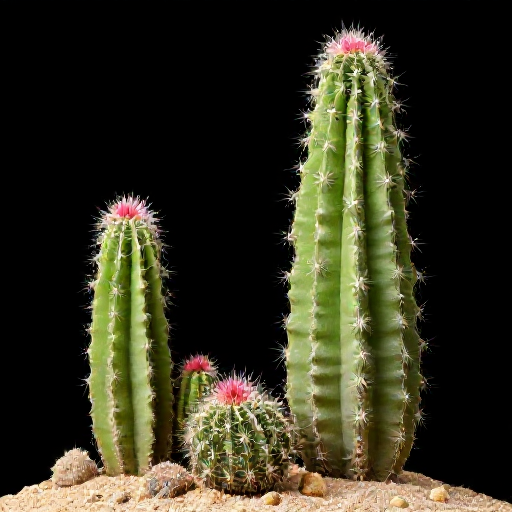} &
\includegraphics[width=0.119\textwidth]{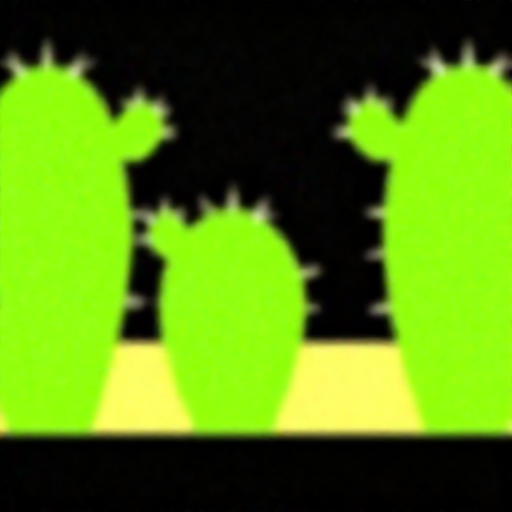} &
\includegraphics[width=0.119\textwidth]{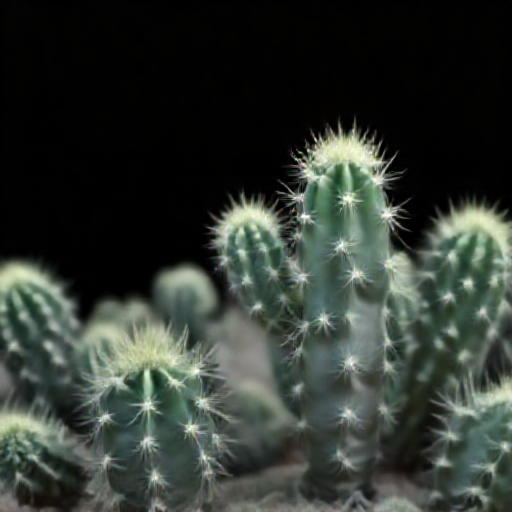} &
\includegraphics[width=0.119\textwidth]{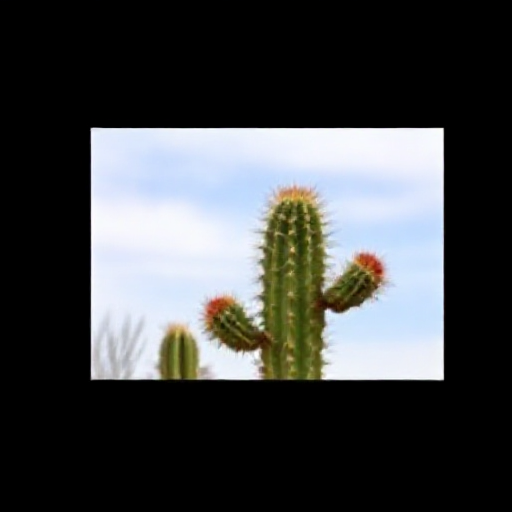} &
\includegraphics[width=0.119\textwidth]{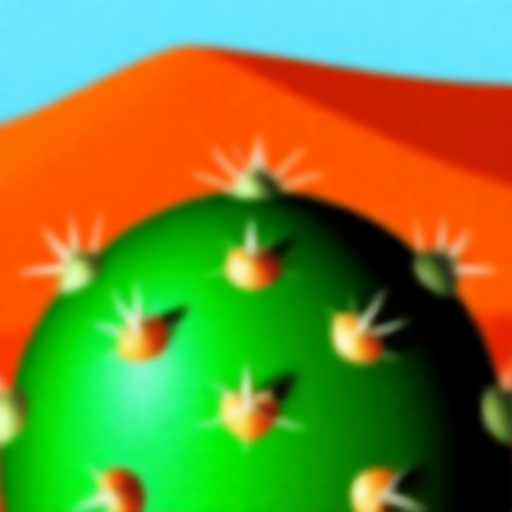} &
\includegraphics[width=0.119\textwidth]{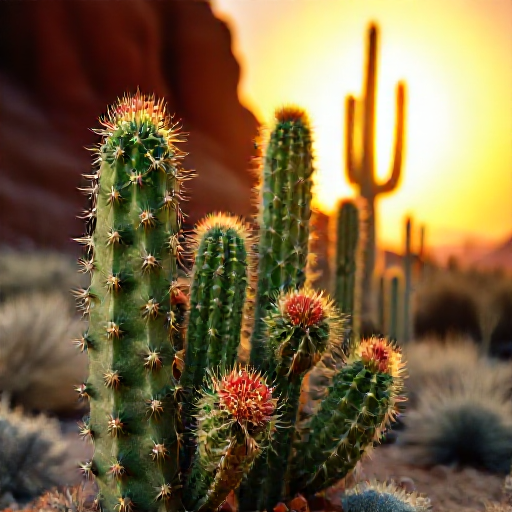} \\[-3pt]
\midrule
\multirow{2}{*}{\raisebox{-1.05\height}{\rotatebox{90}{\footnotesize ImageReward}}} & \multicolumn{8}{c}{\footnotesize \textit{``The sharp, angular edges of the city skyline pierced the clouds, a symbol of human innovation and progress."}} \\
&
\includegraphics[width=0.119\textwidth]{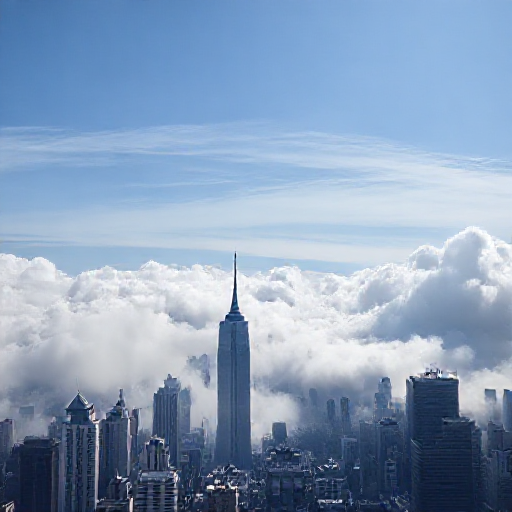} &
\includegraphics[width=0.119\textwidth]{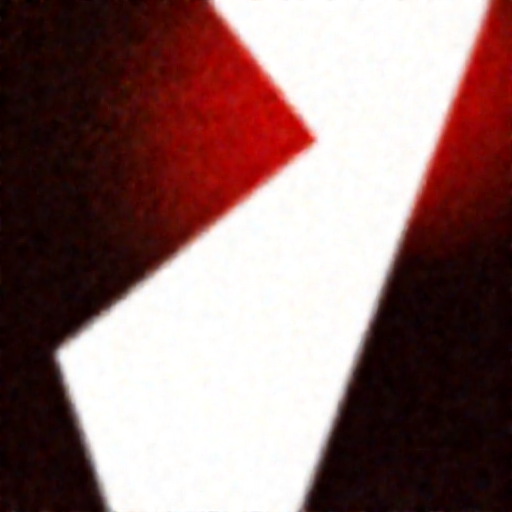} &
\includegraphics[width=0.119\textwidth]{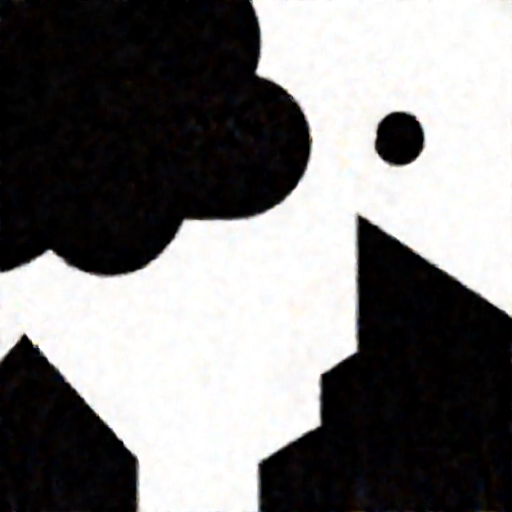} &
\includegraphics[width=0.119\textwidth]{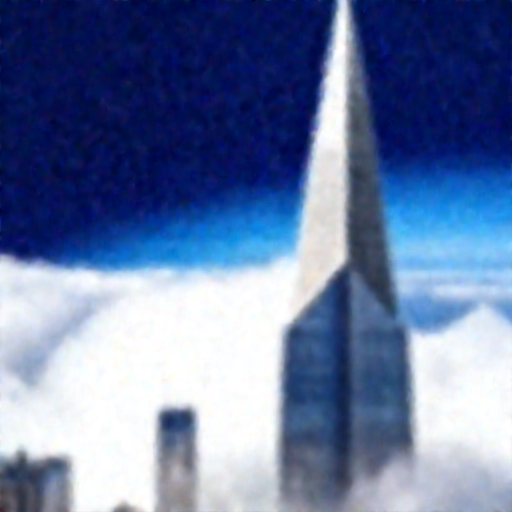} &
\includegraphics[width=0.119\textwidth]{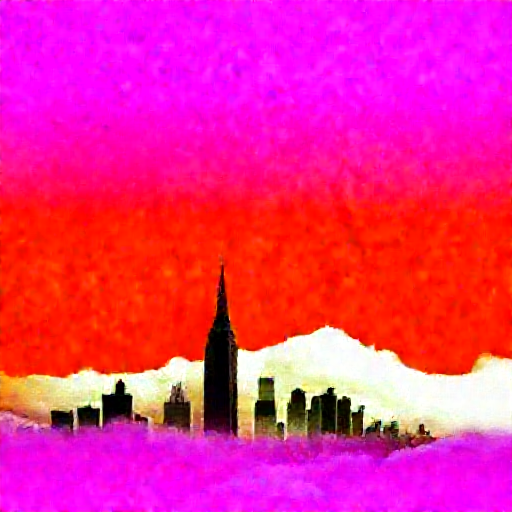} &
\includegraphics[width=0.119\textwidth]{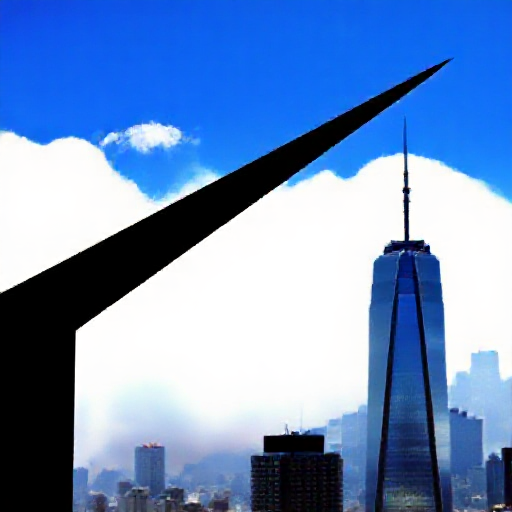} &
\includegraphics[width=0.119\textwidth]{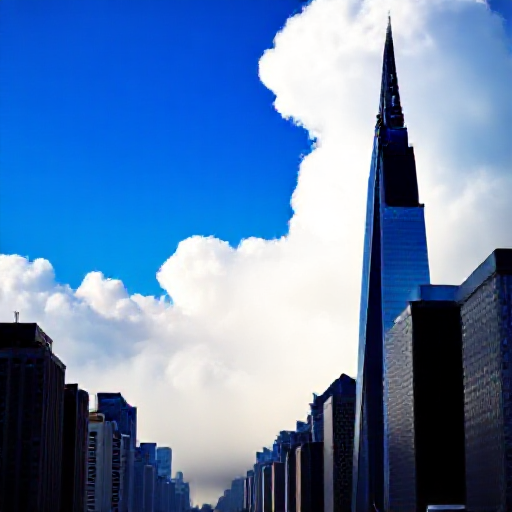} &
\includegraphics[width=0.119\textwidth]{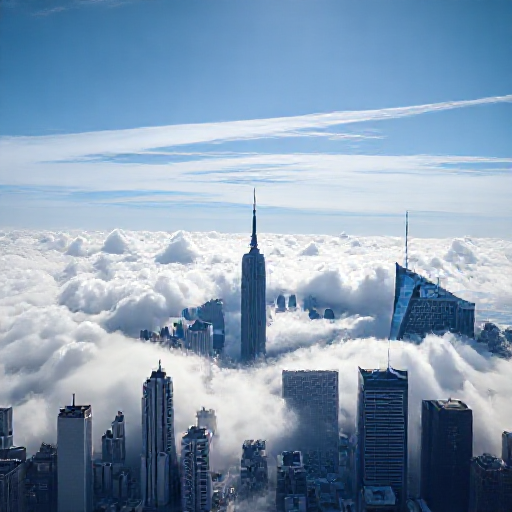} \\[-3pt]
\bottomrule
\end{tabular}
\setlength{\abovecaptionskip}{1.5pt}
\caption{\textbf{Qualitative results with FLUX model.} Columns denote optimization method; rows correspond to the given reward, with the prompt shown above each row. Our constrained optimization preserves realism and prompt fidelity while attaining higher target scores and strong held-out quality.}
\vspace{-\baselineskip}
\label{fig:qualitative_flux}
\end{figure*}

\vspace{-0.5\baselineskip}
\paragraph{Results.} Quantitative and qualitative results are shown in Figures~\ref{fig:quantitative_flux} and~\ref{fig:qualitative_flux}. According to Figure~\ref{fig:quantitative_flux}, on \emph{Aesthetic Score}, baselines raise the given reward from $6$ to at most $7.5$, whereas ours reaches around $9$; \emph{PickScore}, \emph{HPSv2}, and \emph{ImageReward} show similar behavior, with our method forming the rightmost frontier. Across all settings, we increase the \emph{given} reward substantially \emph{without} significantly losing \emph{held-out} rewards compared to the unoptimized (No Opt.) reference, indicating improved alignment without degrading realism or image quality.

Looking at the qualitative results in Figure~\ref{fig:qualitative_flux}, regularization-based methods such as PRNO and MPGR, which penalize spatial correlations, often yield realistic outputs. However, this effect is inconsistent, as seen in the second and third examples in Figure~\ref{fig:qualitative_flux}. These observations highlight the lack of principled design in soft regularization and reinforce the advantage of our gradient preconditioning. Our method reliably produces realistic images across all reward models and aligns more faithfully with the text prompt. For example, in the fourth row, it renders the ``skyline pierced the clouds'', which is not achieved by the unoptimized reference.

\begin{table}[t]
\centering
\footnotesize
\setlength{\tabcolsep}{2.5pt}
\caption{Comparison of FLUX-based reward-guided generation under different iteration counts.}
\vspace{-0.2cm}
\begin{tabular}{lcccc}
\toprule
Method & \# Iterations &
\makecell{Aesthetic\\Score\\(target)} & \makecell{Pick-\\Score\\(held-out)} & \makecell{Wall-clock\\time (s)} \\
\midrule
No Opt.          & 0 & 5.993 & 0.219 &   -   \\
ReNO             & 200 & 7.058 & 0.219 & 232.0 \\
PRNO             & 200 & 7.024 & 0.218 & 255.4 \\
MPGR             & 200 & 7.133 & 0.220 & 235.5 \\
\textbf{Ours (60 iters)}  &  \textbf{60} & 7.120 & 0.220 &  \textbf{69.7} \\
Ours (200 iters) & 200 & \textbf{8.908} & 0.220 & 232.2 \\
\bottomrule
\end{tabular}
\vspace{-0.5cm}
\label{tab:fewer_iterations}
\end{table}

\vspace{-0.5\baselineskip}
\paragraph{Diversity Analysis.}
To verify that our gradient preconditioning does not cause mode collapse, we measure Inception Score (IS)~\citep{salimans2016improved} and Vendi Score~\citep{friedman2023vendi} on $1{,}125$ generated images under Aesthetic Score optimization. Our method (IS: $21.10$, Vendi: $6.97$) remains within the variance of unoptimized baselines using the same initial latents (IS: $22.33$, Vendi: $6.42$) and independent latents (IS: $21.57$, Vendi: $6.61$), confirming no reduction in sample diversity. A detailed diversity analysis including FID is provided in Appendix~\ref{app:diversity}.

\vspace{-0.5\baselineskip}
\paragraph{Runtime Analysis.}
Table~\ref{tab:fewer_iterations} reports quantitative results subject to no degradation in the held-out reward (\emph{PickScore}) while optimizing \emph{Aesthetic Score} for each method. Regularization-based baselines reach at most $7.133$ after $200$ iterations, whereas ours reaches $8.908$ under the same iteration budget. This improvement comes with comparable wall-clock time: $200$ iterations take about $232$--$255$ seconds for baselines and $232.2$ seconds for ours. Moreover, because ours makes faster progress per step, we can also reduce iterations substantially: with only $60$ iterations (30\% of $200$), ours already attains $7.120$ in $69.7$ seconds, matching the $200$-iteration performance of others ($7.024$--$7.133$). A best-of-$K$ comparison in Appendix~\ref{app:maxk_comparison} further demonstrates the advantage of gradient-based optimization.

\vspace{-0.5\baselineskip}
\section{Conclusion}
\label{sec:conclusion}
\vspace{-0.5\baselineskip}
We propose gradient preconditioning for efficient and reliable reward-guided generation with one-step generative models.
Projecting reward gradients onto a white Gaussian noise feasible set reshapes each update into a noise-aligned direction, driving faster reward ascent while preventing reward hacking without additional hyperparameters.
The closed-form $\mathcal{O}(N \log N)$ projection adds negligible overhead, and experiments confirm substantial reward gains without sacrificing sample quality.

\clearpage
\newpage

\section*{Acknowledgements}

This work was supported by the NRF grant (RS-2026-25486000); IITP grants (RS-2024-00399817, RS-2025-25441313, RS-2025-25443318, RS-2026-25526850) funded by MSIT, Korea; the Industrial Technology Innovation Program grant (RS-2025-02317326) funded by MOTIE, Korea; the InnoCORE program of MSIT (AI Meta-Scientist, N10260110); the National Supercomputing Center with supercomputing resources and technical support (KSC-2025-CRE-0475); the Advanced GPU Utilization Support Program; and the DRB-KAIST SketchTheFuture Research Center.

\bibliography{icml2026}

@inproceedings{prno2024tuning,
  title = {Tuning-free alignment of diffusion models with direct noise optimization},
  author = {Tang, Zhiwei and Peng, Jiangweizhi and Tang, Jiasheng and Hong, Mingyi and Wang, Fan and Chang, Tsung-Hui},
  booktitle = {ICML 2024 Workshop on Structured Probabilistic Inference \& Generative Modeling},
  year = {2024}
}

@inproceedings{hwang2025moment,
  title = {Moment- and Power-Spectrum-Based {Gaussianity} Regularization for Text-to-Image Models},
  author = {Jisung Hwang and Jaihoon Kim and Minhyuk Sung},
  booktitle = {Advances in Neural Information Processing Systems},
  volume = {38},
  pages = {18235--18264},
  year = {2025}
}

@inproceedings{norm1-2023,
  title = {Norm-guided latent space exploration for text-to-image generation},
  author = {Samuel, Dvir and Ben-Ari, Rami and Darshan, Nir and Maron, Haggai and Chechik, Gal},
  booktitle = {Advances in Neural Information Processing Systems},
  volume = {36},
  pages = {57863--57875},
  year = {2023}
}

@inproceedings{eyring2024reno,
  title = {{ReNO}: Enhancing one-step text-to-image models through reward-based noise optimization},
  author = {Eyring, Luca and Karthik, Shyamgopal and Roth, Karsten and Dosovitskiy, Alexey and Akata, Zeynep},
  booktitle = {Advances in Neural Information Processing Systems},
  volume = {37},
  pages = {125487--125519},
  year = {2024}
}

@misc{flux2024,
  author = {Black Forest Labs},
  title = {{FLUX}},
  year = {2024},
  howpublished = {\url{https://github.com/black-forest-labs/flux}},
}

@inproceedings{sauer2024sdxl-turbo,
  title = {Adversarial diffusion distillation},
  author = {Sauer, Axel and Lorenz, Dominik and Blattmann, Andreas and Rombach, Robin},
  booktitle = {Proceedings of the European Conference on Computer Vision (ECCV)},
  pages = {87--103},
  year = {2024},
  organization = {Springer}
}

@inproceedings{aesthetic_score,
  title = {{LAION-5B}: An open large-scale dataset for training next generation image-text models},
  author = {Schuhmann, Christoph and Beaumont, Romain and Vencu, Richard and Gordon, Cade and Wightman, Ross and Cherti, Mehdi and Coombes, Theo and Katta, Aarush and Mullis, Clayton and Wortsman, Mitchell and others},
  booktitle = {Advances in Neural Information Processing Systems},
  volume = {35},
  pages = {25278--25294},
  year = {2022}
}

@inproceedings{pickscore,
  title = {{Pick-a-Pic}: An open dataset of user preferences for text-to-image generation},
  author = {Kirstain, Yuval and Polyak, Adam and Singer, Uriel and Matiana, Shahbuland and Penna, Joe and Levy, Omer},
  booktitle = {Advances in Neural Information Processing Systems},
  volume = {36},
  pages = {36652--36663},
  year = {2023}
}

@inproceedings{hps_v2,
  title = {Human preference score v2: A solid benchmark for evaluating human preferences of text-to-image synthesis},
  author = {Wu, Xiaoshi and Hao, Yiming and Sun, Keqiang and Chen, Yixiong and Zhu, Feng and Zhao, Rui and Li, Hongsheng},
  booktitle = {International Conference on Learning Representations},
  year = {2024}
}

@article{khintchine1934,
  author = {Khintchine, A.},
  title = {Korrelationstheorie der station{\"a}ren stochastischen Prozesse},
  journal = {Mathematische Annalen},
  volume = {109},
  number = {1},
  pages = {604--615},
  year = {1934},
  doi = {10.1007/BF01449156}
}

@book{oppenheim1999,
  author = {Oppenheim, Alan V. and Schafer, Ronald W. and Buck, John R.},
  title = {Discrete-Time Signal Processing},
  edition = {2},
  publisher = {Prentice Hall},
  address = {Upper Saddle River, NJ},
  year = {1999},
  isbn = {0-13-754920-2}
}

@book{stoica2005,
  author = {Stoica, Petre and Moses, Randolph L.},
  title = {Spectral Analysis of Signals},
  publisher = {Prentice Hall},
  address = {Upper Saddle River, NJ},
  year = {2005},
  isbn = {0-13-113956-8}
}

@inproceedings{kingma2014,
  author = {Kingma, Diederik P. and Welling, Max},
  title = {Auto-Encoding Variational {Bayes}},
  booktitle = {International Conference on Learning Representations},
  year = {2014},
  eprint = {1312.6114},
  archivePrefix = {arXiv},
  primaryClass = {stat.ML}
}

@inproceedings{clark2024draft,
  title = {Directly Fine-Tuning Diffusion Models on Differentiable Rewards},
  author = {Clark, Kevin and Vicol, Paul and Swersky, Kevin and Fleet, David J.},
  booktitle = {International Conference on Learning Representations},
  year = {2024}
}

@inproceedings{black2023ddpo,
  title = {Training Diffusion Models with Reinforcement Learning},
  author = {Black, Kevin and Janner, Michael and Du, Yilun and Kostrikov, Ilya and Levine, Sergey},
  booktitle = {International Conference on Learning Representations},
  year = {2024}
}

@inproceedings{wallace2024diffusiondpo,
  title = {Diffusion Model Alignment Using Direct Preference Optimization},
  author = {Wallace, Bram and Dang, Meihua and Rafailov, Rafael and Zhou, Linqi and Lou, Aaron and Purushwalkam, Senthil and Ermon, Stefano and Xiong, Caiming and Joty, Shafiq and Naik, Nikhil},
  booktitle = {Proceedings of the IEEE/CVF Conference on Computer Vision and Pattern Recognition (CVPR)},
  pages = {8228--8238},
  year = {2024}
}

@inproceedings{karunratanakul2024dno_motion,
  title = {Optimizing Diffusion Noise Can Serve as Universal Motion Priors},
  author = {Karunratanakul, Korrawe and Preechakul, Konpat and Aksan, Emre and Beeler, Thabo and Suwajanakorn, Supasorn and Tang, Siyu},
  booktitle = {Proceedings of the IEEE/CVF Conference on Computer Vision and Pattern Recognition (CVPR)},
  pages = {1334--1345},
  year = {2024}
}

@inproceedings{zhao2025dartcontrol,
  title = {{DartControl}: A Diffusion-Based Autoregressive Motion Model for Real-Time Text-Driven Motion Control},
  author = {Zhao, Kaifeng and Li, Gen and Tang, Siyu},
  booktitle = {International Conference on Learning Representations},
  year = {2025}
}

@inproceedings{shkolnik2020robust,
  author = {Shkolnik, Moran and Chmiel, Brian and Banner, Ron and Shomron, Gil and Nahshan, Yury and Bronstein, Alex and Weiser, Uri},
  title = {Robust Quantization: One Model to Rule Them All},
  booktitle = {Advances in Neural Information Processing Systems},
  volume = {33},
  year = {2020},
  pages = {5308--5317},
}

@inproceedings{xu2023imagereward,
  title = {{ImageReward}: Learning and Evaluating Human Preferences for Text-to-Image Generation},
  author = {Xu, Jiazheng and Liu, Xiao and Wu, Yuchen and Tong, Yuxuan and Li, Qinkai and Ding, Ming and Tang, Jie and Dong, Yuxiao},
  booktitle = {Advances in Neural Information Processing Systems},
  year = {2023},
  volume = {36},
  pages = {15903--15935},
}

@inproceedings{kingma2014adam,
  title = {Adam: A method for stochastic optimization},
  author = {Kingma, Diederik P and Ba, Jimmy},
  booktitle = {International Conference on Learning Representations},
  year = {2015}
}

@article{Huang:2025T2ICompBench++,
  title = {{T2I-CompBench++}: An Enhanced and Comprehensive Benchmark for Compositional Text-to-Image Generation},
  author = {Huang, Kaiyi and Duan, Chengqi and Sun, Kaiyue and Xie, Enze and Li, Zhenguo and Liu, Xihui},
  journal = {IEEE Transactions on Pattern Analysis and Machine Intelligence},
  volume = {47},
  number = {5},
  pages = {3563--3579},
  doi = {10.1109/TPAMI.2025.3531907},
  year = {2025},
}

@article{liu2019unified,
  title = {A unified approach for projections onto the intersection of $\ell_1 $ and $\ell_2 $ balls or spheres},
  author = {Liu, Hongying and Wang, Hao and Song, Mengmeng},
  journal = {Journal of Optimization Theory and Applications},
  volume = {187},
  pages = {520--534},
  year = {2020}
}

@book{proakis2007digital,
  title = {Digital Signal Processing: Principles, Algorithms, and Applications},
  author = {Proakis, John G. and Manolakis, Dimitris G.},
  edition = {4},
  publisher = {Pearson Prentice Hall},
  address = {Upper Saddle River, NJ},
  year = {2007}
}

@inproceedings{min2025origen,
  title = {{ORIGEN}: Zero-Shot {3D} Orientation Grounding in Text-to-Image Generation},
  author = {Yunhong Min and Daehyeon Choi and Kyeongmin Yeo and Jihyun Lee and Minhyuk Sung},
  booktitle = {Advances in Neural Information Processing Systems},
  volume = {38},
  pages = {151684--151717},
  year = {2025}
}

@inproceedings{kim2025inference,
  title = {Inference-time scaling for flow models via stochastic generation and rollover budget forcing},
  author = {Kim, Jaihoon and Yoon, Taehoon and Hwang, Jisung and Sung, Minhyuk},
  booktitle = {Advances in Neural Information Processing Systems},
  volume = {38},
  pages = {30830--30864},
  year = {2025}
}

@inproceedings{ben2024d,
  title = {{D-Flow}: differentiating through flows for controlled generation},
  author = {Ben-Hamu, Heli and Puny, Omri and Gat, Itai and Karrer, Brian and Singer, Uriel and Lipman, Yaron},
  booktitle = {International Conference on Machine Learning},
  pages = {3462--3483},
  year = {2024}
}

@inproceedings{fransone,
  title = {One Step Diffusion via Shortcut Models},
  author = {Frans, Kevin and Hafner, Danijar and Levine, Sergey and Abbeel, Pieter},
  booktitle = {International Conference on Learning Representations},
  year = {2025}
}

@inproceedings{eyring2025noise,
  author = {Eyring, Luca and Karthik, Shyamgopal and Dosovitskiy, Alexey and Ruiz, Nataniel and Akata, Zeynep},
  title = {Noise Hypernetworks: Amortizing Test-Time Compute in Diffusion Models},
  booktitle = {Advances in Neural Information Processing Systems},
  volume = {38},
  pages = {95429--95462},
  year = {2025},
}

@inproceedings{liu2025flow,
  title = {Flow-{GRPO}: Training flow matching models via online {RL}},
  author = {Liu, Jie and Liu, Gongye and Liang, Jiajun and Li, Yangguang and Liu, Jiaheng and Wang, Xintao and Wan, Pengfei and Zhang, Di and Ouyang, Wanli},
  booktitle = {Advances in Neural Information Processing Systems},
  volume = {38},
  pages = {40783--40818},
  year = {2025}
}

@inproceedings{liu2025improving,
  title = {Improving video generation with human feedback},
  author = {Liu, Jie and Liu, Gongye and Liang, Jiajun and Yuan, Ziyang and Liu, Xiaokun and Zheng, Mingwu and Wu, Xiele and Wang, Qiulin and Xia, Menghan and Wang, Xintao and others},
  booktitle = {Advances in Neural Information Processing Systems},
  volume = {38},
  pages = {82155--82192},
  year = {2025}
}

@inproceedings{oshima2025inferencetime,
  title = {Inference-Time Text-to-Video Alignment with Diffusion Latent Beam Search},
  author = {Yuta Oshima and Masahiro Suzuki and Yutaka Matsuo and Hiroki Furuta},
  booktitle = {Advances in Neural Information Processing Systems},
  volume = {38},
  pages = {13170--13216},
  year = {2025}
}

@inproceedings{salimans2016improved,
  title = {Improved Techniques for Training {GAN}s},
  author = {Salimans, Tim and Goodfellow, Ian and Zaremba, Wojciech and Cheung, Vicki and Radford, Alec and Chen, Xi},
  booktitle = {Advances in Neural Information Processing Systems},
  volume = {29},
  pages = {2226--2234},
  year = {2016}
}

@article{friedman2023vendi,
  title = {The {Vendi Score}: A Diversity Evaluation Metric for Machine Learning},
  author = {Friedman, Dan and Dieng, Adji Bousso},
  journal = {Transactions on Machine Learning Research},
  year = {2023}
}

@inproceedings{chen2025sana_sprint,
  title = {{SANA-Sprint}: One-step diffusion with continuous-time consistency distillation},
  author = {Chen, Junsong and Xue, Shuchen and Zhao, Yuyang and Yu, Jincheng and Paul, Sayak and Chen, Junyu and Cai, Han and Han, Song and Xie, Enze},
  booktitle = {Proceedings of the IEEE/CVF International Conference on Computer Vision (ICCV)},
  pages = {16185--16195},
  year = {2025}
}
\bibliographystyle{icml2026}

\newpage
\appendix
\onecolumn
\section{Hermitian Symmetry}
\label{app:hermitian_symmetry}

\begin{lemma}
\label{lem:dft_properties}
Let $\bm{x} \in \mathbb{R}^{N}$, and let $\hat{\bm{x}} = \bm{F} \bm{x}$ denote its DFT. Then, $\hat{\bm{x}}$ satisfies the Hermitian symmetry~\citep{proakis2007digital}:
\begin{equation}
\hat{x}_k = \overline{\hat{x}_{N - k}} \quad \text{for } k = 0, \dots, N - 1,
\end{equation}
and the coefficients $\hat{x}_0$ and $\hat{x}_{N/2}$ are real-valued.
\end{lemma}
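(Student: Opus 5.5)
I will verify the symmetry directly from the definition of the unitary DFT matrix. Writing $F_{kn} = \tfrac{1}{\sqrt{N}}\,\omega^{kn}$ with $\omega = e^{-2\pi i/N}$, we have
\begin{equation*}
\hat{x}_k = \frac{1}{\sqrt{N}} \sum_{n=0}^{N-1} x_n\, \omega^{kn}, \qquad k = 0,\dots,N-1.
\end{equation*}
Indices of $\hat{\bm{x}}$ are read modulo $N$, so for $k=0$ the index $N-k$ means $0$. This is the only notational point to fix before the computation.

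First, I will compute $\overline{\hat{x}_{N-k}}$. Since each $x_n$ is real, conjugation acts only on the roots of unity, and $\overline{\omega} = \omega^{-1}$. This gives
\begin{equation*}
\overline{\hat{x}_{N-k}} = \frac{1}{\sqrt{N}} \sum_{n=0}^{N-1} x_n\, \omega^{-(N-k)n} = \frac{1}{\sqrt{N}} \sum_{n=0}^{N-1} x_n\, \omega^{-Nn}\,\omega^{kn}.
\end{equation*}
Because $\omega^{N} = 1$, the factor $\omega^{-Nn}$ equals $1$, and the sum reduces to $\hat{x}_k$. This proves $\hat{x}_k = \overline{\hat{x}_{N-k}}$ for all $k$.

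Second, I will obtain the two real coefficients as special cases.
\begin{itemize}
\item For $k=0$, every $\omega^{0\cdot n}=1$, so $\hat{x}_0 = \tfrac{1}{\sqrt{N}}\sum_n x_n$. This is a sum of real numbers and hence real. It is also consistent with the symmetry $\hat{x}_0 = \overline{\hat{x}_0}$.
\item For $k = N/2$, which requires $N$ even (true here since $N=2PB$), the symmetry gives $N-k = k$. Hence $\hat{x}_{N/2} = \overline{\hat{x}_{N/2}}$, so $\hat{x}_{N/2}$ is real. Equivalently, $\omega^{nN/2} = (-1)^n$, so $\hat{x}_{N/2} = \tfrac{1}{\sqrt{N}}\sum_n (-1)^n x_n \in \mathbb{R}$.
\end{itemize}

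There is no real obstacle in this argument. The only care needed is the modular-index convention at $k=0$ and the requirement that $N$ be even for the $N/2$ case.
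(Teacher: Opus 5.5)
Your proposal is correct and follows essentially the same route as the paper: conjugate the DFT sum using the realness of $x_n$, then use $\omega^{N}=1$ to identify the result with the coefficient at index $-k \bmod N$, and read off $\hat{x}_0$ and $\hat{x}_{N/2}$ as self-conjugate special cases. Your explicit handling of the mod-$N$ index convention at $k=0$ and of the requirement that $N$ be even for $k=N/2$ makes precise two points the paper leaves implicit.
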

\begin{proof}
\begin{equation}
\hat{x}_k = \frac{1}{\sqrt{N}}\sum_{j=0}^{N-1} x_j e^{-2\pi\frac{jk}{N}i} = \overline{\frac{1}{\sqrt{N}}\sum_{j=0}^{N-1} x_j e^{2\pi\frac{jk}{N}i}} = \overline{\frac{1}{\sqrt{N}}\sum_{j=0}^{N-1} x_j e^{-2\pi\frac{-jk}{N}i}} = \overline{\hat{x}_{-k \bmod{N}}}
\end{equation}
In particular, $\hat{x}_0 = \overline{\hat{x}_0}$ and $\hat{x}_{N/2} = \overline{\hat{x}_{N/2}}$ are real valued.
\end{proof}

\section{Proof of Proposition~\ref{thm:F_ind}}
\label{app:proof_thm1}

\begingroup
\renewcommand\thetheorem{\ref{thm:F_ind}}
\begin{proposition}
The mapping $\mathcal{F}$ is a bijection from $\mathbb{R}^N$ to $\mathbb{C}^{N/2}$. Moreover, if $\bm{z} \sim \mathcal{CN}(\bm{0}, \bm{I}_{N/2})$, then $\mathcal{F}^{-1}(\bm{z}) \sim \mathcal{N}(\bm{0}, \bm{I}_N)$.
\end{proposition}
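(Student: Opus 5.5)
The approach is to view $\mathcal{F}$ as a real-linear map between two real vector spaces of dimension $N$ and show that it is a scaled orthogonal map. Identify $\mathbb{C}^{N/2}$ with $\mathbb{R}^N$ via real and imaginary parts. By Lemma~\ref{lem:dft_properties}, $\hat{\bm{x}} = \bm{F}\bm{x}$ satisfies $\hat{x}_k = \overline{\hat{x}_{N-k}}$, and $\hat{x}_0,\hat{x}_{N/2}\in\mathbb{R}$. Hence $\bm{y} = \mathcal{F}(\bm{x})$ is a well-defined $\mathbb{R}$-linear function of $\bm{x}$: it is a composition of the linear map $\bm{F}$ with coordinate selection and the rescaling $(\hat{x}_0,\hat{x}_{N/2}) \mapsto (\hat{x}_0+i\hat{x}_{N/2})/\sqrt{2}$.

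The key step is the norm identity $\|\mathcal{F}(\bm{x})\|_2^2 = \tfrac12\|\bm{x}\|_2^2$. Since $\bm{F}$ is unitary, Parseval gives
\[
\|\bm{x}\|_2^2 = \|\hat{\bm{x}}\|_2^2 = \hat{x}_0^2 + \hat{x}_{N/2}^2 + 2\sum_{k=1}^{N/2-1}|\hat{x}_k|^2,
\]
where conjugate symmetry pairs $k$ with $N-k$. On the other side,
\[
\|\bm{y}\|_2^2 = \tfrac12(\hat{x}_0^2+\hat{x}_{N/2}^2) + \sum_{k=1}^{N/2-1}|\hat{x}_k|^2,
\]
which is exactly half. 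Thus $\mathcal{F}$, viewed as a real $N\times N$ matrix $\bm{A}$, satisfies $\bm{A}^\top\bm{A} = \tfrac12\bm{I}_N$. It is therefore injective, and since the dimensions agree it is a bijection. I would also give the inverse explicitly: rebuild $\hat{\bm{x}}$ by setting $\hat{x}_0 = \sqrt2\,\mathrm{Re}\,y_0$, $\hat{x}_{N/2}=\sqrt2\,\mathrm{Im}\,y_0$, $\hat{x}_k=y_k$, and $\hat{x}_{N-k}=\overline{y_k}$. This vector is Hermitian-symmetric, so $\bm{F}^{-1}\hat{\bm{x}}$ is real, which confirms surjectivity directly.

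For the distributional claim, let $\bm{z}\sim\mathcal{CN}(\bm{0},\bm{I}_{N/2})$. Its real representation $\bm{u}\in\mathbb{R}^N$ consists of i.i.d.\ $\mathcal{N}(0,\tfrac12)$ entries, i.e.\ $\bm{u}\sim\mathcal{N}(\bm{0},\tfrac12\bm{I}_N)$. Then $\mathcal{F}^{-1}(\bm{z}) = \bm{A}^{-1}\bm{u}$ with $\bm{A}^{-1} = 2\bm{A}^\top$, because $\bm{A}^\top\bm{A}=\tfrac12\bm{I}$ implies $\bm{A}\bm{A}^\top=\tfrac12\bm{I}$ for a square matrix. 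A linear image of a Gaussian is Gaussian with covariance
\[
\bm{A}^{-1}\,\tfrac12\bm{I}\,\bm{A}^{-\top} = \tfrac12\cdot 4\,\bm{A}^\top\bm{A} = \tfrac12\cdot4\cdot\tfrac12\bm{I} = \bm{I}_N,
\]
and its mean is $\bm{0}$. So $\mathcal{F}^{-1}(\bm{z})\sim\mathcal{N}(\bm{0},\bm{I}_N)$.

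The main obstacle is careful bookkeeping in the Parseval step. The two real coefficients $\hat{x}_0$ and $\hat{x}_{N/2}$ are each counted once, not twice, and the $1/\sqrt2$ in the definition of $y_0$ is exactly what makes the scaling uniform at $\tfrac12$ across all real coordinates. Without uniform scaling the covariance would not be isotropic. Everything else is standard linear algebra.
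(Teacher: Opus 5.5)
Your proof is correct, but it takes a different route from the paper's. The paper proves bijectivity only by writing down the inverse: the Hermitian-symmetric extension of $\bm{y}$ followed by $\bm{F}^{\dagger}$. It proves Gaussianity in the complex spectral domain. It builds $\hat{\bm{x}}$ from $\bm{z}$ and checks entry by entry that $\mathbb{E}[\hat{\bm{x}}\hat{\bm{x}}^{\dagger}]=\bm{I}_N$. That check needs $\mathbb{E}[z_k^2]=0$ to kill the cross term between $\hat{x}_k$ and $\hat{x}_{N-k}$, and the variance $\tfrac12$ of $\Re z_0,\Im z_0$ to offset the $\sqrt2$ on the DC and Nyquist bins. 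The paper then pushes this through the unitary $\bm{F}^{\dagger}$ to get $\mathrm{Cov}(\bm{x})=\bm{I}_N$. The norm identity appears in the paper only afterwards, as Proposition~\ref{thm:F_ind2}.

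You make that norm identity the central lemma. From $\|\bm{A}\bm{x}\|_2^2=\tfrac12\|\bm{x}\|_2^2$ for all $\bm{x}$, polarization (using symmetry of $\bm{A}^{\top}\bm{A}$, worth saying explicitly) gives $\bm{A}^{\top}\bm{A}=\tfrac12\bm{I}_N$. This yields injectivity, hence bijectivity by dimension count. It also reduces the distributional claim to a one-line covariance computation, because the real representation of $\mathcal{CN}(\bm{0},\bm{I}_{N/2})$ is $\mathcal{N}(\bm{0},\tfrac12\bm{I}_N)$, which encodes the covariance and the vanishing pseudo-covariance at once.

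Your route is more structural and proves Proposition~\ref{thm:F_ind2} along the way. The paper's is more explicit in spectral coordinates and shows exactly where circularity of the complex Gaussian is used.
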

\addtocounter{theorem}{-1}
\endgroup

\begin{proof}
Let $N$ be even and let $\bm{F}\in\mathbb{C}^{N\times N}$ be the unitary DFT.
For $\bm{x}\in\mathbb{R}^N$ write $\hat{\bm{x}}=\bm{F}\bm{x}$ and define $\mathcal{F}:\mathbb{R}^N\to\mathbb{C}^{N/2}$ by
\begin{equation}
\label{eq:def_F_ind_appendix}
y_0 = \frac{\hat{x}_0}{\sqrt{2}} + \frac{\hat{x}_{N/2}}{\sqrt{2}}i,
\qquad
y_k = \hat{x}_k ,\quad k=1,\dots,\tfrac{N}{2}-1 .
\end{equation}

\textbf{Bijectivity. }
Given $\bm{y}\in\mathbb{C}^{N/2}$, define $\hat{\bm{x}}\in\mathbb{C}^{N}$ by
\begin{equation}
\label{eq:inverse_hatx}
\hat{x}_0=\sqrt{2}\ \Re(y_0),\quad
\hat{x}_{N/2}=\sqrt{2}\ \Im(y_0),\quad
\hat{x}_k=y_k,\quad
\hat{x}_{N-k}=\overline{y_k}\quad (k=1,\dots,\tfrac{N}{2}-1),
\end{equation}
and put
\begin{equation}
\label{eq:inverse_spatial}
\bm{x} = \bm{F}^{\dagger} \hat{\bm{x}} \in \mathbb{R}^N,
\end{equation}
where $\bm{F}^{\dagger}$ denotes the conjugate transpose of $\bm{F}$. Since $\bm{F}$ is unitary, we have $\bm{F}^{\dagger} = \bm{F}^{-1}$.

\Eqref{eq:inverse_hatx} enforces Hermitian symmetry of $\hat{\bm{x}}$, whence $\bm{x}$ in \eqref{eq:inverse_spatial} is real. It is immediate from \eqref{eq:def_F_ind_appendix}–\eqref{eq:inverse_hatx} that this construction inverts \eqref{eq:def_F_ind_appendix}; hence $\mathcal{F}^{-1}$ exists and is given by \eqref{eq:inverse_hatx}–\eqref{eq:inverse_spatial}. Therefore $\mathcal{F}$ is a bijection.

\textbf{Gaussian Preservation.}
Let $\bm{z}\sim\mathcal{CN}(\bm{0},\bm{I}_{N/2})$ with independent elements, construct $\hat{\bm{x}}\in\mathbb{C}^{N}$ by
\begin{equation}
\hat{x}_0=\sqrt{2}\,\Re(z_0),\quad
\hat{x}_{N/2}=\sqrt{2}\,\Im(z_0),\quad
\hat{x}_k=z_k,\quad
\hat{x}_{N-k}=\overline{z_k}\ \ (k=1,\dots,\tfrac{N}{2}-1),
\end{equation}
and set $\bm{x}=\bm{F}^{\dagger}\hat{\bm{x}}\in\mathbb{R}^N$. Since $\Re(z_0),\Im(z_0)\stackrel{\text{i.i.d.}}{\sim}\mathcal{N}(0,\tfrac12)$, $E[z_k]=0$, $E[z_k\overline{z_\ell}]=\delta_{k\ell}$, and $E[z_k^2]=0$, $\hat{\bm{x}}$ is jointly Gaussian and Hermitian-symmetric. We verify the spectral covariance element-wise: variances
\begin{equation}
\mathbb{E}[\,\hat{x}_0\overline{\hat{x}_0}\,]=1,\qquad
\mathbb{E}[\,\hat{x}_{N/2}\overline{\hat{x}_{N/2}}\,]=1,\qquad
\mathbb{E}[\,\hat{x}_k\overline{\hat{x}_k}\,]=\mathbb{E}[\,\hat{x}_{N-k}\overline{\hat{x}_{N-k}}\,]=1,
\end{equation}
off-diagonals across different indices vanish by independence, within each conjugate pair $\{k,N-k\}$
\begin{equation}
\mathbb{E}[\,\hat{x}_k\,\overline{\hat{x}_{N-k}}\,]=\mathbb{E}[\,z_k^2\,]=0,
\end{equation}
and within the real coefficients is zero by independence,
\begin{equation}
\mathbb{E}[\,\hat{x}_0\,\overline{\hat{x}_{N/2}}\,]=\mathbb{E}[\,\hat{x}_{N/2}\,\overline{\hat{x}_0}\,]=0,
\end{equation}
as are all real-complex cross terms with $k\in\{1,\dots,\tfrac{N}{2}-1\}$. Hence
\begin{equation}
\mathbb{E} \big[\hat{\bm{x}}\hat{\bm{x}}^{\!\dagger}\big]=\bm{I}_N.
\end{equation}
Unitarity of $\bm{F}$ gives
\begin{equation}
\mathrm{Cov}(\bm{x})
=\mathbb{E} \big[\bm{x}\bm{x}^{\!\top}\big]
=\Re\!\left(\mathbb{E} \big[\bm{x}\bm{x}^{\!\dagger}\big]\right)
=\Re\!\left(\bm{F}^{\dagger} \mathbb{E} \big[\hat{\bm{x}}\hat{\bm{x}}^{\!\dagger}\big]\bm{F}\right)
=\Re\!\left(\bm{F}^{\dagger}\bm{I}_N\bm{F}\right)
=\bm{I}_N.
\end{equation}
Therefore, $\bm{x}$ is a real jointly Gaussian vector with diagonal covariance $\bm{I}_N$.
For \emph{real} jointly Gaussian vectors, a diagonal covariance implies independence. Hence, the elements of $\bm{x}$ are i.i.d.\ $\mathcal{N}(0,1)$.
Equivalently, $\mathcal{F}^{-1}(\bm{z})\sim\mathcal{N}(\bm{0},\bm{I}_N)$ with independent elements.

\end{proof}

\section{Proof of Proposition~\ref{thm:F_ind2}}
\label{app:proof_thm2}

\begingroup
\renewcommand\thetheorem{\ref{thm:F_ind2}}
\begin{proposition}
The mapping $\mathcal{F}^{-1}$ is $\mathbb{R}$-linear, and for any $\bm{z} \in \mathbb{C}^{N/2}$, $\|\mathcal{F}^{-1}(\bm{z})\|_2^2 = 2\|\bm{z}\|_2^2$.
\end{proposition}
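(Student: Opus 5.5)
We will work directly with the explicit formula for $\mathcal{F}^{-1}$ obtained in the proof of Proposition~\ref{thm:F_ind}, namely equations~\ref{eq:inverse_hatx}--\ref{eq:inverse_spatial}. The inverse factors as a composition $\mathcal{F}^{-1} = \bm{F}^{\dagger}\circ H$. Here $H:\mathbb{C}^{N/2}\to\mathbb{C}^{N}$ is the Hermitian-extension map
\[
\hat{x}_0=\sqrt{2}\,\Re(z_0),\quad \hat{x}_{N/2}=\sqrt{2}\,\Im(z_0),\quad \hat{x}_k=z_k,\quad \hat{x}_{N-k}=\overline{z_k}
\]
for $k=1,\dots,N/2-1$. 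The plan is to prove each claim for $H$ and for $\bm{F}^{\dagger}$ separately, and then combine them.

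\textbf{$\mathbb{R}$-linearity.} Each output coordinate of $H$ is one of the maps $z\mapsto \sqrt{2}\Re(z_0)$, $z\mapsto\sqrt{2}\Im(z_0)$, $z\mapsto z_k$, or $z\mapsto\overline{z_k}$. Each of these is additive and commutes with multiplication by real scalars, since $\overline{a z}=a\overline{z}$ and $\Re(az)=a\Re(z)$ for $a\in\mathbb{R}$. Hence $H$ is $\mathbb{R}$-linear. Conjugation is not $\mathbb{C}$-linear, so we should not claim more than this. The matrix $\bm{F}^{\dagger}$ is $\mathbb{C}$-linear and therefore $\mathbb{R}$-linear. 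A composition of $\mathbb{R}$-linear maps is $\mathbb{R}$-linear, which settles the first claim. We also note that the output of $\mathcal{F}^{-1}$ is real by Proposition~\ref{thm:F_ind}, so the codomain $\mathbb{R}^N$ is consistent with this.

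\textbf{Norm identity.} Because $\bm{F}$ is unitary, so is $\bm{F}^{\dagger}$, and hence $\|\mathcal{F}^{-1}(\bm{z})\|_2^2=\|H(\bm{z})\|_2^2$. We then expand $\|H(\bm{z})\|_2^2$ term by term:
\[
\|H(\bm{z})\|_2^2 = 2\Re(z_0)^2+2\Im(z_0)^2+\sum_{k=1}^{N/2-1}\bigl(|z_k|^2+|\overline{z_k}|^2\bigr) = 2|z_0|^2+2\sum_{k=1}^{N/2-1}|z_k|^2 = 2\|\bm{z}\|_2^2.
\]

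The only step needing care is bookkeeping the indices. The sets $\{0\}$, $\{N/2\}$, $\{1,\dots,N/2-1\}$ and $\{N/2+1,\dots,N-1\}$ must partition $\{0,\dots,N-1\}$, so that no coefficient of $\hat{\bm{x}}$ is counted twice or missed. This holds because $N$ is even, and it is also why the $\sqrt{2}$ factors in the definition of $y_0$ exactly balance the doubling of the paired coefficients. Beyond this check, the argument is routine.
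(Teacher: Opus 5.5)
Your proof is correct. It uses the same core computation as the paper: unitarity of the DFT, plus the fact that each conjugate pair contributes $2|z_k|^2$ while the $\sqrt{2}$ factors make the DC and Nyquist bins contribute $2|z_0|^2$. The difference is that you run it in the opposite direction.

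The paper proves both properties for the forward map $\mathcal{F}$. Linearity comes from linearity of the DFT followed by coordinate selection. The norm relation comes from Parseval followed by pairing bins $k$ and $N-k$, which invokes Lemma~\ref{lem:dft_properties} to get $|\hat{x}_{N-k}|=|\hat{x}_k|$ and the realness of $\hat{x}_0,\hat{x}_{N/2}$ needed for $|y_0|^2=\tfrac12|\hat{x}_0|^2+\tfrac12|\hat{x}_{N/2}|^2$. The paper then transfers both facts to $\mathcal{F}^{-1}$ using only bijectivity: the inverse of an $\mathbb{R}$-linear bijection is $\mathbb{R}$-linear, and substituting $\bm{x}=\mathcal{F}^{-1}(\bm{z})$ into $\|\bm{x}\|_2^2=2\|\mathcal{F}(\bm{x})\|_2^2$ gives the claim.

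Your factorization $\mathcal{F}^{-1}=\bm{F}^{\dagger}\circ H$ makes the conjugate pairing and the realness of the DC/Nyquist entries hold by construction. So no symmetry lemma is needed inside the norm computation, and linearity follows from a direct composition argument rather than the abstract inverse argument. The price is that you depend on the explicit inverse formula from the proof of Proposition~\ref{thm:F_ind}: that $\bm{F}^{\dagger}H(\bm{z})$ is real and genuinely inverts $\mathcal{F}$. That formula rests on the same Hermitian-symmetry facts, so the dependence is relocated rather than removed. Also note that your norm identity for $\bm{F}^{\dagger}\circ H$ holds on all of $\mathbb{C}^{N/2}$ without using realness of the output. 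Realness is needed only to identify this map with $\mathcal{F}^{-1}$ as a map into $\mathbb{R}^N$.
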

\addtocounter{theorem}{-1}
\endgroup

\begin{proof}
To show the $\mathbb{R}$-linearity of $\mathcal{F}^{-1}$, we first show the $\mathbb{R}$-linearity of $\mathcal{F}$.

\textbf{$\mathbb{R}$-linearity of $\mathcal{F}$. } Let $\bm{x},\bm{y}\in\mathbb{R}^N$ and $a,b\in\mathbb{R}$. Write $\hat{\bm{x}}=\bm{F}\bm{x}$ and $\hat{\bm{y}}=\bm{F}\bm{y}$. By linearity of the DFT,
\begin{equation}
\bm{F}(a\bm{x}+b\bm{y}) \;=\; a\,\bm{F}\bm{x}+b\,\bm{F}\bm{y}.
\end{equation}

Recall the definition of $\mathcal{F}$ (\eqref{eq:CN_map_def}): for $k=1,\dots,\tfrac{N}{2}-1$, $(\mathcal{F}(\bm{x}))_k=\hat{x}_k$, $(\mathcal{F}(\bm{x}))_{0}=\frac{1}{\sqrt{2}}\hat{x}_0+\,\frac{1}{\sqrt{2}}\hat{x}_{N/2}\,i$, and analogously for $\bm{y}$. Each component of $\mathcal{F}$ is a $\mathbb{R}$-linear combination of elements of $\hat{\bm{x}}$ (or $\hat{\bm{y}}$). Hence
\begin{equation}
\mathcal{F}(a\bm{x}+b\bm{y})
\;=\;
a\,\mathcal{F}(\bm{x}) \;+\; b\,\mathcal{F}(\bm{y}).
\end{equation}
Therefore $\mathcal{F}$ is $\mathbb{R}$-linear.

\textbf{$\mathbb{R}$-linearity of $\mathcal{F}^{-1}$. }
Since $\mathcal{F}$ is bijective (Proposition~\ref{thm:F_ind}), for arbitrary $\bm{u},\bm{v}\in\mathbb{C}^{N/2}$ there exist unique $\bm{x},\bm{y}\in\mathbb{R}^N$ with $\bm{u}=\mathcal{F}(\bm{x})$ and $\bm{v}=\mathcal{F}(\bm{y})$. Using the $\mathbb{R}$-linearity of $\mathcal{F}$,
\begin{equation}
\bm{u}+\bm{v} \;=\; \mathcal{F}(\bm{x})+\mathcal{F}(\bm{y}) \;=\; \mathcal{F}(\bm{x}+\bm{y}),
\qquad
a\bm{u}+b\bm{v} \;=\; \mathcal{F}(a\bm{x}+b\bm{y}),
\end{equation}
for all $a,b\in\mathbb{R}$. Apply $\mathcal{F}^{-1}$ to both equalities to obtain
\begin{equation}
\mathcal{F}^{-1}(a\bm{u}+b\bm{v})
\;=\;
\mathcal{F}^{-1}\!\big(\mathcal{F}(a\bm{x}+b\bm{y})\big)
\;=\;
a\,\bm{x}+b\,\bm{y}
\;=\;
a\,\mathcal{F}^{-1}(\bm{u})+b\,\mathcal{F}^{-1}(\bm{v}).
\end{equation}
Thus $\mathcal{F}^{-1}$ is $\mathbb{R}$-linear.

\textbf{Norm Relation. }  We first introduce Parseval's identity:
\begin{equation}
\label{eq:parseval}
\|\bm{x}\|_2^2 \;=\; \|\hat{\bm{x}}\|_2^2 .
\end{equation}
\emph{Proof of \eqref{eq:parseval}.} Let $\bm{F}\in\mathbb{C}^{N\times N}$ be the unitary DFT, so $\bm{F}^{\dagger}\bm{F}=\bm{I}_N$. Then
\begin{equation}
\|\hat{\bm{x}}\|_2^2
= \hat{\bm{x}}^{\dagger}\hat{\bm{x}}
= (\bm{F}\bm{x})^{\dagger}(\bm{F}\bm{x})
= \bm{x}^{\dagger}\bm{F}^{\dagger}\bm{F}\bm{x}
= \bm{x}^{\dagger}\bm{x}
= \|\bm{x}\|_2^2 .
\end{equation}
This proves \eqref{eq:parseval}.

\medskip
\noindent
By the definition of $\mathcal{F}$ (cf.\ \eqref{eq:CN_map_def}), write $\bm{y}=\mathcal{F}(\bm{x})$ from $\hat{\bm{x}}$ as
\begin{equation}
\label{eq:F_forward_components}
y_0 \;=\; \frac{\hat{x}_0}{\sqrt{2}} \;+\; \frac{\hat{x}_{N/2}}{\sqrt{2}}\, i,
\qquad
y_k \;=\; \hat{x}_k \quad (k=1,\dots,\tfrac{N}{2}-1).
\end{equation}
We now connect $\|\bm{y}\|_2^2$ and $\|\bm{x}\|_2^2$ step by step.
First, expand $\|\hat{\bm{x}}\|_2^2$ by pairing the conjugate-symmetric bins:
\begin{equation}
\label{eq:hatx_pairing}
\|\hat{\bm{x}}\|_2^2
= |\hat{x}_0|^2 + |\hat{x}_{N/2}|^2 + \sum_{k=1}^{N/2-1} \Big(|\hat{x}_k|^2 + |\hat{x}_{N-k}|^2\Big).
\end{equation}
Using \eqref{eq:F_forward_components},
\begin{equation}
\label{eq:DC_Nyq_block}
|y_0|^2
= \Big|\tfrac{1}{\sqrt{2}}\hat{x}_0 + \tfrac{1}{\sqrt{2}}\hat{x}_{N/2}\, i\Big|^2
= \tfrac{1}{2}\,|\hat{x}_0|^2 + \tfrac{1}{2}\,|\hat{x}_{N/2}|^2
\quad\Longrightarrow\quad
|\hat{x}_0|^2 + |\hat{x}_{N/2}|^2 \;=\; 2\,|y_0|^2 .
\end{equation}
For $k=1,\dots,\tfrac{N}{2}-1$, we have $y_k=\hat{x}_k$, and by Hermitian symmetry $|\hat{x}_{N-k}|=|\hat{x}_k|$, hence
\begin{equation}
\label{eq:pair_block}
|\hat{x}_k|^2 + |\hat{x}_{N-k}|^2 \;=\; 2\,|y_k|^2 \qquad (k=1,\dots,\tfrac{N}{2}-1).
\end{equation}
Substituting \eqref{eq:DC_Nyq_block} and \eqref{eq:pair_block} into \eqref{eq:hatx_pairing} yields
\begin{equation}
\label{eq:hatx_to_y}
\|\hat{\bm{x}}\|_2^2
= 2\,|y_0|^2 + \sum_{k=1}^{N/2-1} 2\,|y_k|^2
= 2\,\|\bm{y}\|_2^2 .
\end{equation}
Combining \eqref{eq:parseval} and \eqref{eq:hatx_to_y} gives
\begin{equation}
\label{eq:y_vs_x_norm}
\|\bm{x}\|_2^2 \;=\; \|\hat{\bm{x}}\|_2^2 \;=\; 2\,\|\bm{y}\|_2^2
\quad\Longleftrightarrow\quad
\|\bm{y}\|_2^2 \;=\; \tfrac{1}{2}\,\|\bm{x}\|_2^2 .
\end{equation}

\medskip
\noindent
\textbf{Corollary (Inverse map).} Let $\bm{z}\in\mathbb{C}^{N/2}$ and set $\bm{x}=\mathcal{F}^{-1}(\bm{z})$. Then $\bm{z}=\mathcal{F}(\bm{x})$ and \eqref{eq:y_vs_x_norm} with $\bm{y}=\bm{z}$ gives
\begin{equation}
\|\bm{x}\|_2^2 \;=\; 2\,\|\bm{z}\|_2^2
\qquad\text{i.e.,}\qquad
\|\mathcal{F}^{-1}(\bm{z})\|_2^2 \;=\; 2\,\|\bm{z}\|_2^2 .
\end{equation}

\end{proof}

\section{Projection onto the White Gaussian Noise Feasible Set in the Spectral Domain}
\label{app:algorithm_closest_Cb}

We derive an $\mathcal{O}(N \log B)$ algorithm ($N = 2PB$) that computes the closest vector $\bm{\dot{y}} \in \mathcal{G}_{\mathbb{C}}$ (\eqref{eq:spectral_feasible_set}) to a given input $\bm{y} \in \mathbb{C}^{N/2}$ in Euclidean distance.

As edge cases, situations may arise where the projection solution is not uniquely defined. The first occurs when more than $78.5\%$ of the elements in $\bm{y}^{(p)}$ (i.e., $\tfrac{\pi}{4}B$ entries) have \emph{exactly equal} magnitudes, and that shared magnitude is the largest within the block. The second occurs when an element has \emph{exactly zero} magnitude. Although both scenarios are exceedingly rare under 64-bit floating-point arithmetic—unless artificially constructed—we address them by perturbing each such $y_j$ with a small complex Gaussian noise $\delta \epsilon$, where $\delta = 10^{-6}$ and $\epsilon \sim \mathcal{CN}(0, 1)$, thereby ensuring that the projection remains well-defined and robust.

Let us now derive the closest vector $\bm{\dot{y}} \in \mathcal{G}_{\mathbb{C}}$ mathematically.

Given an input $\bm{y} \in \mathbb{C}^{N/2}$, we aim to solve the following optimization problem:
\begin{align*}
    & \bm{\dot{y}} = \underset{\tilde{\bm{y}} \in \mathbb{C}^{N/2}}{\text{argmin}} \ \|\tilde{\bm{y}} - \bm{y}\|_2^2 \\
    & \text{subject to} \quad \|\tilde{\bm{y}}^{(p)}\|_1 = \tfrac{\sqrt{\pi}}{2}B, \quad \|\tilde{\bm{y}}^{(p)}\|_2^2 = B, \quad \text{for all } p = 0, \dots, P - 1,
\end{align*}
where $\bm{\dot{y}}^{(p)} \in \mathbb{C}^B$ denotes the $p$-th block sub-vector of $\bm{\dot{y}}$.

To solve this problem, we introduce the following Lagrangian function associated with the constraints:
\begin{equation}
\mathcal{L}(\bm{\dot{y}}, \lambda_1, \lambda_2)
= \frac{1}{2} \left\| \bm{\dot{y}} - \bm{y} \right\|_2^2
+ \sum_{p=0}^{P-1} \lambda_{1,p} \left( \left\| \bm{\dot{y}}^{(p)} \right\|_1 - \tfrac{\sqrt{\pi}}{2}B \right)
+ \sum_{p=0}^{P-1} \lambda_{2,p} \left( \left\| \bm{\dot{y}}^{(p)} \right\|_2^2 - B \right),
\end{equation}

To compute the optimality condition, we differentiate the Lagrangian with respect to $\bm{\dot{y}}$. Let $\bm{I}_p \in \mathbb{C}^{\tfrac{N}{2} \times \tfrac{N}{2}}$ denote a diagonal \emph{block indicator matrix} that selects the $p$-th block of $\bm{\dot{y}}$, i.e., $\left[\bm{I}_p\right]_{jj} = 1$ if $j \in \{pB, \dots, pB + B - 1\}$ and $0$ otherwise. Then, the gradient of the Lagrangian becomes:
\begin{equation}
\nabla_{\bm{\dot{y}}} \mathcal{L}(\bm{\dot{y}}, \lambda_1, \lambda_2)
= \bm{\dot{y}} - \bm{y}
+ \sum_{p=0}^{P-1} \lambda_{1,p} \bm{I}_p \nabla_{\bm{\dot{y}}} \left\| \bm{\dot{y}} \right\|_1
+ \sum_{p=0}^{P-1} \lambda_{2,p} \bm{I}_p \bm{\dot{y}}.
\end{equation}

To derive the optimality condition, we set the gradient of the Lagrangian to zero, i.e., $\nabla_{\bm{\dot{y}}} \mathcal{L}(\bm{\dot{y}}, \lambda_1, \lambda_2) = 0$. Let us now focus on the $j$-th coordinate of the $p$-th block, i.e., for indices $j \in \{pB, \dots, pB + B - 1\}$. The first-order condition for each such $j$ becomes:
\begin{equation}
\left(1 + \lambda_{2,p} \right) \dot{y}_j + \lambda_{1,p} \nabla_{\dot{y}_j} \left| \dot{y}_j \right| = y_j.
\end{equation}

If $\dot{y}_j \neq 0$, the subgradient of the complex $\ell_1$-norm simplifies to:
\begin{equation}
\nabla_{\dot{y}_j} \left| \dot{y}_j \right| = \frac{\dot{y}_j}{|\dot{y}_j|},
\end{equation}
so the optimality condition becomes:
\begin{equation}
\label{eq:opt_condition1}
\left(1 + \lambda_{2,p} \right) \dot{y}_j + \lambda_{1,p} \frac{\dot{y}_j}{|\dot{y}_j|} = y_j.
\end{equation}

Since the Lagrange multipliers $\lambda_{1,p}$ and $\lambda_{2,p}$ are real-valued, each optimal solution $\dot{y}_j$ can be expressed in polar form as $\dot{y}_j = s_j e^{i\theta_j}$, where $s_j \geq 0$ denotes the magnitude and $\theta_j$ the phase. If $y_j \neq 0$, we set $\theta_j = \arg(y_j)$, as the optimality condition~\eqref{eq:opt_condition1} implies that $\dot{y}_j$ and $y_j$ must share the same phase. If $y_j = 0$, then $\theta_j$ can be chosen arbitrarily, since the direction is unconstrained in this case.

To justify that $s_j \geq 0$ holds at the optimum, suppose by contradiction that the optimal solution is $\dot{y}_j = -s_j e^{i\theta_j}$ for some $s_j > 0$. Then, flipping the sign yields:
\begin{equation}
\left| y_j - (-s_j e^{i\theta_j}) \right|^2
= \left| y_j + s_j e^{i\theta_j} \right|^2
< \left| y_j - s_j e^{i\theta_j} \right|^2,
\end{equation}
which contradicts the assumption that $\dot{y}_j = -s_j e^{i\theta_j}$ is optimal, since both solutions have the same magnitude and thus satisfy the constraints equally well. Therefore, the optimal $s_j$ must be non-negative.

Substituting $\dot{y}_j = s_j e^{i\theta_j}$ into \eqref{eq:opt_condition1} yields:
\begin{equation}
\left(1 + \lambda_{2,p} \right) s_j + \lambda_{1,p} = |y_j|.
\end{equation}

Otherwise, $\dot{y}_j = 0$, we can still express it in polar form as $\dot{y}_j = s_j e^{i\theta_j}$ with $s_j = 0$ and arbitrary $\theta_j$.

Thus, regardless of whether $\dot{y}_j$ is zero or nonzero, we may uniformly express the solution in polar form as $\dot{y}_j = s_j e^{i\theta_j}$ with $s_j \geq 0$. This reformulation allows us to convert the original complex-valued projection problem into a real-valued optimization over the magnitudes $s_j$. For each block indexed by $p$, we solve:
\begin{equation}
\min_{\{s_j \geq 0\}} \sum_{j = pB}^{pB + B - 1} \left( s_j - |y_j| \right)^2
\quad \text{subject to} \quad
\sum_{j = pB}^{pB + B - 1} s_j = \tfrac{\sqrt{\pi}}{2} B, \quad
\sum_{j = pB}^{pB + B - 1} s_j^2 = B.
\end{equation}

To solve this constrained optimization problem, we introduce the Karush–Kuhn–Tucker (KKT) conditions. The corresponding Lagrangian is defined as:
\begin{equation}
\begin{aligned}
\mathcal{L}'(s, \lambda_1, \lambda_2, \bm{\tau})
=& \sum_{j = pB}^{pB + B - 1} \tfrac{1}{2} \left(s_j - |y_j| \right)^2 \\
&+ \lambda_1 \left( \sum_{j = pB}^{pB + B - 1} s_j - \tfrac{\sqrt{\pi}}{2} B \right)
+ \lambda_2 \left( \sum_{j = pB}^{pB + B - 1} s_j^2 - B \right)
+ \sum_{j = pB}^{pB + B - 1} \tau_j s_j,
\end{aligned}
\end{equation}
where $\tau_j \leq 0$ are the KKT multipliers associated with the non-negativity constraints $s_j \geq 0$.

Differentiating with respect to $s_j$ for $j = pB, \dots, pB + B - 1$, and applying the complementary slackness condition yields:
\begin{equation}
s_j - |y_j| + \lambda_1 + 2\lambda_2 s_j + \tau_j = 0,
\quad \tau_j \leq 0,
\quad \tau_j s_j = 0.
\end{equation}

First, consider the case~$\tau_j < 0$. Then, by complementary slackness, it must be that~$s_j = 0$. Substituting into the stationarity condition gives:
\begin{equation}
\tau_j = |y_j| - \lambda_1 < 0,
\end{equation}
which implies~$|y_j| < \lambda_1$.

On the other hand, if~$\tau_j = 0$, then the stationarity condition simplifies to:
\begin{equation}
(1 + 2\lambda_2)s_j = |y_j| - \lambda_1, \quad s_j \geq 0.
\end{equation}

Next, we analyze the structure of the optimal solution. Since the constraints enforce both~$\sum_j s_j = \frac{\sqrt{\pi}}{2} B$ and~$\sum_j s_j^2 = B$, at least two distinct $s_j$'s must be strictly positive.

Also, suppose that there exist two indices~$j_1, j_2 \in \{pB, \dots, pB + B - 1\}$ such that~$s_{j_1} > s_{j_2}$ but~$|y_{j_1}| < |y_{j_2}|$. Consider swapping their values while preserving all constraints. The change in the objective value would be:
\begin{align*}
&\left(s_{j_1} - |y_{j_1}|\right)^2 + \left(s_{j_2} - |y_{j_2}|\right)^2 > \left(s_{j_2} - |y_{j_1}|\right)^2 + \left(s_{j_1} - |y_{j_2}|\right)^2,
\end{align*}
since the squared error decreases when the larger~$s_j$ is matched with the larger~$|y_j|$. This contradicts optimality. Hence, the optimal values~$s_j$ must preserve the same ordering as the magnitudes~$|y_j|$ within each block.

Furthermore, the norm constraints imply that at least two of the~$s_j$ values must differ. Let~$s_{j_1} > s_{j_2} > 0$, then from the optimality condition:
\begin{align*}
(1 + \lambda_2)(s_{j_1} - s_{j_2})
= (|y_{j_1}| - \lambda_1) - (|y_{j_2}| - \lambda_1)
= |y_{j_1}| - |y_{j_2}| \geq 0.
\end{align*}
This implies~$1 + \lambda_2 \geq 0$, and thus
\begin{equation}
(1 + 2\lambda_2)s_j = \max\left\{|y_j| - \lambda_1,\, 0\right\} = \text{ReLU}\left(|y_j| - \lambda_1\right),
\end{equation}
where we used the fact that~$s_j = 0$ precisely when~$|y_j| < \lambda_1$. This compact expression captures both KKT branches in a unified form.

If~$1 + 2\lambda_2 = 0$, then the stationarity condition simplifies to:
\begin{equation}
0 = (1 + 2\lambda_2)s_j = |y_j| - \lambda_1,
\end{equation}
which implies that all nonzero~$s_j$ must satisfy~$|y_j| = \lambda_1 = \max_j |y_j|$. Let~$\mathcal{I}_{\text{max}}$ be the index set where this holds:
\begin{equation}
\mathcal{I}_{\text{max}} = \left\{ j \in \{pB, \dots, pB + B - 1\} ~\middle|~ |y_j| = \max_{j'} |y_{j'}| \right\}, \quad \text{and let } m = |\mathcal{I}_{\text{max}}|.
\end{equation}
The two constraints require:
\begin{align}
\sum_{j \in \mathcal{I}_{\text{max}}} s_j &= \tfrac{\sqrt{\pi}}{2} B, \label{eq:sum1}\\
\sum_{j \in \mathcal{I}_{\text{max}}} s_j^2 &= B. \label{eq:sum2}
\end{align}

To derive a lower bound on~$m$, we apply the Cauchy–Schwarz inequality:
\begin{equation}
\left( \sum_{j \in \mathcal{I}_{\text{max}}} s_j \right)^2
\;\leq\; m \cdot \sum_{j \in \mathcal{I}_{\text{max}}} s_j^2.
\end{equation}
Substituting from~\eqref{eq:sum1} and~\eqref{eq:sum2}, we obtain:
\begin{equation}
\label{eq:degenerate_case}
\left( \tfrac{\sqrt{\pi}}{2} B \right)^2 \leq m \cdot B
\quad \Rightarrow \quad
\frac{\pi}{4} B^2 \leq m B
\quad \Rightarrow \quad
\tfrac{\pi}{4} B \leq m.
\end{equation}

Thus, in order for the degenerate case~$1 + 2\lambda_2 = 0$ to admit a feasible solution, at least~$\tfrac{\pi}{4} \approx 78.5\%$ of the block entries (13 entries when $B = 16$) must have magnitudes exactly equal to~$\max_j |y_j|$.

In practice, this is highly unlikely to occur since $y_j$ are continuous values, unless they are artificially set. Nevertheless, to safeguard against this rare but theoretically possible edge case, we introduce a small perturbation to the input:
\begin{equation}
\bm{y}^{(p)} \;\leftarrow\; \bm{y}^{(p)} + \delta \bm{\epsilon}, \quad \text{where} \quad \delta = 10^{-6}, \quad \bm{\epsilon} \sim \mathcal{CN}(\bm{0}, \bm{I}),
\end{equation}
which ensures that ties in magnitude are broken and uniqueness of the solution is preserved under typical 64-bit floating-point arithmetic.

Now, we consider the general case where~$1 + 2\lambda_2 \neq 0$. In this case, the optimal magnitudes are given by:
\begin{equation}
s_j = \frac{1}{1 + 2\lambda_2} \cdot \text{ReLU}\left(|y_j| - \lambda_1\right).
\end{equation}

To enforce the constraints, we define the following two functions:
\begin{align}
p_1(\lambda_1) &:= \sum_{j = pB}^{pB + B - 1} \text{ReLU}\left(|y_j| - \lambda_1\right), \\
p_2(\lambda_1) &:= \sum_{j = pB}^{pB + B - 1} \text{ReLU}\left(|y_j| - \lambda_1\right)^2.
\end{align}
Substituting~$s_j$ into the constraint equations yields:
\begin{align}
\sum_j s_j &= \frac{1}{1 + 2\lambda_2} \cdot p_1(\lambda_1) = \tfrac{\sqrt{\pi}}{2} B, \label{eq:constraint1} \\
\sum_j s_j^2 &= \frac{1}{(1 + 2\lambda_2)^2} \cdot p_2(\lambda_1) = B. \label{eq:constraint2}
\end{align}

From \eqref{eq:constraint1} and \eqref{eq:constraint2}, we eliminate~$\lambda_2$ to derive the identity:
\begin{equation}
\label{eq:ratio_target}
\frac{p_1^2(\lambda_1)}{p_2(\lambda_1)} = \frac{\pi}{4} B.
\end{equation}
This relation provides an equation of~$\lambda_1$. Note that~$\frac{p_1^2(\lambda_1)}{p_2(\lambda_1)}$ is a decreasing function with respect to~$\lambda_1$ on the range~$\left(-\infty,\, \max_j \left|y_j\right|\right)$. The value~$\frac{p_1^2(\lambda_1)}{p_2(\lambda_1)}$ approaches~$B$ as~$\lambda_1 \to -\infty$, and approaches~$0$ as~$\lambda_1 \to \max_j \left|y_j\right|$.

For efficient optimal value search, we define~$\bm{w}$ to be the descending-sorted array of magnitudes within the block:
\begin{equation}
\bm{w} := \text{sort descending}\left(\left\{\, \left|y_{pB}\right|,\, \dots,\, \left|y_{pB + B - 1}\right| \,\right\} \right), \quad k = 0, \dots, B - 1.
\end{equation}
Define the cumulative sums
\begin{equation}
S_{1,k} := \sum_{l = 0}^{k} w_l, \quad S_{2,k} := \sum_{l = 0}^{k} w_l^2.
\end{equation}
Then, for~$\lambda^{(k)} \in [w_{k+1},\, w_k)$ (with~$w_B := -\infty$ for convenience), we compute:
\begin{equation}
p_1\left(\lambda^{(k)}\right) = \sum_{l=0}^{k} \left(w_l - \lambda^{(k)}\right) = S_{1,k} - (k + 1)\lambda^{(k)},
\end{equation}
\begin{equation}
p_2\left(\lambda^{(k)}\right) = \sum_{l=0}^{k} \left(w_l - \left(\lambda^{(k)}\right)\right)^2 = S_{2,k} - 2\lambda^{(k)} S_{1,k} + (k + 1)\left(\lambda^{(k)}\right)^2.
\end{equation}

Substituting into the constraint~$\tfrac{p_1^2(\lambda^{(k)})}{p_2(\lambda^{(k)})} = \tfrac{\pi}{4} B$ (letting~$\gamma := \tfrac{\pi}{4}$), we obtain:
\begin{equation}
\frac{p_1^2(\lambda^{(k)})}{p_2(\lambda^{(k)})} = \gamma B,
\end{equation}
\begin{equation}
p_1^2(\lambda^{(k)}) = \gamma B \cdot p_2(\lambda^{(k)}),
\end{equation}
\begin{equation}
\left(S_{1,k} - (k + 1)\lambda^{(k)}\right)^2 = \gamma B \cdot \left(S_{2,k} - 2\lambda^{(k)} S_{1,k} + (k + 1)\left(\lambda^{(k)}\right)^2\right).
\end{equation}

Rearranging yields the quadratic equation:
\begin{equation}
(k + 1)(\gamma B - k - 1)\left(\lambda^{(k)}\right)^2 - 2 S_{1,k}(\gamma B - k - 1)\lambda^{(k)} + \gamma B S_{2,k} - S_{1,k}^2 = 0.
\end{equation}

To ensure real roots, we evaluate the discriminant:
\begin{align}
\Delta &= S_{1,k}^2 (\gamma B - k - 1)^2 - (k + 1)(\gamma B - k - 1)\left(\gamma B S_{2,k} - S_{1,k}^2\right) \\
&= (\gamma B - k - 1)\left[S_{1,k}^2(\gamma B - k - 1) - (k + 1)(\gamma B S_{2,k} - S_{1,k}^2)\right] \\
&= (\gamma B - k - 1)\left[\gamma B S_{1,k}^2 - (k + 1)\gamma B S_{2,k}\right] \\
&= \gamma B(\gamma B - k - 1)\left(S_{1,k}^2 - (k + 1) S_{2,k}\right) \\
&= \gamma B(k + 1 - \gamma B)\left((k + 1) S_{2,k} - S_{1,k}^2\right).
\end{align}
We clarify that the term~$(k + 1) S_{2,k} - S_{1,k}^2$ is nonnegative by the Cauchy–Schwarz inequality:
\begin{equation}
\left(S_{1,k}\right)^2 = \left(\sum_{\ell=0}^{k} w_\ell\right)^2 \leq (k+1) \sum_{\ell=0}^{k} w_\ell^2 = (k+1) S_{2,k}.
\end{equation}
Therefore, real-valued solutions exist only when~$k+1 \geq \gamma B$. Otherwise, the solution falls under the degenerate case previously handled in~\eqref{eq:degenerate_case}.

Thus, the solution for~$\lambda^{(k)}$ is given by:
\begin{align}
\lambda^{(k)} &= \frac{S_{1,k} \pm \frac{\sqrt{\Delta}}{k + 1 - \gamma B}}{k + 1}, \\
&= \frac{S_{1,k} - \frac{\sqrt{\Delta}}{k + 1 - \gamma B}}{k + 1} \quad \left(\text{since } \frac{S_{1,k}}{k + 1} \geq w_k\right), \\
&= \frac{S_{1,k}}{k + 1} - \frac{\sqrt{\gamma B}}{k + 1} \sqrt{\frac{(k + 1) S_{2,k} - S_{1,k}^2}{k + 1 - \gamma B}}.
\end{align}

If $w_{k} > \lambda^{(k)} \geq w_{k+1}$, then this $\lambda^{(k)}$ is the solution $\lambda_1$ for \eqref{eq:ratio_target}. Once~$\lambda_1$ is determined, the corresponding~$\lambda_2$ is recovered from~\eqref{eq:constraint1}.

The final solution is given by:
\begin{align}
s_j &= \frac{1}{1+2\lambda_2} \text{ReLU}\left(\left|y_j\right| - \lambda_1\right) \\
&= \frac{\sqrt{\pi}B}{2 p_1(\lambda_1)}\text{ReLU}\left(\left|y_j\right| - \lambda_1\right) \quad (\text{by \eqref{eq:constraint1}}),
\end{align}
and
\begin{equation}
    \dot{y}_j = \frac{\sqrt{\pi}B}{2 p_1(\lambda_1)}\text{ReLU}\left(\left|y_j\right| - \lambda_1\right)e^{i\theta_j}
\end{equation}
where $\theta_j$ is $\arg(y_j)$ if $y_j \neq 0$, and a random angle otherwise.

In practice, exact zero $y_j$ in 64-bit floating-point representations are extremely rare unless artificially introduced. Nonetheless, when this occurs, we perturb $y_j$ by adding a small white Gaussian noise $\delta \epsilon$, where $\delta = 10^{-6}$ and $\epsilon \sim \mathcal{CN}(0, 1)$.

\subsection{Computational Perspective}
\label{app:algorithm_closest_Cb_time}

From a computational standpoint, the dominant cost arises from sorting the magnitudes~$|y_j|$ within each block, which takes~$\mathcal{O}(B \log B)$ time. Since there are~$P$ blocks in total, the overall sorting cost is~$\mathcal{O}(N \log B)$. All subsequent operations—such as computing prefix sums, evaluating the discriminant, and updating the filtered vector—can be performed in linear time per block. Therefore, the total runtime of the algorithm is~$\mathcal{O}(N \log B)$.

Moreover, because each block is processed independently, the entire procedure is naturally parallelizable and well-suited for GPU acceleration.

\section{Range of the Magnitude Spectrum}
\label{app:magnitude_range}

In this subsection, we analyze the range of $|y_j|$. Since the constraints are permutation-invariant, we consider $|y_0|$ without loss of generality. The extreme cases occur when $|y_0|$ is the largest among the $B$ elements in the same block, and the remaining $(B - 1)$ elements have equal magnitude. Let $a := |y_0|$ and $b := |y_1| = |y_2| = \cdots = |y_{B-1}|$. Under these assumptions, the following constraints hold (with $\gamma := \frac{\pi}{4}$):
\begin{equation}
    a + (B - 1) b = \sqrt{\gamma} B, \qquad a^2 + (B - 1) b^2 = B.
\end{equation}
Solving for $b$ in terms of $a$, we have:
\begin{equation}
    b = \frac{\sqrt{\gamma} B - a}{B - 1}.
\end{equation}
Substituting this into the second equation yields:
\begin{equation}
    a^2 + (B - 1) \left( \frac{\sqrt{\gamma} B - a}{B - 1} \right)^2 = B,
\end{equation}
which simplifies to:
\begin{equation}
    a^2 - 2\sqrt{\gamma} a + \gamma B - B + 1 = 0.
\end{equation}
Solving this quadratic in $a$, we obtain:
\begin{equation}
    a = \sqrt{\gamma} + \sqrt{(1 - \gamma)(B - 1)}, \quad
    b = \sqrt{\gamma} - \sqrt{\frac{1 - \gamma}{B - 1}}.
\end{equation}
Thus, the extreme values that $|y_0|$ can attain under these constraints are:
\begin{align}
    |y_0|_{\min} &= \sqrt{\gamma} - \sqrt{(1 - \gamma)(B - 1)}, \\
    |y_0|_{\max} &= \sqrt{\gamma} + \sqrt{(1 - \gamma)(B - 1)}.
\end{align}
For block size $B = 16$, we find $|y_0|_{\min} < 0$ and $|y_0|_{\max} \approx 2.68$. Therefore, $|y_j|$ lies in the range $[0.0, 2.68]$, which corresponds to approximately $99.92\%$ coverage under the $\chi_2 / \sqrt{2}$ distribution.

\section{Relationship between Flat Magnitude Spectrum and Zero Autocorrelation}
\label{app:flat_psd_autocorr}

Let $\bm{x}\in\mathbb{R}^N$ and let $\hat{\bm{x}}=\bm{F}\bm{x}$ be the \emph{unitary} DFT. Define the \emph{circular} (period-$N$) autocorrelation
\begin{equation}
\label{eq:circ-autocorr}
r_{\bm{x}}[\ell] \;=\; \frac{1}{N}\sum_{n=0}^{N-1} x_n\,x_{\,n-\ell\!\!\!\!\pmod{N}},\qquad \ell=0,\dots,N-1.
\end{equation}

According to discrete Wiener--Khinchin relations~\citep{khintchine1934}, the periodogram $|\hat{x}_k|^2$ and the circular autocorrelation form an exact DFT pair:
\begin{align}
\label{eq:wk-forward-exact}
|\hat{x}_k|^2
&= \sum_{\ell=0}^{N-1} r_{\bm{x}}[\ell]\;e^{-2\pi i k\ell/N}, \quad k=0,\dots,N-1,\\[0.25em]
\label{eq:wk-inverse-exact}
r_{\bm{x}}[\ell]
&= \frac{1}{N}\sum_{k=0}^{N-1} |\hat{x}_k|^2\;e^{2\pi i k\ell/N}, \quad \ell=0,\dots,N-1.
\end{align}

Taking expectations of both sides yields the Wiener--Khinchin relations in expectation:
\begin{align}
\label{eq:wk-forward-exp}
\mathbb{E}\!\left[|\hat{x}_k|^2\right]
&= \sum_{\ell=0}^{N-1} \mathbb{E}\!\left[r_{\bm{x}}[\ell]\right]\;e^{-2\pi i k\ell/N}, \quad k=0,\dots,N-1,\\[0.25em]
\label{eq:wk-inverse-exp}
\mathbb{E}\!\left[r_{\bm{x}}[\ell]\right]
&= \frac{1}{N}\sum_{k=0}^{N-1} \mathbb{E}\!\left[|\hat{x}_k|^2\right]\;e^{2\pi i k\ell/N}, \quad \ell=0,\dots,N-1.
\end{align}

Let $c:=\mathbb{E}[\,r_{\bm{x}}[0]\,]=\tfrac{1}{N}\,\mathbb{E}\!\left[\|\bm{x}\|_2^2\right]$. Then by \eqref{eq:wk-inverse-exp},
\begin{equation}
\label{eq:equiv-delta}
\mathbb{E}\!\left[r_{\bm{x}}[\ell]\right] \;=\; c\,\delta_N[\ell]
\quad\Longleftrightarrow\quad
\mathbb{E}\!\left[|\hat{x}_k|^2\right] \equiv c,
\end{equation}
where $\delta_N[\ell]=1$ if $\ell\equiv 0\pmod N$ and $0$ otherwise. Hence a flat expected magnitude spectrum is \emph{identical} to zero expected autocorrelation at all nonzero circular lags.

For a single finite realization, $|\hat{x}_k|^2$ fluctuates around its expectation, so perfect flatness is not observed. Nevertheless, promoting a well-spread (near-flat) magnitude distribution across frequencies reduces off-origin correlations via \eqref{eq:wk-inverse-exact}.

\clearpage
\newpage

\begin{figure*}[t]
\centering
\setlength{\tabcolsep}{1.6pt}
\small
\begin{tabular}{c c c || c c c}
\toprule
\multicolumn{2}{c}{\normalsize Distribution of $\|\bm{y}_{16}\|_1$} & & &  \multicolumn{2}{c}{\normalsize Distribution of $\|\bm{y}_{16}\|_2$} \\ [0.6em]
\includegraphics[height=3.6cm]{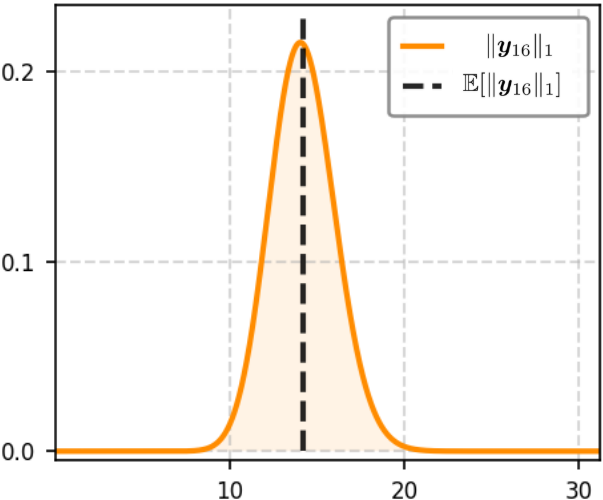} &
\includegraphics[height=3.6cm]{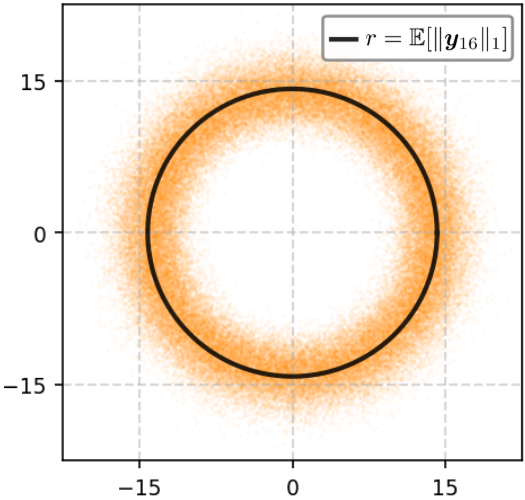} & & &
\includegraphics[height=3.6cm]{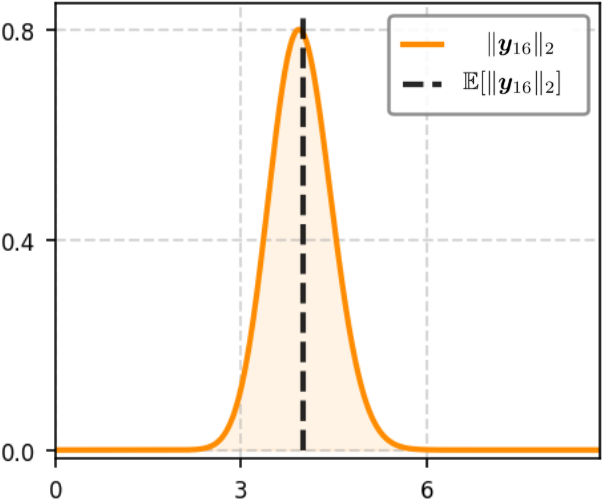} &
\includegraphics[height=3.6cm]{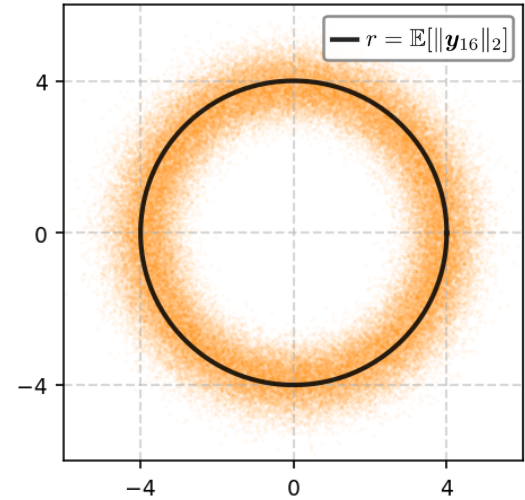} \\
~Probability Density & ~Visualization & & & ~Probability Density & ~Visualization \\
\bottomrule
\end{tabular}
\setlength{\abovecaptionskip}{3.0pt}
\caption{\textbf{Distributions of $\ell_1$ and $\ell_2$ norms of $\bm{y}_{16} \sim \mathcal{CN}(\bm{0}, \bm{I}_{16})$.} The radial plots visualize 100K samples, where radius indicates the norm and angle indicates $\arg (\bm{1}^\top \bm{y}_{16})$. In both cases, the norms are concentrated around their expected values.}
\vspace{-\baselineskip}
\label{fig:L1_L2_comparison}
\end{figure*}

\section{Effect of Block Size}
\label{app:effect_block_size}

In this section, we discuss the effect of the block size $B$ and present related experimental results.

Our constraints match the blockwise $\ell_1$ and $\ell_2$ norms to their theoretical expectations under the Gaussian distribution. As the block size $B$ decreases, the variance of these blockwise norms increases, and the feasible set moves farther away from the set of white Gaussian noise. Conversely, as $B$ increases, the blockwise norms of white Gaussian noise concentrate around their expectations, and the feasible set becomes closer to white Gaussian noise. At the same time, however, the feasible set becomes \emph{less tight} for larger $B$, since the number of constraints decreases with the number of blocks.

This trade-off can also be seen from the allowable magnitude range derived in Section~\ref{app:magnitude_range}. For example, the maximum allowed value of $|y_j|$ is $2.11$ when $B = 8$, but increases to $84.74$ when $B = 32{,}768$. This illustrates how the block size controls the tightness of the constraints around the notion of white noise: smaller blocks enforce tighter local control, while larger blocks permit much larger deviations in individual coefficients.

Based on this analysis, we set $B = 16$, which yields a feasible set that remains close to white Gaussian noise (the minimum cosine similarity with $1$M samples from $\mathcal{N}(\bm{0}, \bm{I}_N)$ with $N = 65{,}536$ is $0.988$), while still imposing sufficiently tight constraints on white Gaussian noise. The theoretical distributions of the blockwise $\ell_1$ and $\ell_2$ norms are shown in Figure~\ref{fig:L1_L2_comparison}.

\begin{table}[ht]
\centering
\caption{Effect of block size $B$ on FLUX-based reward-guided generation.}
\label{tab:block_size}
\begin{tabular}{lcccccc}
\toprule
Block size $B$ & MPGR (baseline) & 8 & \textbf{16} & 32 & 64 & 32{,}768 \\
\midrule
Aesthetic Score (target)   & 7.1329 & 8.5883 & \underline{8.9078} & \textbf{9.0833} & 8.1558 & 7.8834 \\
PickScore (held-out)       & 0.2195 & 0.2177 & 0.2203 & 0.2193 & 0.2161 & 0.2110 \\
HPSv2 (held-out)           & 0.2922 & 0.2940 & 0.2986 & 0.3111 & 0.2828 & 0.2623 \\
\bottomrule
\end{tabular}
\end{table}

We further evaluate the effect of $B$ on FLUX~\citep{flux2024} when optimizing the Aesthetic Score~\citep{aesthetic_score}. Table~\ref{tab:block_size} reports the target reward (Aesthetic Score) and held-out rewards (PickScore and HPSv2) for different block sizes. For all $B$, our method improves the Aesthetic Score over MPGR~\citep{hwang2025moment}. However, the held-out rewards degrade as $B$ becomes very large, reflecting the looser nature of the constraints. In contrast, the performance for $B = 8$, $16$, and $32$ is comparable in both target and held-out metrics, indicating that the method is not overly sensitive around our chosen value $B = 16$.

\clearpage
\newpage

\section{Ablation Study: Gradient and Latent Projection}
\label{app:ablation_projection}

Our method applies two projections at each iteration: the reward gradient is projected onto the white Gaussian noise feasible set before the optimizer step (gradient projection), and the latent vector is projected after each update (latent projection). To isolate the contribution of each component, we ablate by using gradient projection alone (without latent projection) and compare it against the full method. Table~\ref{tab:ablation_projection} reports results under Aesthetic Score optimization on FLUX.

Gradient projection alone already achieves an Aesthetic Score of $8.6469$, far exceeding regularization-based baselines ($7.1329$ for MPGR) and confirming that gradient preconditioning is the primary driver of reward improvement. Adding latent projection further increases the target score to $8.9078$ and substantially improves held-out rewards across all metrics, most notably ImageReward ($0.7678 \to 0.8548$). This indicates that latent projection stabilizes the optimization trajectory, keeping the latent vector noise-like throughout optimization and thereby better preserving image quality.

\begin{table}[h]
\centering
\setlength{\tabcolsep}{4pt}
\caption{Ablation of gradient and latent projection under Aesthetic Score optimization on FLUX.}
\begin{tabular}{lcccc}
\toprule
Method & \makecell{Aesthetic\\Score ($\uparrow$)} & \makecell{PickScore\\(held-out, $\uparrow$)} & \makecell{HPSv2\\(held-out, $\uparrow$)} & \makecell{ImageReward\\(held-out, $\uparrow$)} \\
\midrule
No Opt.                          & $5.9932$ & $0.2192$ & $0.2958$ & $0.8299$ \\
No Reg.                          & $7.0072$ & $0.2129$ & $0.2679$ & $0.4076$ \\
PRNO                             & $7.0244$ & $0.2183$ & $0.2861$ & $0.6997$ \\
MPGR                             & $7.1329$ & $0.2195$ & $0.2922$ & $0.7538$ \\
Ours (grad.\ proj.\ only)        & $8.6469$ & $0.2176$ & $0.2894$ & $0.7678$ \\
\textbf{Ours} (grad.\ + latent proj.) & $\mathbf{8.9078}$ & $0.2203$ & $0.2986$ & $0.8548$ \\
\bottomrule
\end{tabular}
\label{tab:ablation_projection}
\end{table}

\section{Magnitude Distributions Across Optimization Methods}
\label{app:visualization_magnitude}

To complement the quantitative statistics, we further examine how different latent optimization methods affect the empirical distributions of spectral magnitudes. For a fixed latent sample, we compute the elementwise magnitudes $|y_i|$ as well as the blockwise norms $\|\bm{y}^{(p)}\|_1$ and $\|\bm{y}^{(p)}\|_2$ of the complex-valued spectrum. For each method, we plot these quantities for both a white Gaussian noise and the optimized latent vector. For our method, since the blockwise norms are fixed, we instead show the empirical probability density function of $|y_i|$.

\begin{figure}[ht]
\centering

\setlength{\tabcolsep}{5.0pt}
\begin{tabular}{c|c|c|c|c}
\toprule
& No reg. & ReNO & MPGR & Ours \\
\midrule
\raisebox{1.4em}{$|y_i|$} & \includegraphics[width = 0.20 \textwidth]{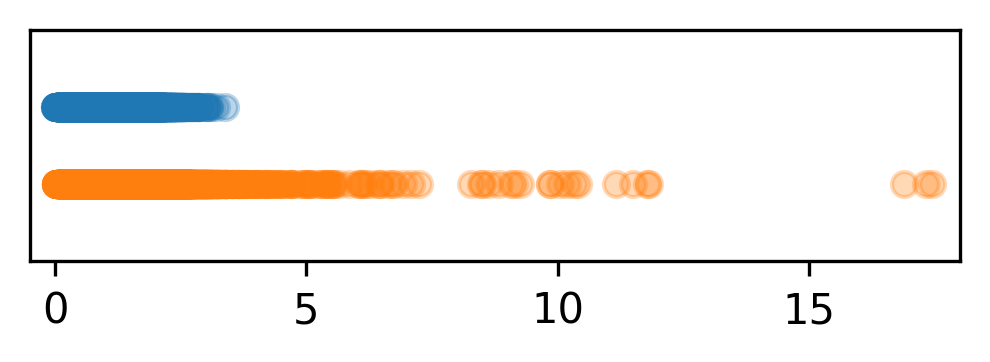} & \includegraphics[width = 0.20 \textwidth]{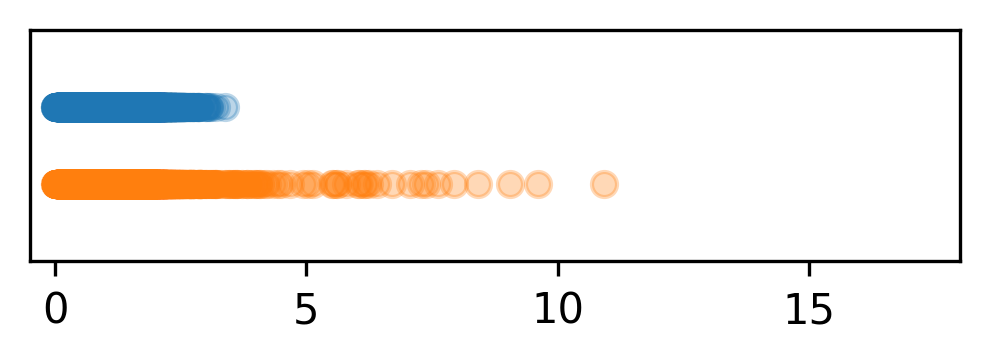} & \includegraphics[width = 0.20 \textwidth]{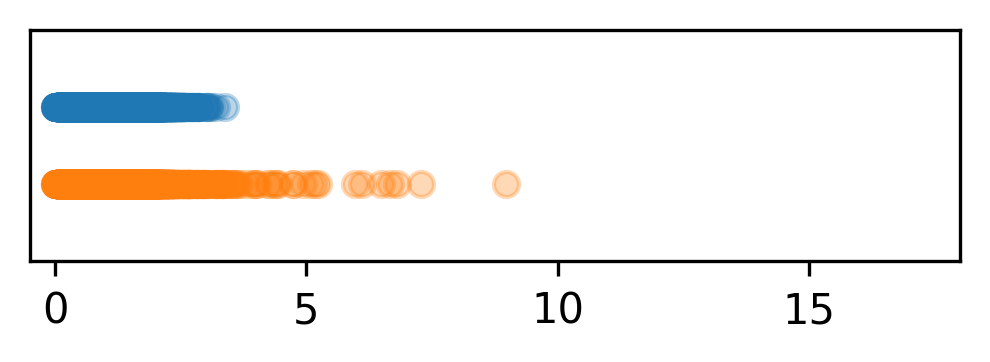} & \includegraphics[width = 0.20 \textwidth]{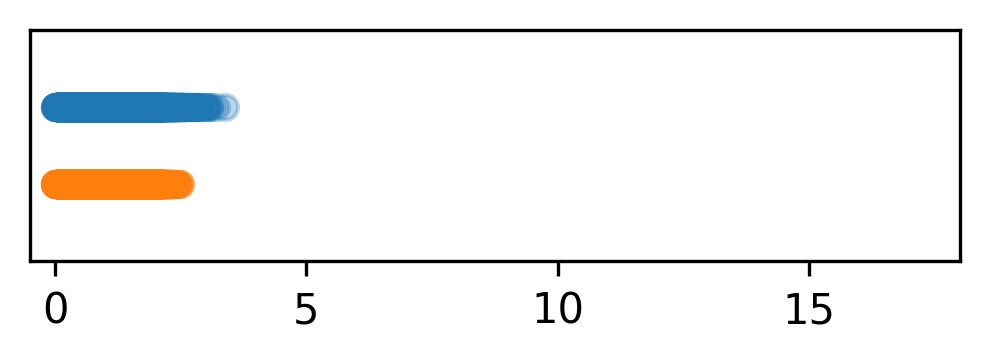} \\
\raisebox{1.4em}{$\|\bm{y}^{(p)}\|_1$} & \includegraphics[width = 0.20 \textwidth]{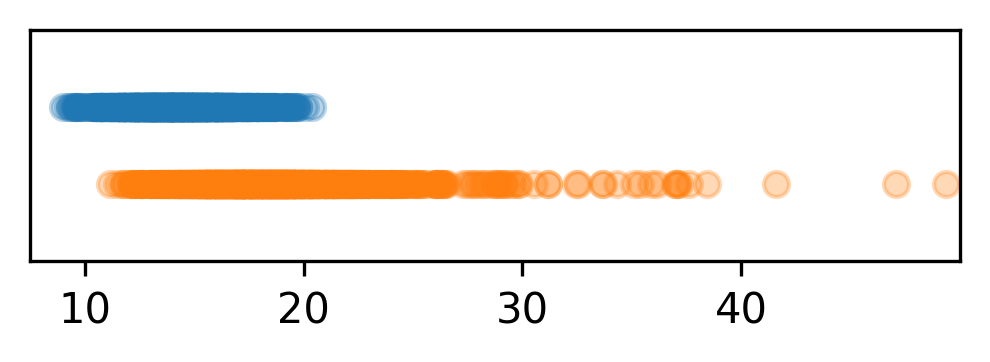} & \includegraphics[width = 0.20 \textwidth]{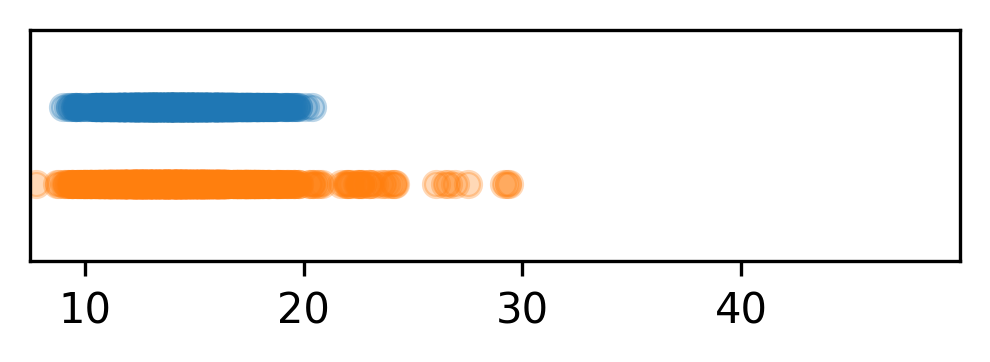} & \includegraphics[width = 0.20 \textwidth]{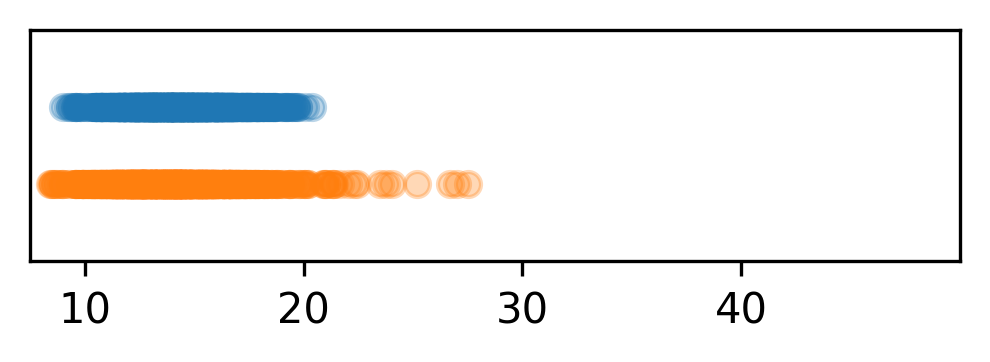} & \raisebox{2.9em}{\multirow{2}{*}{\makecell{{\tiny Probability density of $|y_i|$} \\ \includegraphics[width = 0.20 \textwidth]{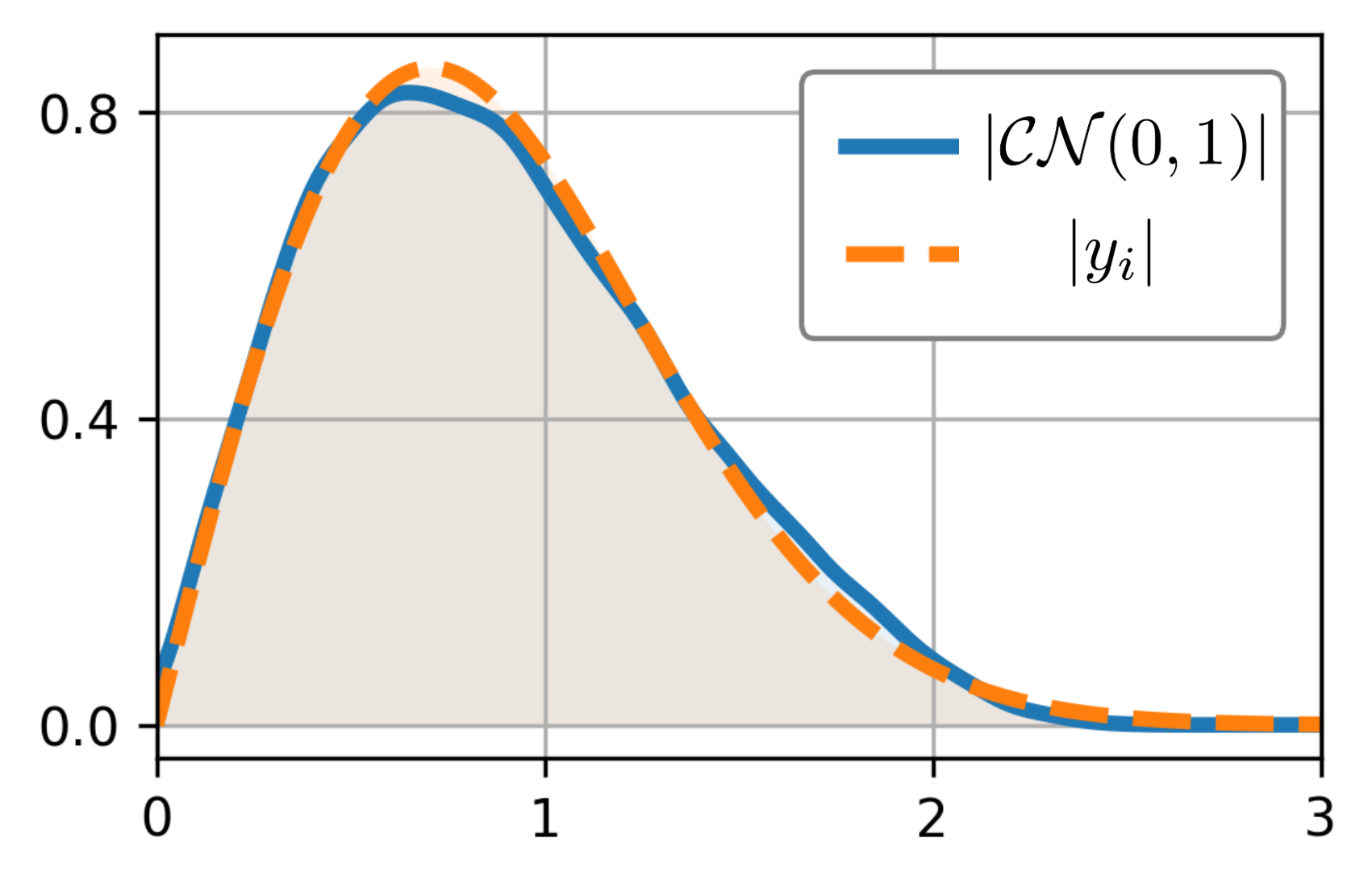}}}} \\
\raisebox{1.4em}{$\|\bm{y}^{(p)}\|_2$} & \includegraphics[width = 0.20 \textwidth]{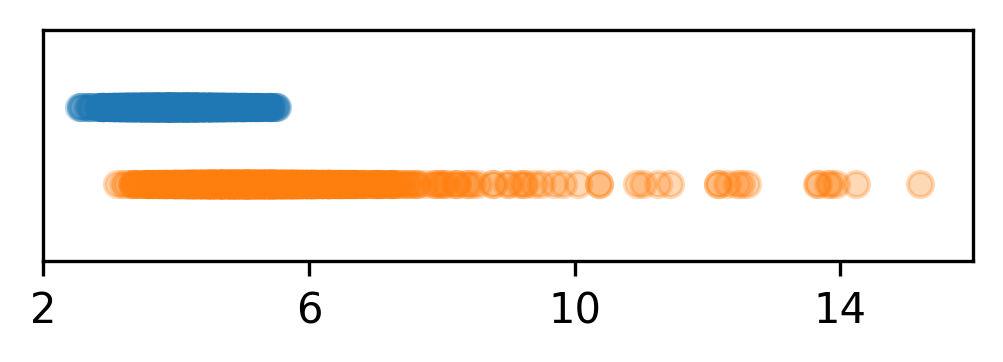} & \includegraphics[width = 0.20 \textwidth]{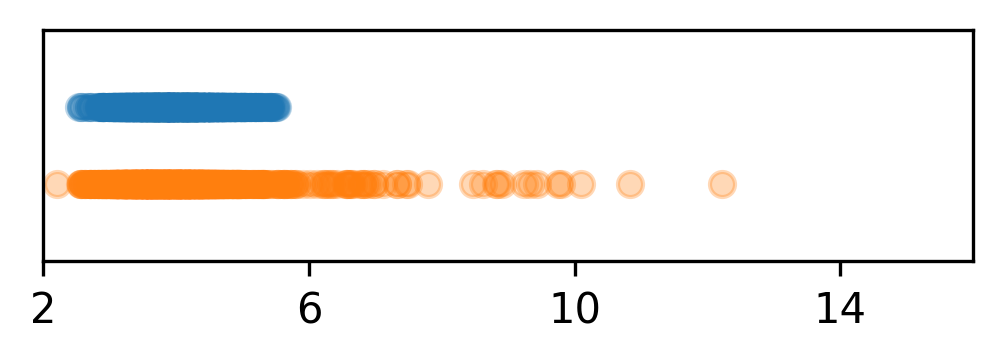} & \includegraphics[width = 0.20 \textwidth]{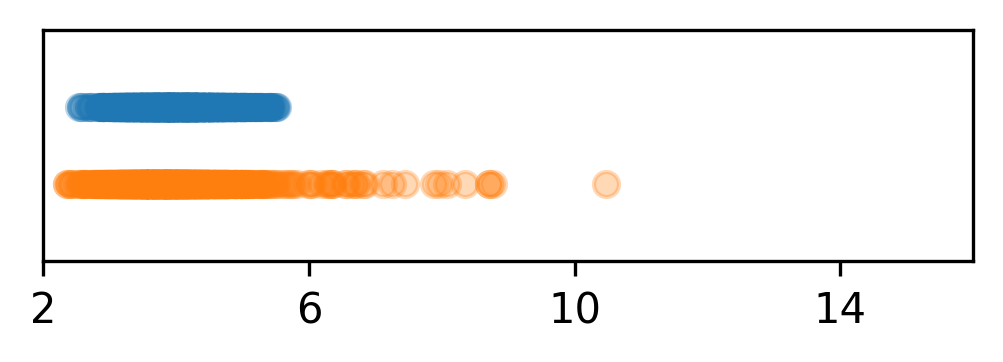} & \\
\bottomrule
\end{tabular}
\setlength{\abovecaptionskip}{1.5pt}
\caption{Empirical distributions of spectral magnitudes for each optimization method.
\textcolor[rgb]{0.122, 0.467, 0.706}{\rule{0.6em}{0.6em}} indicates values obtained from a white Gaussian noise, and \textcolor[rgb]{1.000, 0.498, 0.055}{\rule{0.6em}{0.6em}} indicates values from the optimized latent vector. For each frequency or block $p$, the corresponding magnitude or norm is plotted at its location on the horizontal axis. The rightmost panel shows the empirical and theoretical probability density of $|y_i|$.}
\label{fig:visualization_magnitude}
\end{figure}

\arrayrulecolor{black}

Compared to the unregularized optimization, ReNO and MPGR reduce the deviation of the empirical magnitudes from those of the white Gaussian noise. However, the $\ell_1$ distributions of MPGR still exhibit noticeable deviations, indicating that soft penalties on $\|\bm{y}^{(p)}\|_1$ do not fully prevent drift in blockwise magnitudes. Moreover, the regularization-based methods produce individual frequencies whose magnitudes are substantially larger than expected under the white Gaussian noise. In contrast, our gradient preconditioning keeps elementwise magnitudes around the Gaussian distribution, and the upper bound is set (Appendix~\ref{app:magnitude_range}). This behavior is consistent with the notion of white noise discussed in Section~\ref{subsec:interpretations}, where no single frequency component is disproportionately strong.

\clearpage
\newpage

\begin{table}[t]
\centering
\footnotesize
\begin{minipage}{0.50\linewidth}
\centering
\setlength{\tabcolsep}{5pt}
\caption{Effect of regularization coefficient $\lambda$ on FLUX under Aesthetic Score optimization. Our method requires no $\lambda$.}
\begin{tabular}{llccc}
\toprule
Method & $\lambda$ & \makecell{Aesthetic\\Score ($\uparrow$)} & \makecell{PickScore\\(held-out, $\uparrow$)} & \makecell{HPSv2\\(held-out, $\uparrow$)} \\
\midrule
ReNO & $0.5$ & $7.1902$ & $0.2142$ & $0.2714$ \\
ReNO & $1.0$ & $7.1931$ & $0.2149$ & $0.2765$ \\
ReNO & $2.0$ & $6.8614$ & $0.2152$ & $0.2698$ \\
ReNO & $4.0$ & $7.2284$ & $0.2191$ & $0.2887$ \\
\midrule
PRNO & $0.5$ & $6.9051$ & $0.2109$ & $0.2606$ \\
PRNO & $1.0$ & $6.9507$ & $0.2148$ & $0.2762$ \\
PRNO & $2.0$ & $7.0244$ & $0.2183$ & $0.2861$ \\
PRNO & $4.0$ & $6.9942$ & $0.2203$ & $0.2979$ \\
\midrule
MPGR & $0.5$ & $7.1044$ & $0.2146$ & $0.2769$ \\
MPGR & $1.0$ & $6.9868$ & $0.2144$ & $0.2759$ \\
MPGR & $2.0$ & $6.9257$ & $0.2177$ & $0.2866$ \\
MPGR & $4.0$ & $7.0487$ & $0.2210$ & $0.2934$ \\
\midrule
\textbf{Ours} & \textbf{N/A} & $\mathbf{8.9078}$ & $0.2203$ & $0.2986$ \\
\bottomrule
\end{tabular}
\label{tab:lambda_ablation}
\end{minipage}
\hfill
\begin{minipage}{0.46\linewidth}
\centering
\caption{Comparison of gradient-based optimization and best-of-$K$ sampling on FLUX under Aesthetic Score optimization.}
\begin{tabular}{lc}
\toprule
Method & Aesthetic Score ($\uparrow$) \\
\midrule
No Opt.                   & $5.9932$ \\
max@$60$                  & $6.5268$ \\
max@$200$                 & $6.6513$ \\
MPGR (200 iters)          & $7.1329$ \\
\textbf{Ours (60 iters)}  & $7.1200$ \\
\textbf{Ours (200 iters)} & $\mathbf{8.9078}$ \\
\bottomrule
\end{tabular}
\label{tab:maxk}
\vspace{1em}
\setlength{\tabcolsep}{5pt}
\caption{Results at $1{,}000$-prompt scale on T2I-CompBench++ with PickScore as target reward on FLUX.}
\begin{tabular}{lccc}
\toprule
Method & \makecell{PickScore\\(target, $\uparrow$)} & \makecell{Aesthetic Score\\(held-out, $\uparrow$)} & \makecell{HPSv2\\(held-out, $\uparrow$)} \\
\midrule
No Opt.       & $0.2270$ & $5.7060$ & $0.2856$ \\
PRNO          & $0.2444$ & $5.6856$ & $0.3023$ \\
MPGR          & $0.2438$ & $5.7056$ & $0.3031$ \\
\textbf{Ours} & $\mathbf{0.2660}$ & $5.6161$ & $\mathbf{0.3152}$ \\
\bottomrule
\end{tabular}
\label{tab:large_scale}
\end{minipage}
\end{table}

\section{Sensitivity to Regularization Coefficient}
\label{app:lambda_ablation}

All regularization-based methods require a coefficient $\lambda$ that balances reward and regularization gradients. To verify that the performance gap is not an artifact of the specific value $\lambda = 2.0$ used in the main experiments, we sweep $\lambda \in \{0.5, 1.0, 2.0, 4.0\}$ for ReNO, PRNO, and MPGR on FLUX under Aesthetic Score optimization. Our method requires no $\lambda$, as gradient preconditioning replaces soft regularization entirely. Table~\ref{tab:lambda_ablation} reports results for all settings.

Baselines vary across $\lambda$ values (ReNO: $6.86$--$7.23$, PRNO: $6.91$--$7.02$, MPGR: $6.93$--$7.10$), but even the best $\lambda$ per method (ReNO: $7.23$, PRNO: $7.02$, MPGR: $7.10$) does not approach our result of $8.91$. This confirms that the performance gap is not due to a suboptimal choice of $\lambda$ for the baselines.

\section{Comparison to Best-of-$K$ Sampling}
\label{app:maxk_comparison}

Best-of-$K$ (max@$K$) sampling selects the highest-reward image from $K$ independent random samples without gradient computation. Table~\ref{tab:maxk} compares max@$K$ for $K \in \{60, 200\}$ against gradient-based methods on FLUX under Aesthetic Score optimization.

Max@$K$ shows limited scalability: max@$200$ gains only $0.12$ over max@$60$ ($6.65$ vs.\ $6.53$), and both remain well below any gradient-based method. Even MPGR at $200$ iterations ($7.13$) substantially outperforms max@$200$ ($6.65$), confirming that gradient information provides a qualitatively different level of optimization. Among gradient-based methods, our approach achieves the strongest results: with only $60$ iterations we already match MPGR at $200$ iterations ($7.12$ vs.\ $7.13$), and with $200$ iterations we reach $8.91$, outperforming all baselines by a large margin.

\section{Large-Scale Experiment}
\label{app:large_scale}

To assess whether results hold at a larger prompt scale, we run an additional experiment using $1{,}000$ prompts from T2I-CompBench++~\citep{Huang:2025T2ICompBench++} with PickScore as the target reward on FLUX. We compare against the two strongest regularization-based baselines. Table~\ref{tab:large_scale} reports the target PickScore and two held-out rewards.

Our method achieves the highest target PickScore ($0.2660$ vs.\ $0.2444$) and the highest held-out HPSv2 ($0.3152$ vs.\ $0.3031$), while the held-out Aesthetic Score remains close to baselines ($5.62$ vs.\ $5.69$--$5.71$). The same tendencies observed in the main paper hold at this larger scale.

\clearpage
\newpage

\section{Diversity Analysis}
\label{app:diversity}

We provide a more detailed analysis of sample diversity under Aesthetic Score optimization with FLUX, complementing the IS and Vendi Score results in the main paper. Tables~\ref{tab:diversity} and~\ref{tab:fid} report IS~\citep{salimans2016improved}, Vendi Score~\citep{friedman2023vendi}, and FID on $1{,}125$ generated images. Two unoptimized baselines are included: one sharing the same initial latents as our method (same init.) and one drawn from entirely independent latents (new init.). Our method remains within the variance of both baselines on IS, and its Vendi Score slightly increases, confirming no diversity loss.

To further characterize the distribution shift, we compute FID between each pair of settings. The intra-baseline FID ($14.49$) reflects natural variation between two independent sets of unoptimized images. The FID between our optimized outputs and the same-init.\ baseline ($26.80$) is consistent with the independent-baseline FID ($26.35$), indicating that the distribution shift is systematic rather than a result of overfitting to specific initial latents. Combined with IS and Vendi Score, this confirms that our method does not suffer from mode collapse and its distribution shift reflects a genuine reward-guided improvement.

\begin{table}[h]
\centering
\begin{minipage}{0.44\linewidth}
\centering
\caption{IS and Vendi Score on $1{,}125$ images.}
\begin{tabular}{lcc}
\toprule
Setting & IS ($\uparrow$) & Vendi ($\uparrow$) \\
\midrule
No Opt.\ (same init.) & $22.3259 \pm 1.4144$ & $6.4248$ \\
No Opt.\ (new init.)  & $21.5689 \pm 1.2859$ & $6.6103$ \\
\textbf{Ours}         & $21.1043 \pm 1.7549$ & $6.9682$ \\
\bottomrule
\end{tabular}
\label{tab:diversity}
\end{minipage}
\hfill
\begin{minipage}{0.52\linewidth}
\centering
\caption{FID between pairs of settings.}
\begin{tabular}{lc}
\toprule
Comparison & FID ($\downarrow$) \\
\midrule
Same init.\ vs.\ new init. & $14.4947$ \\
Same init.\ vs.\ Ours      & $26.7967$ \\
New init.\ vs.\ Ours       & $26.3463$ \\
\bottomrule
\end{tabular}
\label{tab:fid}
\end{minipage}
\end{table}

\section{Experimental Results with SANA-Sprint and SD-Turbo}
\label{app:additional_models}

We evaluate our method on two additional one-step generative models to confirm generalization beyond FLUX-schnell and SDXL-Turbo. Tables~\ref{tab:sana_sprint} and~\ref{tab:sd_turbo} report results under Aesthetic Score optimization on SANA-Sprint~\citep{chen2025sana_sprint} and SD-Turbo~\citep{sauer2024sdxl-turbo}, respectively. On SANA-Sprint, our method achieves an Aesthetic Score of $8.16$, outperforming the best baseline ReNO ($7.61$) by $0.55$ points while held-out rewards remain comparable. On SD-Turbo, our method reaches $8.57$, surpassing baselines ($7.52$--$7.58$) by approximately $1.0$ point with held-out PickScore and HPSv2 on par. Combined with FLUX-schnell and SDXL-Turbo in the main paper, gradient preconditioning generalizes consistently across four distinct one-step architectures.

\begin{table}[h]
\centering
\begin{minipage}{0.48\linewidth}
\centering
\setlength{\tabcolsep}{3pt}
\caption{Results on SANA-Sprint under Aesthetic Score optimization.}
\begin{tabular}{lcccc}
\toprule
Method & \makecell{Aesthetic\\Score ($\uparrow$)} & \makecell{Pick-\\Score ($\uparrow$)} & \makecell{HPSv2\\($\uparrow$)} & \makecell{Image-\\Reward ($\uparrow$)} \\
\midrule
No Opt.       & $6.7899$ & $0.2245$ & $0.3094$ & $1.0558$ \\
ReNO          & $7.6137$ & $0.2249$ & $0.3093$ & $1.0331$ \\
PRNO          & $7.5928$ & $0.2243$ & $0.3080$ & $1.0716$ \\
MPGR          & $7.5687$ & $0.2248$ & $0.3099$ & $1.0371$ \\
\textbf{Ours} & $\mathbf{8.1635}$ & $0.2248$ & $0.3084$ & $1.0397$ \\
\bottomrule
\end{tabular}
\label{tab:sana_sprint}
\end{minipage}
\hfill
\begin{minipage}{0.48\linewidth}
\centering
\setlength{\tabcolsep}{3pt}
\caption{Results on SD-Turbo under Aesthetic Score optimization.}
\begin{tabular}{lcccc}
\toprule
Method & \makecell{Aesthetic\\Score ($\uparrow$)} & \makecell{Pick-\\Score ($\uparrow$)} & \makecell{HPSv2\\($\uparrow$)} & \makecell{Image-\\Reward ($\uparrow$)} \\
\midrule
No Opt.       & $4.1926$ & $0.1832$ & $0.1633$ & $-2.2794$ \\
ReNO          & $7.5566$ & $0.2194$ & $0.2798$ & $0.1743$ \\
PRNO          & $7.5841$ & $0.2189$ & $0.2763$ & $0.2056$ \\
MPGR          & $7.5189$ & $0.2193$ & $0.2773$ & $0.2384$ \\
\textbf{Ours} & $\mathbf{8.5735}$ & $0.2193$ & $0.2768$ & $0.1950$ \\
\bottomrule
\end{tabular}
\label{tab:sd_turbo}
\end{minipage}
\end{table}

\clearpage
\newpage

\section{Experimental Results with SDXL-Turbo}
\label{app:experimental_results_sdxl}

We report quantitative and qualitative results with SDXL-Turbo in Figures~\ref{fig:quantitative_sdxl} and~\ref{fig:qualitative_sdxl}, respectively.
Across all reward models, our method consistently achieves better trade-offs than the baselines while not significantly losing held-out rewards compared to the unoptimized outputs (No Opt.), thereby improving reward alignment without compromising realism or image quality.
By contrast, baselines across different learning rates and regularization schemes fail to achieve our trade-offs.
We reason this limitation arises from soft regularization, which provides no guarantee of white Gaussian noise properties even with sophisticated loss formulations.

\begin{figure}[h]
\centering
\setlength{\tabcolsep}{3.0pt}
\begin{tabular}{c | c | c | c}
\multicolumn{4}{l}{\includegraphics[width=0.60\textwidth]{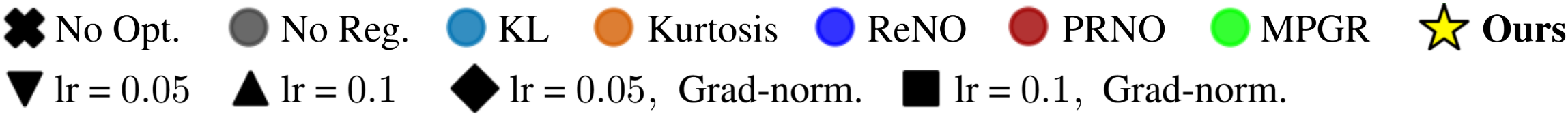}} \\
\toprule
\small Aesthetic Score & \small PickScore & \small HPSv2 & \small ImageReward \\
\midrule
\includegraphics[height=0.63\textwidth]{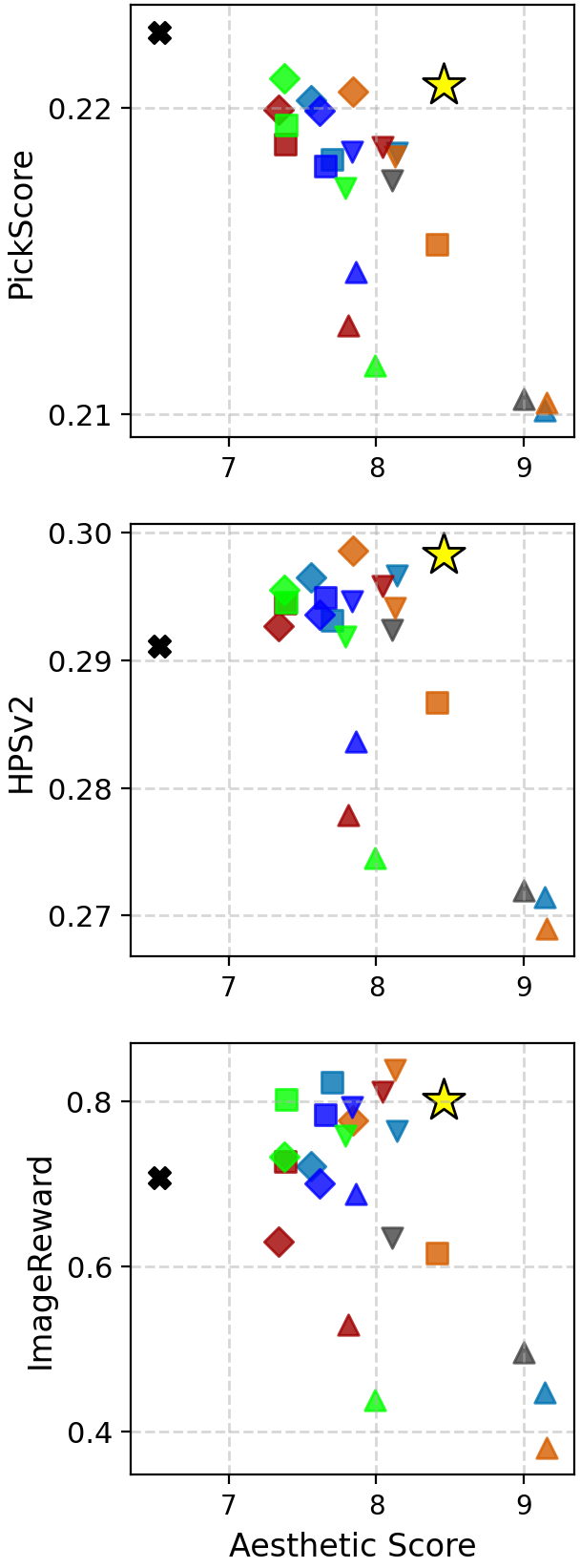} &
\includegraphics[height=0.63\textwidth]{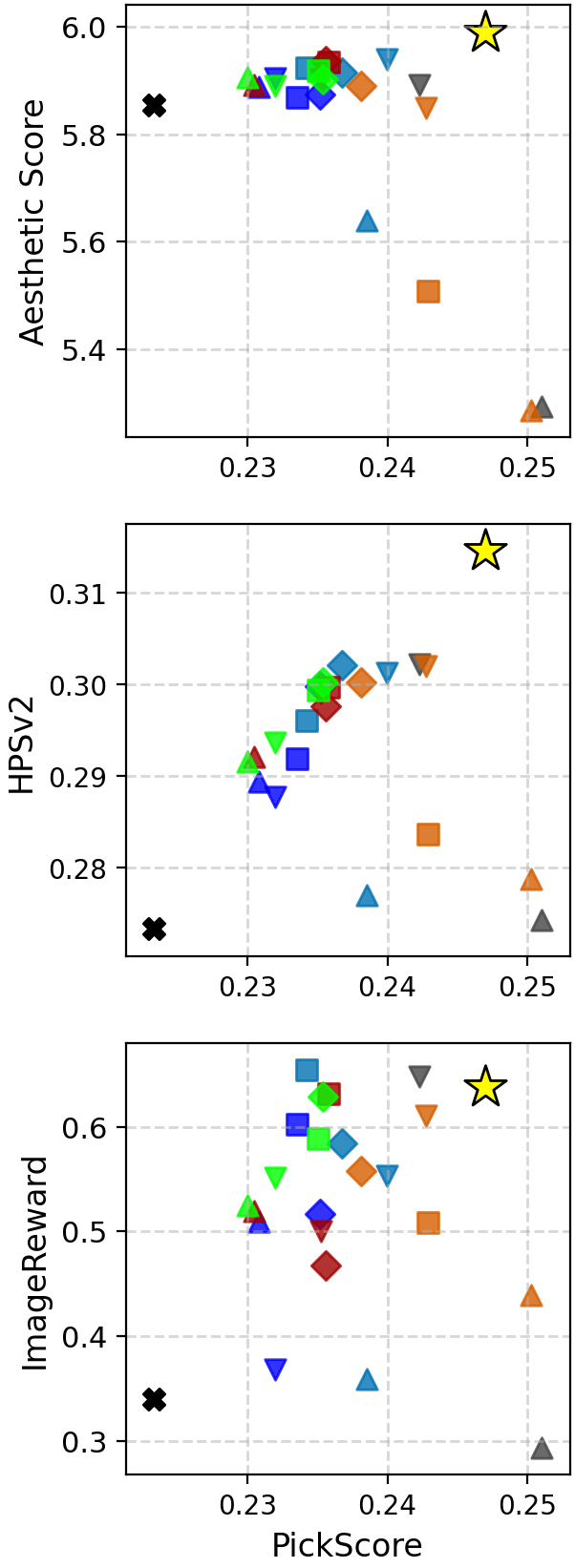} &
\includegraphics[height=0.63\textwidth]{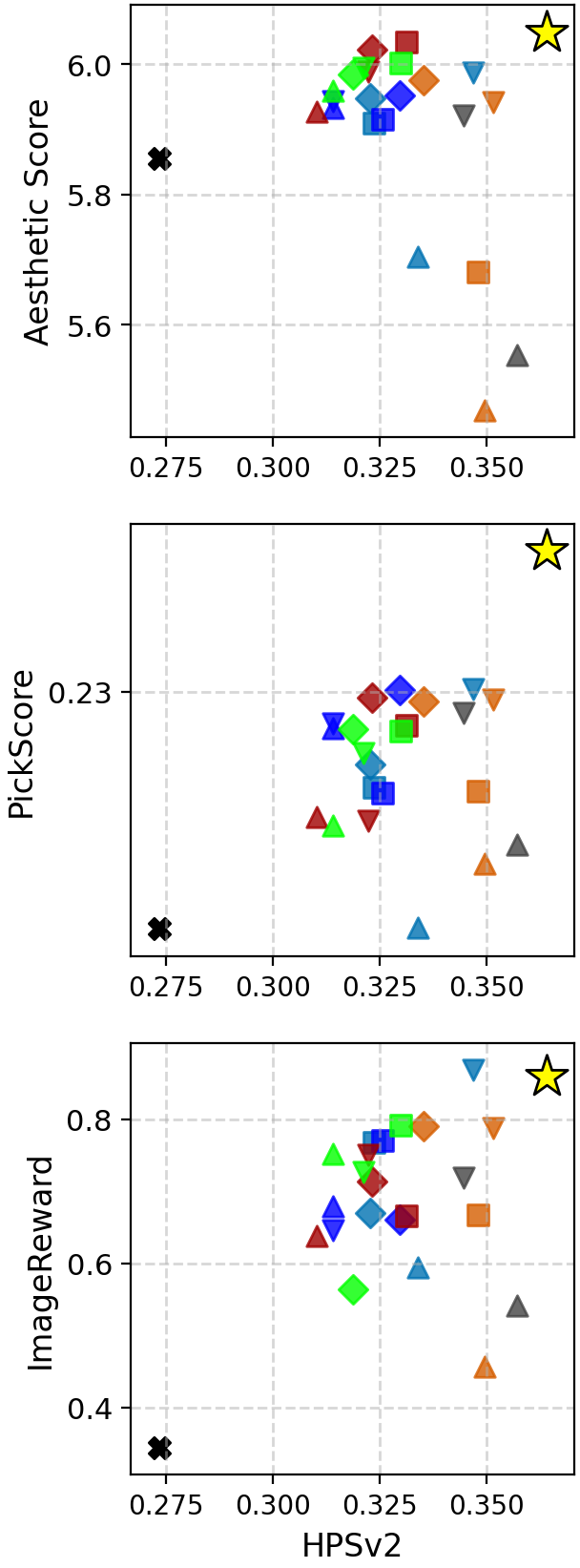} &
\includegraphics[height=0.63\textwidth]{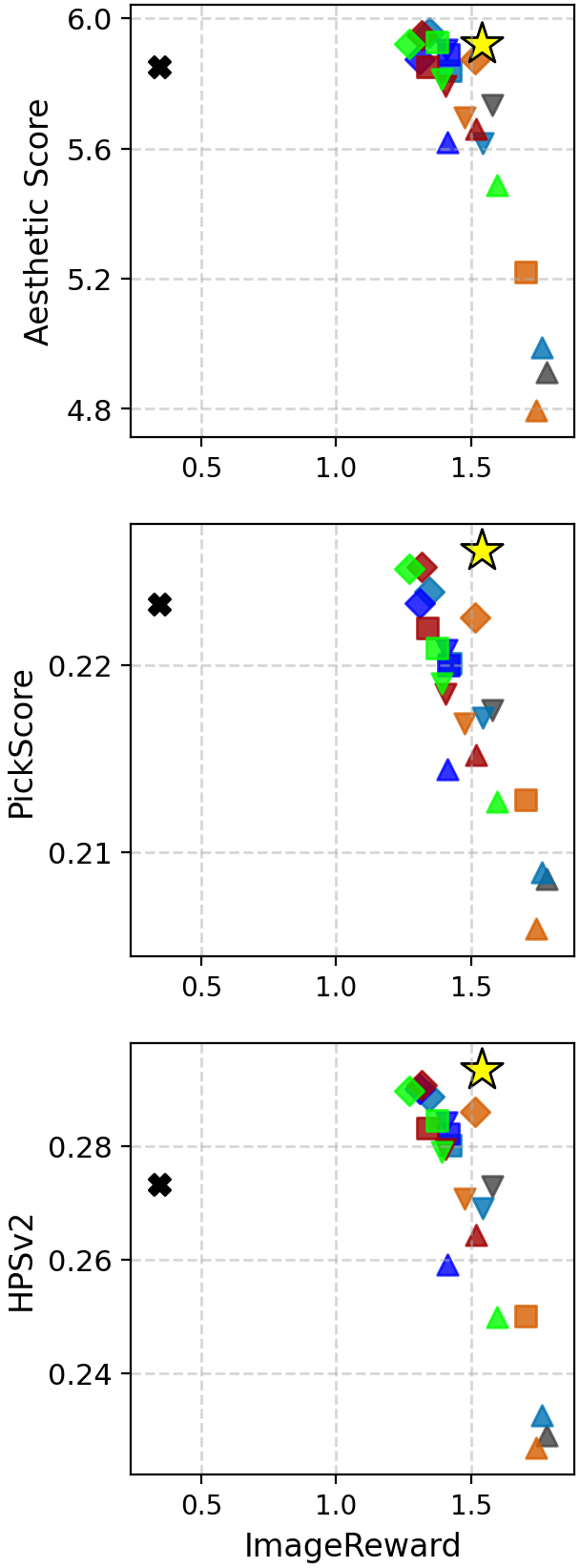} \\[-0.2em]
\bottomrule
\end{tabular}
\setlength{\abovecaptionskip}{1.5pt}
\caption{\textbf{Quantitative Results with SDXL-Turbo.} Each column corresponds to the same given reward (x-axis), and different held-out rewards (y-axis). Each point denotes the score after 50 iterations, with higher positions and more rightward placement indicating better trade-offs. For baselines, multiple points are plotted across learning rates and regularization schemes.}
\label{fig:quantitative_sdxl}
\end{figure}

\clearpage
\newpage

\begin{figure}[ht]
\centering
\setlength{\tabcolsep}{0.5pt}
\small
\begin{tabular}{c @{\hskip 4.0pt}| c c c c c c c c}
\toprule
& {\small No Opt.} & {\small No Reg.} & {\small KL} & {\small Kurtosis} & {\small ReNO} & {\small PRNO} & {\small MPGR} & \small{\textbf{Ours}} \\
\midrule
\multirow{2}{*}{\raisebox{-0.95\height}{\rotatebox{90}{\scriptsize Aesthetic Score}}} & \multicolumn{8}{c}{\footnotesize \textit{``sheep"}} \\
&
\includegraphics[width=0.119\textwidth]{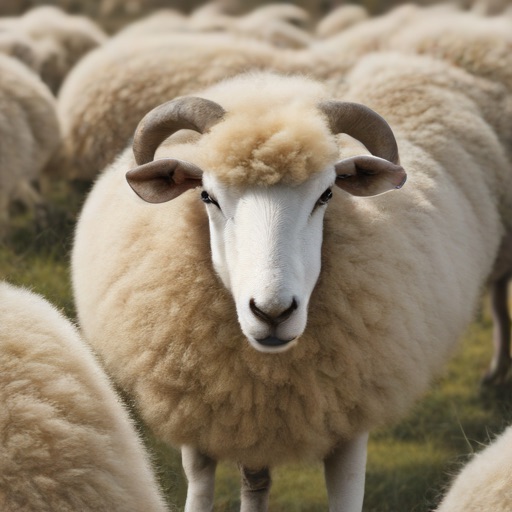} &
\includegraphics[width=0.119\textwidth]{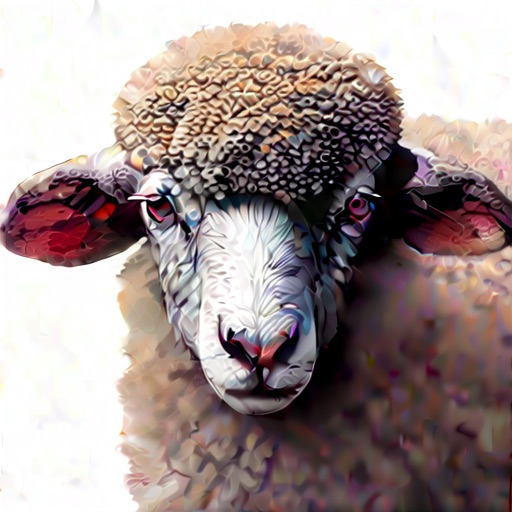} &
\includegraphics[width=0.119\textwidth]{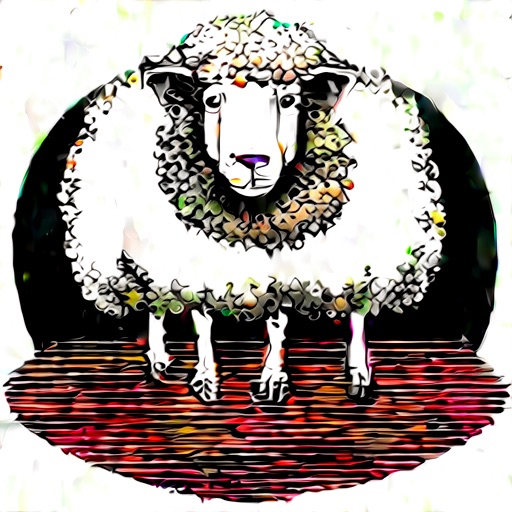} &
\includegraphics[width=0.119\textwidth]{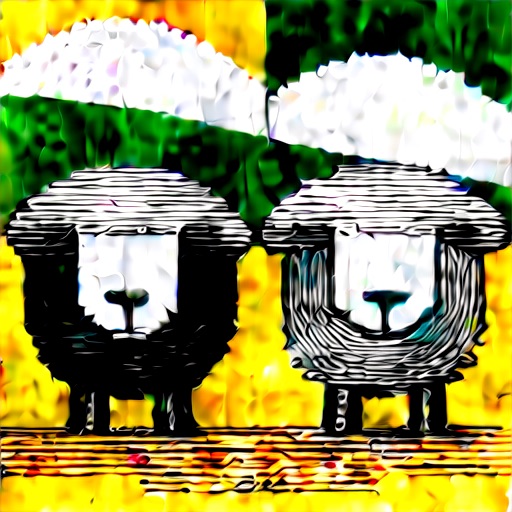} &
\includegraphics[width=0.119\textwidth]{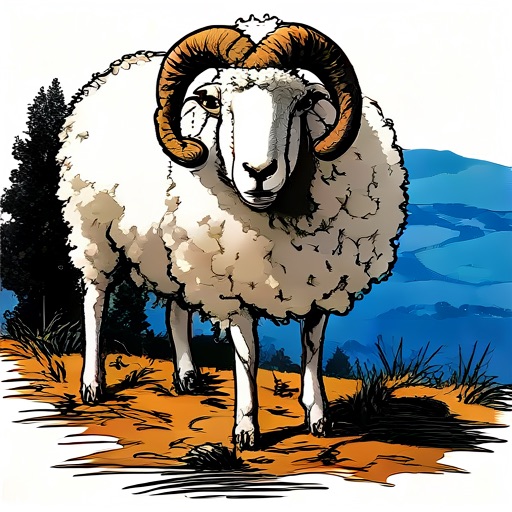} &
\includegraphics[width=0.119\textwidth]{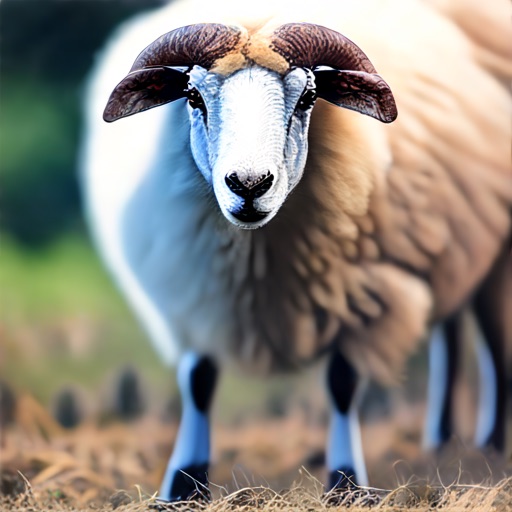} &
\includegraphics[width=0.119\textwidth]{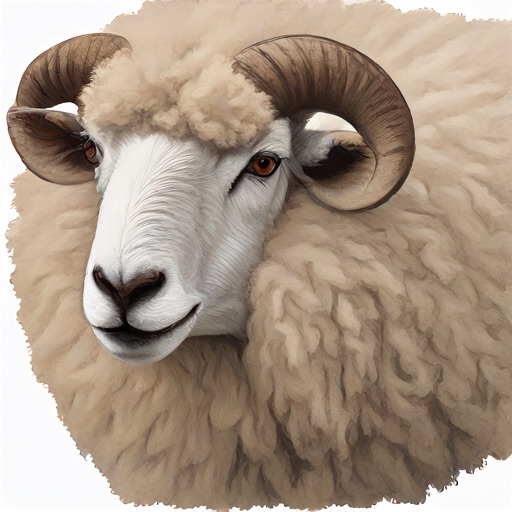} &
\includegraphics[width=0.119\textwidth]{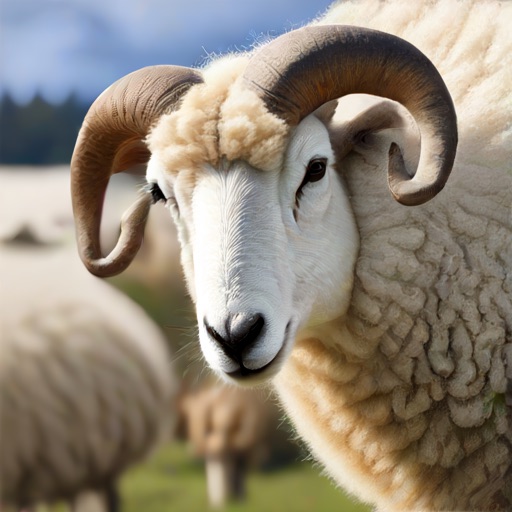} \\[-3pt]
\midrule
\multirow{2}{*}{\raisebox{-1.15\height}{\rotatebox{90}{\scriptsize PickScore}}} & \multicolumn{8}{c}{\footnotesize \textit{``a computer on the bottom of a horse"}} \\
&
\includegraphics[width=0.119\textwidth]{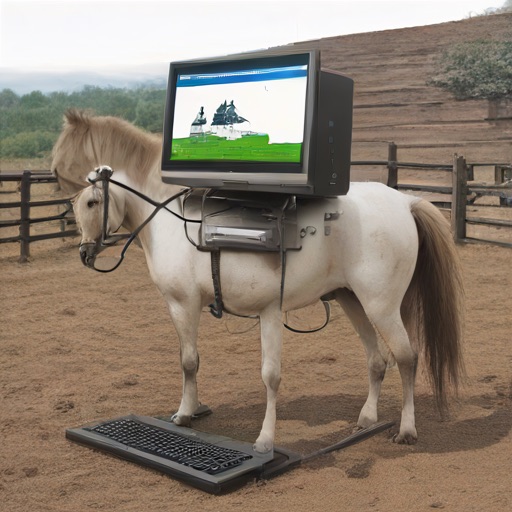} &
\includegraphics[width=0.119\textwidth]{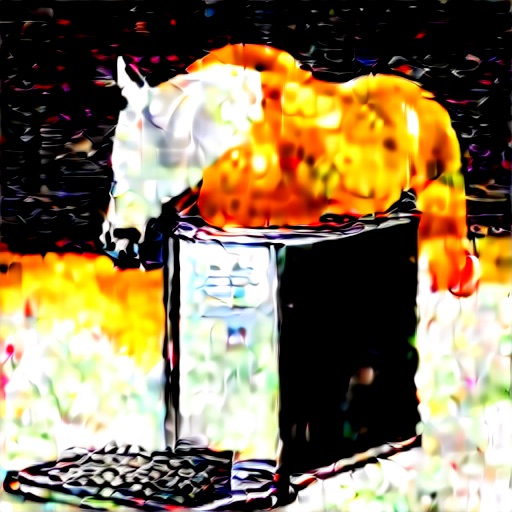} &
\includegraphics[width=0.119\textwidth]{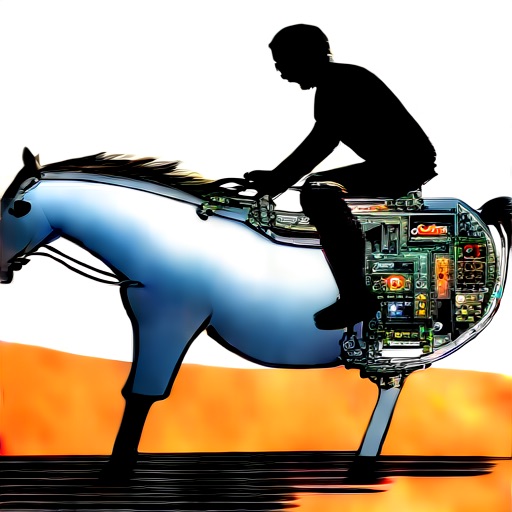} &
\includegraphics[width=0.119\textwidth]{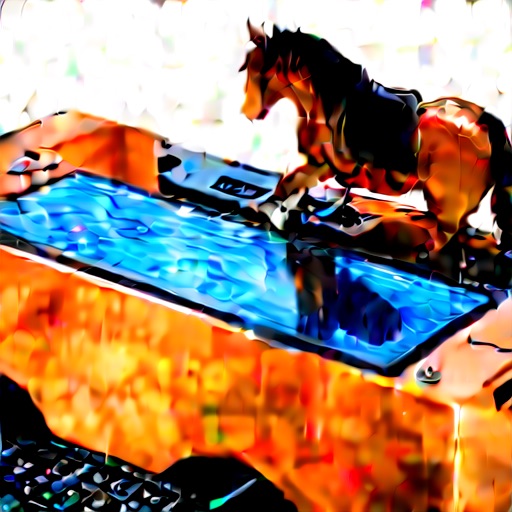} &
\includegraphics[width=0.119\textwidth]{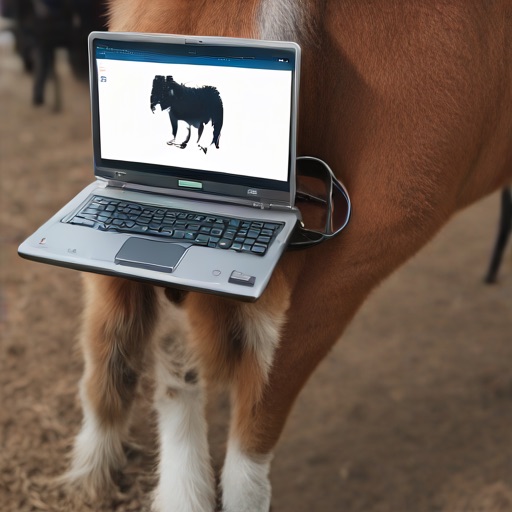} &
\includegraphics[width=0.119\textwidth]{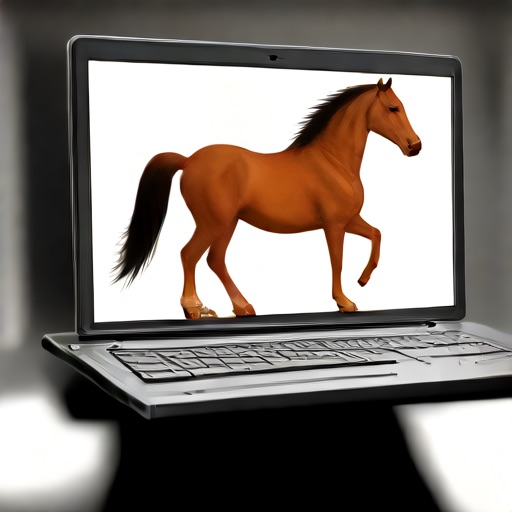} &
\includegraphics[width=0.119\textwidth]{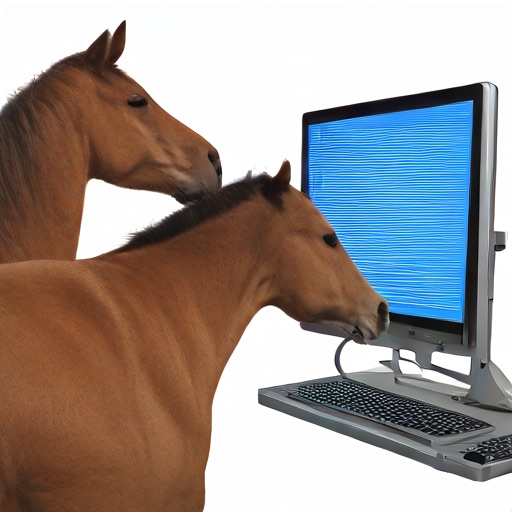} &
\includegraphics[width=0.119\textwidth]{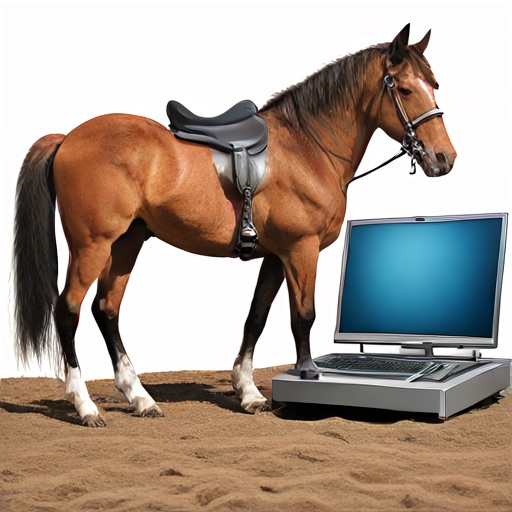} \\[-3pt]
\midrule
\multirow{2}{*}{\raisebox{-1.5\height}{\rotatebox{90}{\scriptsize HPSv2}}} & \multicolumn{8}{c}{\footnotesize \textit{``A person is running on a treadmill at the gym."}} \\
&
\includegraphics[width=0.119\textwidth]{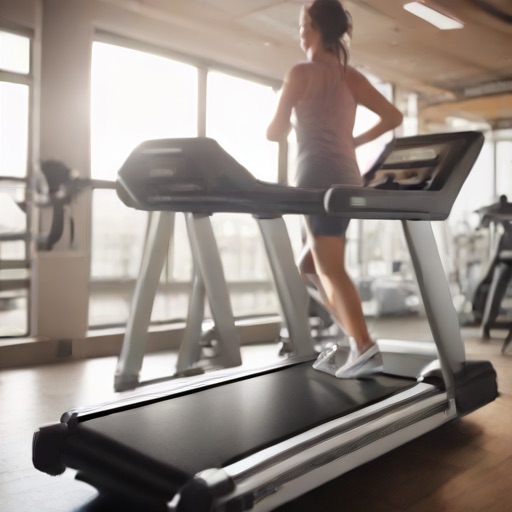} &
\includegraphics[width=0.119\textwidth]{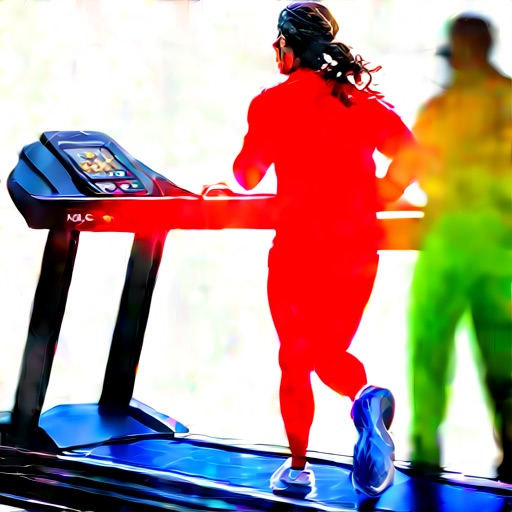} &
\includegraphics[width=0.119\textwidth]{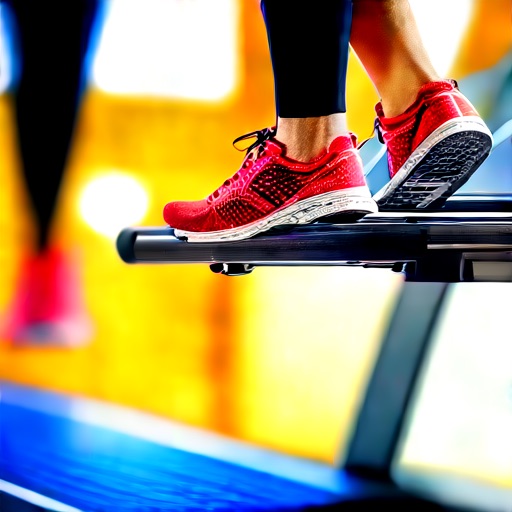} &
\includegraphics[width=0.119\textwidth]{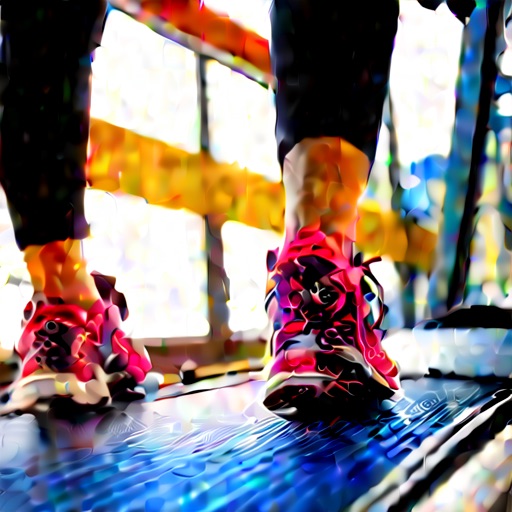} &
\includegraphics[width=0.119\textwidth]{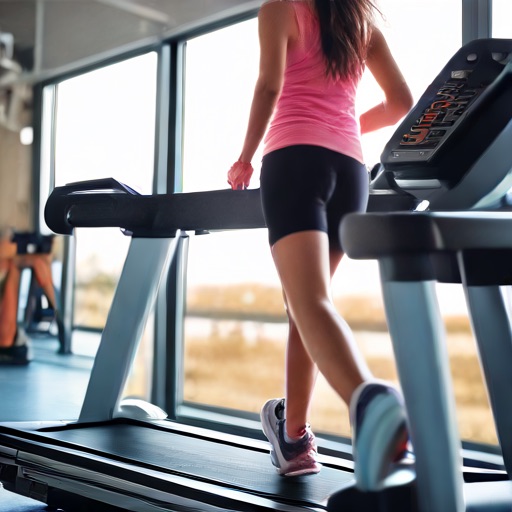} &
\includegraphics[width=0.119\textwidth]{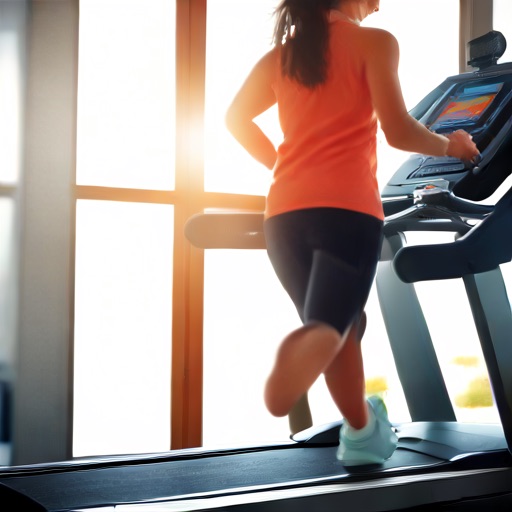} &
\includegraphics[width=0.119\textwidth]{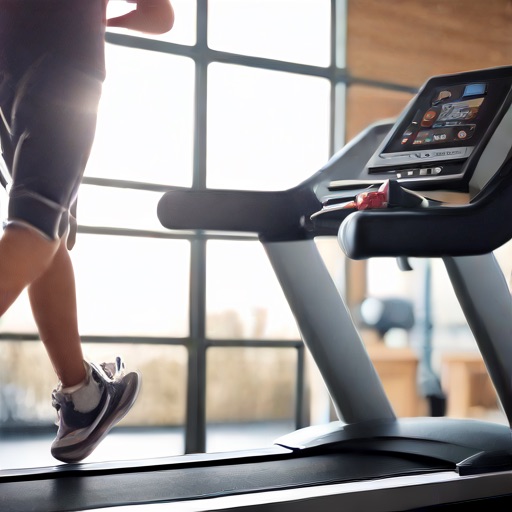} &
\includegraphics[width=0.119\textwidth]{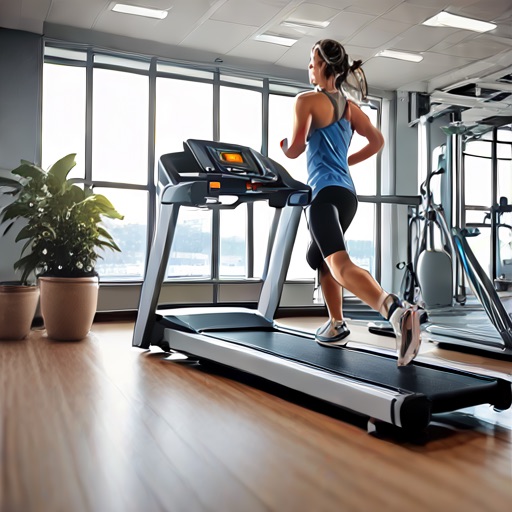} \\[-3pt]
\midrule
\multirow{2}{*}{\raisebox{-1.05\height}{\rotatebox{90}{\scriptsize ImageReward}}} & \multicolumn{8}{c}{\footnotesize \textit{``an oval bathtub and a square shower"}} \\
&
\includegraphics[width=0.119\textwidth]{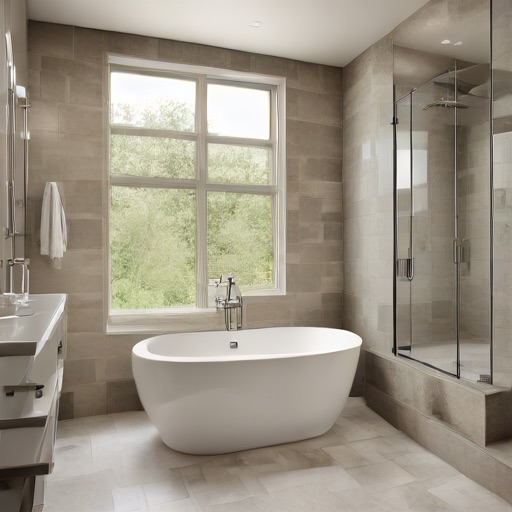} &
\includegraphics[width=0.119\textwidth]{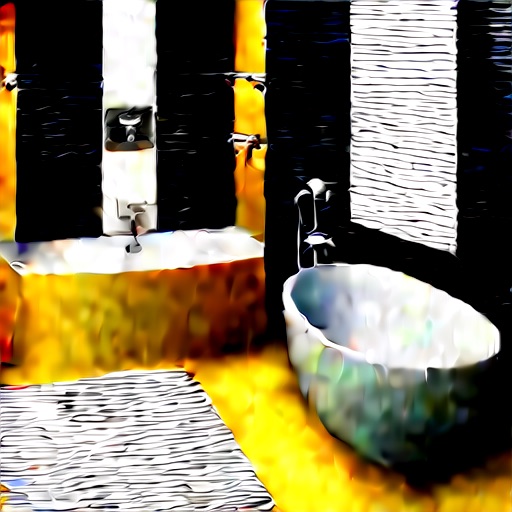} &
\includegraphics[width=0.119\textwidth]{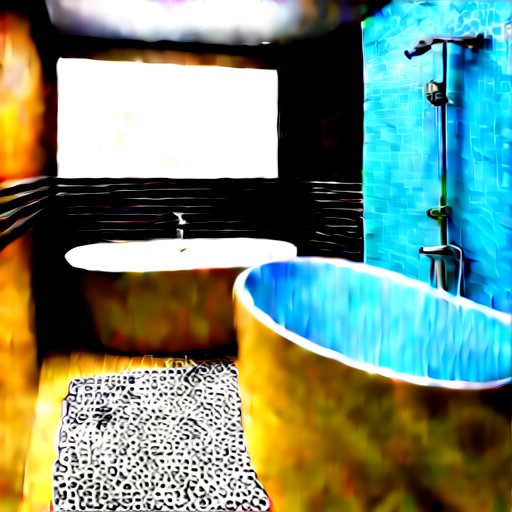} &
\includegraphics[width=0.119\textwidth]{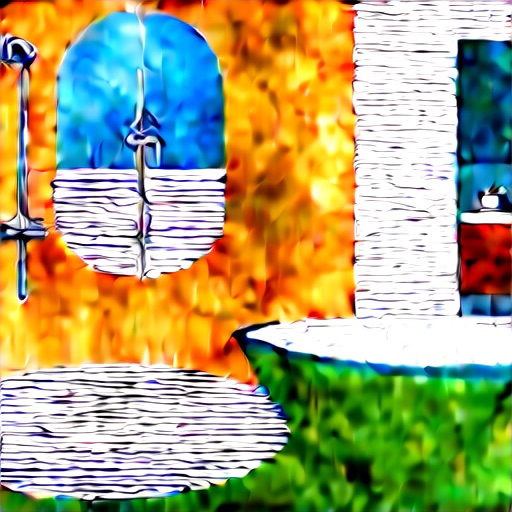} &
\includegraphics[width=0.119\textwidth]{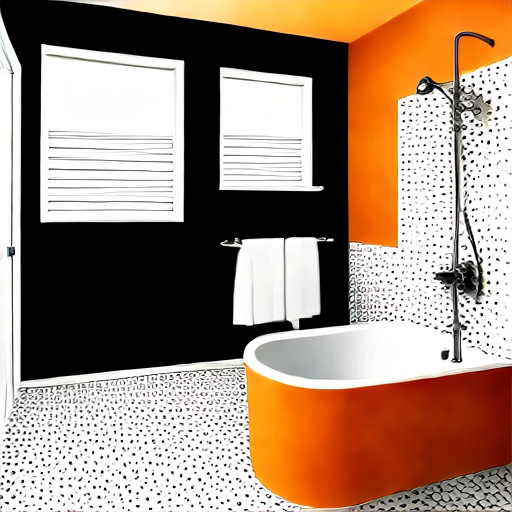} &
\includegraphics[width=0.119\textwidth]{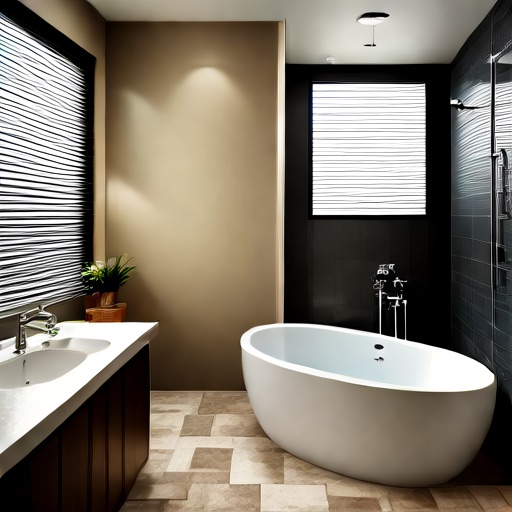} &
\includegraphics[width=0.119\textwidth]{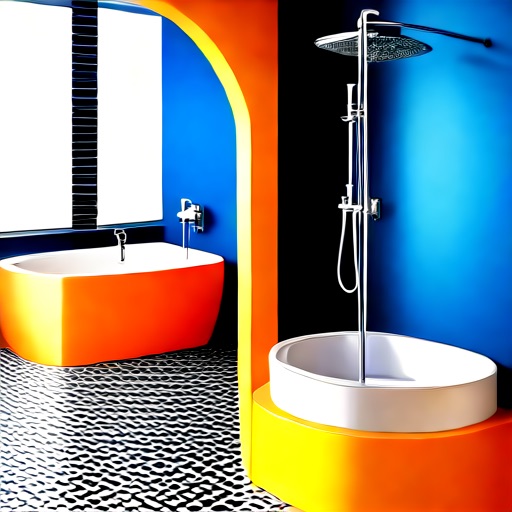} &
\includegraphics[width=0.119\textwidth]{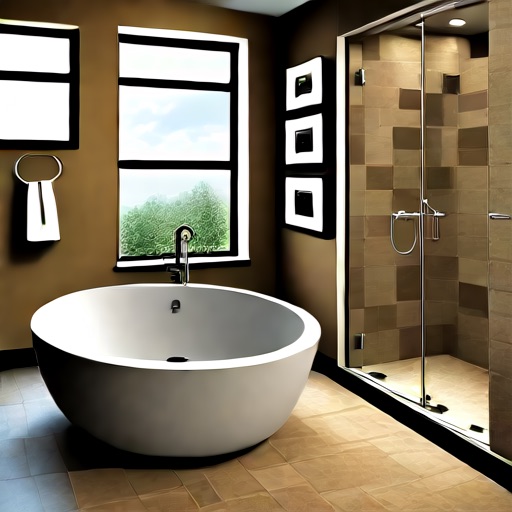} \\[-3pt]
\bottomrule
\end{tabular}
\setlength{\abovecaptionskip}{1.5pt}
\caption{\textbf{Qualitative results with SDXL-Turbo.} Columns denote optimization method; rows correspond to the given reward, with the prompt shown above each row. Our constrained optimization preserves realism and prompt fidelity while attaining higher target scores and strong held-out quality.}
\vspace{-\baselineskip}
\label{fig:qualitative_sdxl}
\end{figure}

\end{document}